%% file: AISTATS2026PaperPack/main.tex
\documentclass[twoside]{article}

\usepackage[accepted]{AISTATS2026PaperPack/aistats2026} 
\usepackage{thm-restate}
\usepackage{lipsum} 
\usepackage{amsmath}
\usepackage{colortbl}
\usepackage{wrapfig}
\usepackage{subcaption}
\usepackage{siunitx} 
\usepackage{subcaption}   
\usepackage{caption}
\usepackage{subcaption}
\usepackage{pgffor}
\usepackage{pgfplots}
\usepackage{wrapfig}
\pgfplotsset{compat=1.18}
\usepgfplotslibrary{groupplots} 
\usepackage{amssymb}

\input{AISTATS2026PaperPack/math_commands}
\usepackage{hyperref}
\usepackage{url}
\usepackage{wrapfig} 
\usepackage{graphicx}
\usepackage{subcaption} 

\usepackage{times} 
\usepackage{helvet} 
\usepackage{courier} 
\usepackage{graphicx} 
\usepackage[table]{xcolor}
\usepackage{natbib} 
\usepackage{caption} 
\usepackage{comment}
\usepackage{todonotes}
\usepackage{xcolor} 
\usepackage[most]{tcolorbox}
\usepackage{thmtools}
\usepackage{thm-restate}
\usepackage{newfloat}
\usepackage{listings}
\DeclareCaptionStyle{ruled}{labelfont=normalfont,labelsep=colon,strut=off} 
\usepackage{bibentry}
\usepackage{graphicx}

\usepackage{booktabs}
\usepackage{mhchem}
\usepackage[most]{tcolorbox}
\usepackage{amsmath, amssymb}

\usepackage[ruled,vlined,linesnumbered]{algorithm2e}

\usepackage{tcolorbox}
\tcbuselibrary{breakable}  
\usepackage{wrapfig}

\usepackage{xcolor}
\definecolor{Ebb}{rgb}{0.917,0.901,0.901}
\usepackage{amsmath}
\usepackage{amsthm}
\usepackage{amsfonts}
\usepackage{amssymb}
\usepackage{amsmath}
\usepackage{graphicx}
\usepackage{subcaption}
\usepackage{mathtools}
\usepackage{multirow}
\usepackage{adjustbox}
\usepackage{amsthm}
\usepackage{comment}
\usepackage{tcolorbox, xcolor}
\usepackage[toc, page]{appendix}
\usepackage{graphicx}
\usepackage{mathtools}
\input{macros}

\usepackage{amsmath}

\newcommand{\alertbyPC}[1]{{\color{magenta} PC:#1}}

\begin{document}

\runningauthor{Chanda et al.}

\twocolumn[
\renewcommand{\thefootnote}{\fnsymbol{footnote}}%
\setcounter{footnote}{0}%
\aistatstitle{\papertitle}
\aistatsauthor{Prateek Chanda\textsuperscript{1}\footnotemark[1]\footnotemark[2] \And Prayas Agrawal\textsuperscript{3}\footnotemark[1] \And Karthik S.\ Gurumoorthy\textsuperscript{4} \AND Ganesh Ramakrishnan\textsuperscript{1,2} \And Bamdev Mishra\textsuperscript{5} \And Pratik Jawanpuria\textsuperscript{2}}
{\centering
\def\And{\quad\quad\allowbreak}
\parbox{0.95\textwidth}{\centering
{\normalsize
\textsuperscript{1} Department of Computer Science and Engineering, Indian Institute of Technology Bombay\\
\textsuperscript{2} Centre for Machine Intelligence and Data Science, Indian Institute of Technology Bombay\par
}\And
\textsuperscript{3} Microsoft Research India \And
\textsuperscript{4} Walmart Global Tech, India\And
\textsuperscript{5} Microsoft India}\par}
\vskip 0.3in plus 2fil minus 0.1in
]
\renewcommand{\thefootnote}{\fnsymbol{footnote}}%
\setcounter{footnote}{0}%

\footnotetext[1]{Equal contribution.}
\footnotetext[2]{Correspondence to: \texttt{prateekch@cse.iitb.ac.in}.}

\renewcommand{\thefootnote}{\arabic{footnote}}%
\setcounter{footnote}{0}%
\begin{abstract}
  
Selecting prototypical examples from a source distribution to represent a target data distribution is a fundamental problem in machine learning. Existing subset selection methods often rely on implicit importance scores, which can be skewed towards majority classes and lead to low-quality prototypes for minority classes. We present $\methodprop$, a novel subset selection framework that minimizes the
optimal transport (OT) distance between a uniformly weighted prototypical distribution
and the target distribution.
While intuitive, this formulation leads to a cardinality-constrained maximization of a \emph{super-additive} objective, 
which is generally intractable to approximate efficiently. To address this, we propose a principled reformulation of the OT marginal constraints, yielding a partial optimal transport-based submodular objective. We prove that this reformulation enables a greedy algorithm with a $(1-1/e)$ approximation guarantee relative to the original super-additive maximization problem. Empirically, we showcase that enforcing uniform prototype weights in UniPROT consistently improves minority-class representation in imbalanced classification benchmarks without compromising majority-class accuracy. In both finetuning and pretraining regimes for large language models under domain imbalance, UniPROT enforces uniform source contributions, yielding robust performance gains. Our results establish UniPROT as a scalable, theoretically grounded solution for uniform-weighted prototype selection. Our
code is publicly available at GitHub\footnote{Code: \url{https://github.com/efficiency-learning/UniPROT}}

\end{abstract}

    \addtocontents{toc}{\protect
    \setcounter{tocdepth}{-1}}

\section{Introduction}
Prototype selection is a fundamental problem in representation learning. The goal is to identify a subset of representative elements from a source set or distribution that faithfully summarizes a given target set or distribution. In cases where the source and target sets coincide, the problem reduces to the well-known {medoids selection} task. Prototypical examples have proven useful in a variety of applications, including domain understanding via summarization \citep{schlegel17a,chen2019looks}, identifying anomalies \citep{Kawano22a}, positive-unlabeled learning \citep{Dhurandhar20a,riaz2023partial}, and efficient training of deep models \citep{MirzasoleimanCraig20a, killamsetty2021grad, Kothawade21,zheng2023coveragecentric,liu2024tsds,tan2025data}. 

Recent prototype selection algorithms usually select a prototypical set of size $k$ whose underlying distribution is \textit{closest} to the target distribution, as measured by a chosen divergence or distance metric between probability distributions. Prior works  \citep{kim16a,gurumoorthy19a,gurumoorthy2021spot} have explored metrics such as maximum mean discrepancy (MMD) and optimal transport (OT) distance (Wasserstein distance) to quantify their closeness. Interestingly, classical problems like submodular facility location \citep{lin2011class,krause2014submodular} or $k$-medoids problems may be viewed as special cases of OT based prototype selection. Submodular optimization has been widely adopted in this context due to its favorable approximation guarantees and scalability.
    
    


Prototype selection algorithms often yield a \textit{weighted} subset of representative instances, where the weights reflect the relative importance of each exemplar and are typically inferred implicitly during the selection process. However, for interpretability, \textit{uniform weighting} is generally preferred, as disproportionate emphasis can bias human perception \citep{solso2017cognitive}. This issue is particularly pronounced in long tailed class distributions, where minority classes may receive lower-weighted prototypes, leading to unfair or under-representative selections. 
Therefore, it is beneficial to design methods that promote equal importance among
selected prototypes. This ensures balanced and interpretable representation, particularly
in long-tailed settings.



In this work, we focus on the problem of selecting a uniformly weighted prototypical set which is closest to the target set under the optimal transport metric. We begin by showing that popular formulations such as submodular facility location and $k$-medoids inherently produce weighted prototype sets when viewed through the lens of OT. Motivated by this observation, we propose a novel subset selection problem aimed at identifying an equally weighted prototypical set. Although intuitively appealing, the proposed formulation corresponds to a monotone, non-negative, super-additive maximization problem, which is not directly amenable to greedy optimization. To address this challenge, we design a tight, monotone, non-negative, \emph{submodular surrogate objective} that approximates the original super-additive problem. This reformulation enables the use of greedy algorithms with strong approximation guarantees. Furthermore, we prove that the same theoretical guarantees hold for the original super-additive maximization problem, thereby validating the effectiveness of our approach.
    
Our main contributions are summarized as follows:
\begin{itemize}
\item We formalize uniform prototype selection as a super-additive maximization problem under the cardinality constraint of selecting $k$ prototypical examples from a source set and introduce a submodular reformulation with provable guarantees.
    
\item We show that the proposed problem corresponds to a  monotone, non-negative, super-additive maximization problem under cardinality constraint. To the best of our knowledge, efficient algorithms with provable guarantees are not known for this  class.

\item We prove that our submodular reformulation is equivalent to the original super-additive problem with cardinality constraint $k$, using which we establish a $(1-1/e)$ approximation guarantee for the latter. We also develop an efficient greedy algorithm whose computational cost is comparable to that of solving the submodular $k$-medoids problem. 

\item We demonstrate the utility of selecting uniformly weighted prototypical set in applications such as long tailed image classification and high quality mini-batch selection for large language model (LLM) training. Our proposed method, \textbf{$\methodprop$}, outperforms existing prototype selection methods across various  benchmark datasets both in terms of solution quality and computational efficiency.
\end{itemize}

\section{Preliminaries}


Let $\S \coloneq \{\bx_{i}\}_{i=1}^{m}$ and $\T \coloneq \{\by_{j}\}_{j=1}^{n}$ be the source and the target datasets, respectively, where $\bx_i\in\mathcal{X}$ and $\by_j\in\mathcal{Y}$. 
The corresponding source and target empirical distributions may be written as $\mu= \sum_{i=1}^{m}\bmu_{i}\delta_{\bx_i}$ and $\nu= \sum_{j=1}^{n}\bnu_{j}\delta_{\by_j}$ where $\bmu_i$ and $\bnu_j$ denote the mass associated with $\bx_i$ and $\by_j$, respectively, and $\delta_{\bz}$ denote the Dirac measure centered at $\bz$. 
If $\mu$ and $\nu$ are probability distributions, $\bmu\in\Delta_m$ and $\bnu\in\Delta_n$ where $\Delta_{m}\coloneq \{\bz\in \RR^{m}_{+}\mid \bz^\top \bone = 1\}$ and $\bone$ denote the vector of ones of appropriate size. For $\bz\in\RR_{+}^m$, let ${\rm supp}(\bz)=\{i\in [m]\mid \bz_i > 0\}$. For any subset $\P \subseteq \S$, let $\I_{\P}$ be the set of indices corresponding to the points $\bx \in \P$, i.e. $\I_{\P} = \{i: \bx_i \in \P\}$. Let $\bZ\left(\I_{\P},:\right)$ denote the sub-matrix of $\bZ$ containing rows corresponding to the indices in $\I_{\P}$, and when $\I_{\P} = \{i\}$ is a singleton set, we represent the $i$-th row the matrix $\bZ$ as $\bZ(i,:)$. Lastly, let $[m]=\{1,2,\dots, m\}$ for $m\in \mathbb{N}$. 


\textbf{Optimal Transport} (OT) problem \citep{KatoroOT,peyre2019computational} seeks a transport plan $\mgamma$ that minimizes the total cost of moving mass from a source distribution $\mu$ to a target distribution $\nu$: 
\begin{equation}\label{eqn:OT-min}
     {\rm OT}_{\min}(\bmu,\bnu)  = \minop_{\mgamma \in \Gamma(\bmu,\bnu) }\inner{\bC, \mgamma}, 
\end{equation}
where $\bmu\in\Delta_m, \bnu\in\Delta_n,\Gamma(\bmu,\bnu)=\{\mgamma \in \RR_{+}^{m \times n}\mid \mgamma\bone
= \bmu, \mgamma^{\top}\bone = \bnu\}$ is the set of admissible couplings and $\bC\in\RR^{m\times n}$ denote a cost matrix induced by a ground cost function $c:\mathcal{X}\times \mathcal{Y}\rightarrow\RR_{+}: (\bx,\by)\mapsto c(\bx,\by)$ such that $\bC_{ij}=c(\bx_i,\by_j)$. Hence, $\bC_{ij}$ is the cost of transport a unit mass from $\bx_{i}$ to $\by_{j}$. 
When $c$ is a distance (e.g., $\ell_1$ or $\ell_2$ distance), OT cost induces the Wasserstein distance between the probability distributions $\bmu$ and $\bnu$. In doing so, OT lifts the geometry from the underlying sample space to the space of probability measures, enabling a rich geometric framework for comparing distributions. 

Let $\bS\in\RR_{+}^{m\times n}$ be a similarity matrix defined via $\bS_{ij} = \beta - \bC_{ij}$, where the constant $\beta > \max_{ij} \bC_{ij}$ ensures non-negativity of $\bS$. Then, the following maximization problem is equivalent to (\ref{eqn:OT-min}) as they have the same optimal solution(s):
\begin{equation}\label{eqn:OT}
     {\rm OT}(\bmu,\bnu)  = \maxop_{\mgamma \in \Gamma(\bmu,\bnu)}\inner{\bS, \mgamma} = \beta - {\rm OT}_{\min}(\bmu,\bnu). 
\end{equation}


\textbf{Partial Optimal Transport} (POT) generalizes classical OT by allowing only a subset of the source and/or target mass to be matched \citep{benamou2015iterative,chapel20a,nguyen24a}. A commonly studied variant is the semi-relaxed formulation, suited for unbalanced settings where the target distribution may contain excess mass. Using similarity matrix $\bS$, it can be expressed as:
\begin{equation}\label{eqn:pot}
        {\rm POT}(\bmu,\bnu) = \maxop_{\substack{\mgamma \in \Gamma_{\leq}(\bmu,\bnu)} }\inner{\bS, \mgamma} \left(= \beta\bmu^\top\bone - \minop_{\substack{\mgamma \in \Gamma_{\leq}(\bmu,\bnu)} }\inner{\bC, \mgamma}\right),
\end{equation}
where $\Gamma_{\leq}(\bmu, \bnu)=\{\mgamma \in \RR^{m \times n} \mid \mgamma \geq 0, \mgamma\bone = \bmu, \mgamma^{\top}\bone \leq \bnu\}$. 
It should be noted that ${\rm POT}(\bmu,\bnu) = {\rm OT}(\bmu,\bnu)$ when $\bmu^\top \bone = \bnu^\top \bone$, i.e., the source and the target distributions have equal mass. 

\textbf{Submodularity} is a characteristic of set functions that capture diminishing returns: for any sets $A \subseteq B \subseteq V$ and element $u \notin B$, a set function $F$ is submodular if \(F(A \cup \{u\}) - F(A) \geq F(B \cup \{u\}) - F(B).\) 
The term $F(u \mid A) := F(A \cup \{u\}) - F(A)$ denotes the \textit{marginal gain} of adding $u$ to $A$. A function is \textit{monotone} if $F(A) \leq F(B)$ whenever $A \subseteq B$. 
We provide an alternative definition of submodularity in Section \ref{defn:submodularity ratio}. For maximizing a non-negative monotone submodular function under a cardinality constraint, i.e., $\max_{S \subseteq V, |S| \leq k} F(S)$, the greedy algorithm achieves a $(1 - 1/e)$ approximation to the optimal value~\citep{nemhauser1978analysis}.

In the next section~\ref{sec:approach}, we leverage POT to construct a tractable 
submodular surrogate of the uniform prototype selection problem.

\section{Proposed Approach} \label{sec:approach}
\textbf{Problem setup:} Given a source set $\S$ and a target set $\T$, let $\P$ be a candidate prototypical set $\P \subseteq \S$ such that $|\P| \leq k$. 
The empirical distribution corresponding to set $\P$ may be expressed as $\mu_{\P}=\sum_{i=1}^m (\bmu_{\P})_{i} \delta_{\bx_i}$, where $\bmu_{\P}\in\Delta_m$ and $(\bmu_{\P})_{i} = 0\ \forall \bx_i\notin\P$.  
Our aim is to find the best prototypical set $\P^{*}$ such that 
\begin{itemize}
    \item all element of $\P^{*}$ have equal mass (importance) in the corresponding  distribution $\mu_{\P^{*}}=\sum_{i=1}^m (\bmu_{\P^{*}})_{i} \delta_{\bx_i}$, i.e., $(\bmu_{\P^{*}})_{i} = 1/|\P^{*}|\ \forall \bx_i\in\P^{*}$, and 
    \item $\bmu_{\P^{*}}$ is \textit{closest} to the underlying target distribution $\nu$ under the optimal transport (OT) distance metric. 
\end{itemize}

We note that popular submodular subset selection problems such as facility location or exemplar based clustering ($k$-medoids) may be viewed as selecting prototypes using the OT metric. In our setup, their optimization objective may be written as
\begin{equation}\label{eqn:spot}
    \begin{split}
        &\max_{\P\subseteq \S, |\P|\leq k} l(\P), 
        \\ 
        \textup{where \ } l(\P) &\coloneqq \max_{\mgamma \in \RR^{m\times n}_{+}, \mgamma^\top\bone = \bnu, {\rm supp}(\mgamma\bone)\subseteq \P }\inner{\bS, \mgamma}
    \end{split}
\end{equation}
and $\bnu\in\Delta_n$ is the given target set distribution, usually set as $\bnu=\bone/n$. 
As the objective $l(\P)$ may also be written as $l(\P) = \max_{\bmu\in\Delta_{m}, {\rm supp}(\bmu) \subseteq \P} {\rm OT}(\bmu,\bnu)$, solving (\ref{eqn:spot}) implicitly involves learning the underlying distribution of $\P$. 
\begin{figure}
\centering
\includegraphics[width=0.85\linewidth]{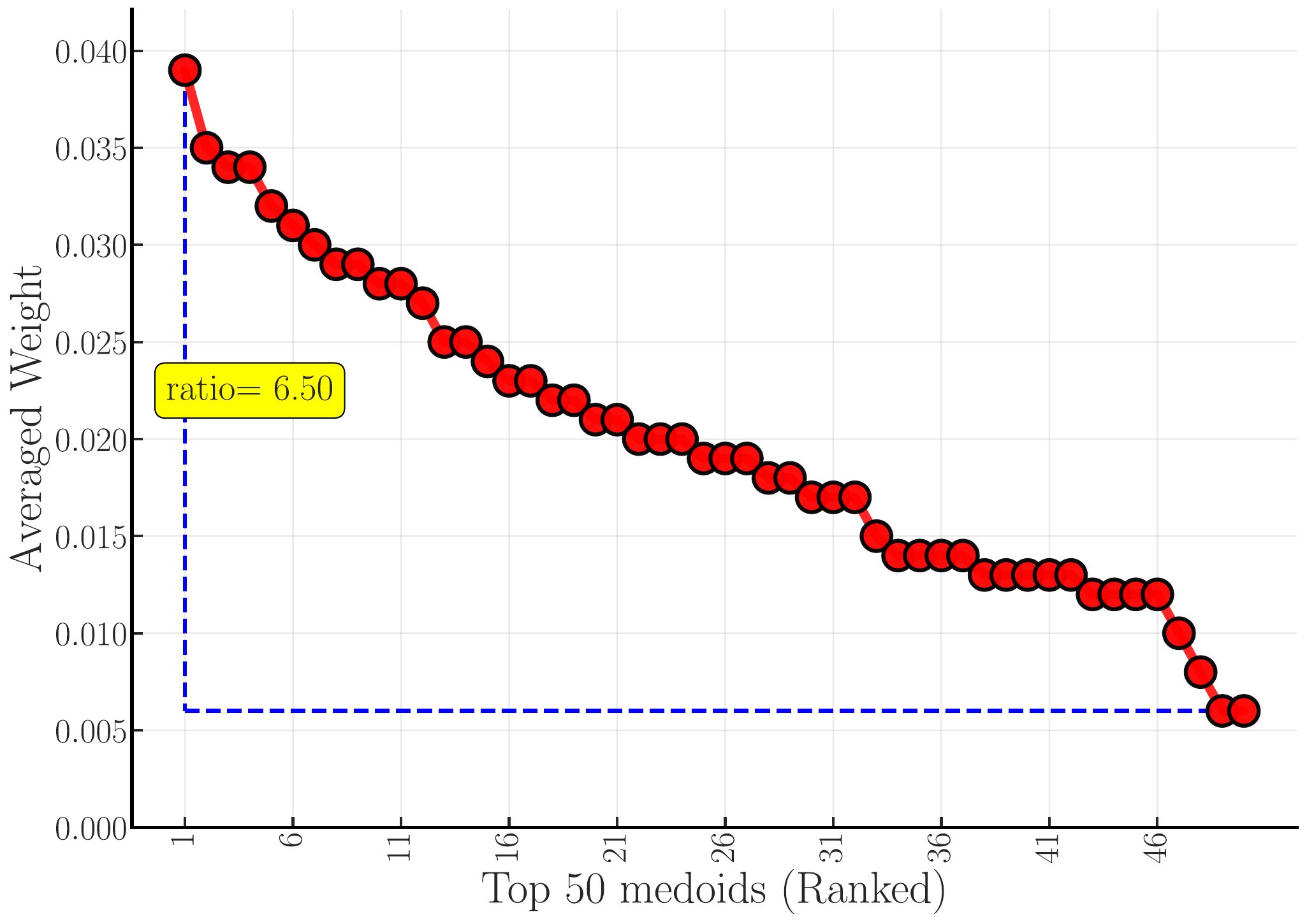}
\caption{\small $k$-medoids (\ref{eqn:spot}) consistently learn skewed weights for prototypes on \textsc{CIFAR10}. The plot shows ranked weights of prototypes, averaged over 5 runs. 
}
\label{Fig:introplug}
\end{figure}

In particular, if $\hat{\P}$ is a solution of (\ref{eqn:spot}) with the corresponding $\mgamma_{\hat{\P}}$ such that $l(\hat{\P}) = \inner{\bS,\mgamma_{\hat{\P}}}$, then $\bmu_{\hat{\P}}= \mgamma_{\hat{\P}}\bone$. If the learned $\bmu_{\hat{\P}}$ is skewed, it implies that some prototypes have higher mass (importance) than the others. Empirically, this is commonly observed as shown in Figure \ref{Fig:introplug}. When one aims to understand the target set $\T$ via the prototypical set $\hat{\P}$ obtained via (\ref{eqn:spot}),  $\hat{\P}$ and $\bmu_{\hat{\P}}$ together provide a representation of $\T$. 
However, weighted prototypes are hard to interpret, especially for human-in-the-loop scenarios. 
Skewed prototypical distributions also imply that prototypes receiving low weights contribute less towards the overall objective (\ref{eqn:spot}) and may be less influential exemplars. The minority classes often suffer in such cases as they typically receive low weights. 
Hence, uniformly weighted prototype selection algorithms are desired for fair, unbiased representation and better understanding of datasets.

\subsection{Uniform Prototype Selection via Optimal Transport}
We propose to alleviate the issue of learning unequally weighted exemplars by enforcing the prototypical distribution to be uniform in the objective. Thus, the proposed uniformly weighted prototype selection problem is as follows:
\begin{equation}\label{eqn:proposed_1}
\begin{aligned}
\max_{\P\subseteq \S,\ |\P|\leq k}\; g(\P)
\qquad\text{s.t.}\qquad
 g(\P)=\maxop_{\mgamma \in \Gamma(\bone_{\P}/|\P|,\bone/n)}\inner{\bS, \mgamma}.
\end{aligned}
\end{equation}
Here, $g(\P)$ denotes the OT objective with uniform marginals $\bmu_{\P}=\bone_{\P}/|\P|$ and $\bnu=\bone/n$. Here, $\bone_\P\in\{0,1\}^m$ represents the set $\P$, i.e., $(\bone_\P)_i = 1$ if $\bx_i \in \P$, else $0$. It should be noted that in the prototypical distribution $\bmu_{\P}$ corresponding to $\P$, all the exemplars $\bx\in\P$ are given equal mass in (\ref{eqn:proposed_1}). This implies that all the selected exemplars are equally important. We also note that the total mass assigned to the set $\S\setminus\P$ is $1 - \bmu_\P^\top \bone = 0$. 
Empirically, the target set distribution is usually considered uniform $\bnu= \bone/n$, but our analysis directly extends to non-uniform target set distributions as well. 
In order to analyze the properties of the objective in~(\ref{eqn:proposed_1}), we consider the following proxy problem:
\begin{equation}\label{eqn:proposed_2}
\begin{aligned}
        \max_{\P\subseteq \S, |\P|\leq k} h(\P)\coloneqq |\P|g(\P), \\ \textup{where}
        \hspace{4pt} h(\P) = {\rm OT}(\bmu_\P=\bone_\P,\bnu= |\P|\bone/n).
        \end{aligned}
\end{equation}
We observe that for a given $\P$, if $\mgamma^{*}_{g}$ is an optimal solution for computing $g(\P)$ in (\ref{eqn:proposed_1}), then $\mgamma_{h}^{*}= |\P|\mgamma^{*}_{g}$ is an optimal solution for
computing $h(\P)$ in (\ref{eqn:proposed_2}) (and vice-versa). Hence, we focus on Problem (\ref{eqn:proposed_2}) in the next lemma.
\begin{restatable}{lemma}{LemmaAdditivity}\label{lemma:properties} 
    The set function $h(\P): 2^{|\S|}\rightarrow \RR_{+}$, 
    defined in (\ref{eqn:proposed_2}), satisfies the following properties:
    \begin{enumerate}
        \item Non-negativity: $h(\P)\geq 0\ \forall \P\subseteq \S$.

        \item Monotonicity:
            $h(\P_{2}) \geq h(\P_{1})\ \forall \P_{1}\subseteq \P_{2}\subseteq
            \S$.

        \item Super-additivity over disjoint sets:
            $h(\P_{1}\cup \P_{2}) \geq h(\P_{1}) + h(\P_{2})\ \forall \P_{1}\cap
            \P_{2}= \phi$.
    \end{enumerate}
\end{restatable}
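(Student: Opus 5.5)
The plan is to work directly with the balanced OT problem that defines $h$. For a set $\P$, the source marginal is $\bone_\P$, which has total mass $|\P|$. The target marginal is $|\P|\bone/n$, which also has total mass $|\P|$. So the feasible set $\Gamma(\bone_\P,|\P|\bone/n)$ is a nonempty polytope; for instance it contains the product coupling $\bone_\P\bone^\top/n$. Hence $h$ is well defined and its maximum is attained. We adopt the convention $h(\emptyset)=0$.

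\emph{Non-negativity.} Every feasible $\mgamma$ is entrywise non-negative, and $\bS\geq 0$ because $\beta>\max_{ij}\bC_{ij}$. Therefore $\inner{\bS,\mgamma}\geq 0$, and so $h(\P)\geq 0$.

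\emph{Super-additivity.} Let $\P_1\cap\P_2=\emptyset$, and let $\mgamma_1,\mgamma_2$ be optimal couplings for $h(\P_1)$ and $h(\P_2)$. Set $\mgamma=\mgamma_1+\mgamma_2$, which is non-negative.
\begin{itemize}
\item Row marginals: $\mgamma\bone=\bone_{\P_1}+\bone_{\P_2}=\bone_{\P_1\cup\P_2}$. This step uses disjointness, since otherwise shared rows would receive mass $2$.
\item Column marginals: $\mgamma^\top\bone=(|\P_1|+|\P_2|)\bone/n=|\P_1\cup\P_2|\bone/n$, again by disjointness.
\end{itemize}
So $\mgamma\in\Gamma(\bone_{\P_1\cup\P_2},|\P_1\cup\P_2|\bone/n)$. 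By linearity of the objective,
\[
h(\P_1\cup\P_2)\geq\inner{\bS,\mgamma_1}+\inner{\bS,\mgamma_2}=h(\P_1)+h(\P_2).
\]

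\emph{Monotonicity.} Given $\P_1\subseteq\P_2$, write $\P_2=\P_1\cup(\P_2\setminus\P_1)$, a disjoint union. Super-additivity then gives $h(\P_2)\geq h(\P_1)+h(\P_2\setminus\P_1)$, and non-negativity gives $h(\P_2\setminus\P_1)\geq 0$, so $h(\P_2)\geq h(\P_1)$. If $\P_2\setminus\P_1$ is empty the claim is trivial, consistent with the convention $h(\emptyset)=0$.

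The argument has no genuinely hard step. The only point needing care is the marginal bookkeeping in the super-additivity step: disjointness is exactly what makes the sum of the two couplings feasible, on both the source side and the scaled-target side. This is also why we work with the unnormalized $h$ rather than $g$, since for $g$ the marginals $\bone_\P/|\P|$ would not add up.
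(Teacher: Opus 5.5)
Your proof is correct and follows the paper's approach: non-negativity comes from $\bS\geq 0$ and $\mgamma\geq 0$, and super-additivity comes from stacking (equivalently, summing) the two optimal couplings, with disjointness guaranteeing feasibility. For monotonicity, the paper appends the row $\bone/n$ for a single new element $\bx_i$; that row is exactly the optimal coupling for $h(\{\bx_i\})$, so your derivation from super-additivity plus non-negativity is the same argument applied to all of $\P_2\setminus\P_1$ at once, which spares the implicit induction over one-element additions.
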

The proof of the above result is provided in Appendix~\ref{ref:TheoreticalResults}. 
\begin{rem}
Super-additivity is conceptually aligned with increasing returns (supermodularity), just as sub-additivity relates to diminishing returns (submodularity). 
Hence, maximizing a monotone, non-negative, super-additive function via the greedy algorithm and obtaining approximation guarantees is challenging. To address this, we propose a tight, non-negative, monotone submodular reformulation of Problem~(\ref{eqn:proposed_2}) in the next section.
\end{rem}

\subsection{Submodular Reformulation of (\ref{eqn:proposed_2})}
We propose the following partial optimal transport (POT) based reformulation of Problem~(\ref{eqn:proposed_2}):
\begin{equation}\label{eqn:proposed_3}
\begin{gathered}
\max_{\P\subseteq \S,\, |\P|\leq k}\; f(\P),\\
\begin{aligned}
\textup{where}\quad f(\P) &\coloneqq {\rm POT}(\bmu_{\P}=\bone_\P,\bnu= k\bone/n) \\
&= \maxop_{\mgamma \in \Gamma_{\leq}(\bone_\P,k\bone/n)}\inner{\bS,\mgamma}.
\end{aligned}
\end{gathered}
\end{equation}
We note that computing $f(\P)$ in (\ref{eqn:proposed_3}) is a semi-relaxed optimal transport problem in which the source marginal is tight $(\bmu_{\P}=\bone_\P)$ but the target side marginal constraint is relaxed $(\bnu\leq k\bone /n)$. In contrast, computing $h(\P)$ in (\ref{eqn:proposed_2}) is an OT problem. By relaxing the target side constraint in (\ref{eqn:proposed_3}), it is easy to see that $f(\P) \geq h(\P), \forall \P$ with $|\P| \leq k$. We also note that for the sets $\P$ with cardinality $k$, $f(\P)={\rm OT}(\bmu_{\P}=\bone_\P,\bnu= |\P|\bone/n) = h(\P)$. Our proposed reformulation allows Problem~(\ref{eqn:proposed_3}) to have certain desirable properties as summarized in our next result. 
\begin{restatable}{lemma}{lemmaSubmod}\label{lemma:submod_3}
The optimization problem defined in (\ref{eqn:proposed_3}) is a non-negative,  monotone, submodular maximization problem subject to cardinality constraint $k$.
\end{restatable}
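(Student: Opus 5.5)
Our plan is to recognize $f(\P)$ as the value of a bipartite $b$-matching (transportation) problem and prove submodularity via the standard theory of weighted matroid rank / gross-substitutes valuations. Non-negativity is immediate: $\bS\geq 0$ entrywise, and the feasible set $\Gamma_{\leq}(\bone_\P,k\bone/n)$ is nonempty whenever $|\P|\leq k$, since $\mgamma_{ij}=1/n$ for $i\in\I_\P$ works because its column sums $|\P|/n\leq k/n$. Monotonicity follows by extension. If $\P_1\subseteq\P_2$ and $\mgamma$ is optimal for $\P_1$, we add, for each new row $i\in\I_{\P_2}\setminus\I_{\P_1}$, mass $1$ spread over the residual column capacities. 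The residual capacity totals $k-|\P_1|\geq |\P_2|-|\P_1|$, so this is possible. The new entries contribute $\geq 0$ because $\bS\geq0$, hence $f(\P_2)\geq f(\P_1)$. The cardinality constraint is explicit in (\ref{eqn:proposed_3}), and throughout we assume sets of size at most $k$; for larger sets the feasible set is empty.

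For submodularity, we would first write $f$ as an optimal assignment. Scale by $n$: each target $\by_j$ has capacity $k$ units of size $1/n$, and each selected source needs $n$ such units. Replicate each source $i$ into $n$ copies and each target $j$ into $k$ copies. Then $nf(\P)$ becomes, by integrality of transportation polytopes with integer margins, the maximum weight of a matching in a bipartite graph that saturates all $n|\P|$ source copies, with each copy of $i$ attached to each copy of $j$ at weight $\bS_{ij}$. Since $\bS\geq0$ and the target side is relaxed, the saturation requirement can be dropped when computing marginals, as in the monotonicity argument. The function $\P\mapsto nf(\P)$ is then an \emph{assignment valuation} (Shapley / Lehmann--Lehmann--Nisan) over groups of source copies. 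Such valuations are known to be gross-substitutes and hence submodular on the ground set of copies.

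The main obstacle is that our ground set is not the copies but the groups of $n$ copies tied together, and grouping need not preserve submodularity in general. To handle this, I would instead argue directly via LP duality and a max-flow formulation. Writing $f(\P)=\min_{\bu,\bv\geq0}\{\sum_{i\in\P}\bu_i+\tfrac{k}{n}\sum_j\bv_j : \bu_i+\bv_j\geq \bS_{ij}\}$ gives a concave-type structure, but submodularity is cleaner through the flow view. Here $f$ is a max-weight flow from the source nodes in $\P$, each with supply $1$, to targets with capacity $k/n$. We would then prove the exchange inequality $f(A\cup\{u\})+f(A\cup\{w\})\geq f(A\cup\{u,w\})+f(A)$ by an augmenting-path decomposition. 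Take optimal flows for $A\cup\{u,w\}$ and for $A$, and consider their difference. It decomposes into alternating paths and cycles in the residual network, starting at $u$ or $w$. Route each path starting at $u$ into the solution for $A\cup\{u\}$ and each path starting at $w$ into the solution for $A\cup\{w\}$; cycles can be assigned to either side. This yields feasible flows whose total weight equals the left-hand side of the target inequality's right-hand side, and the inequality follows.

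This is the step I expect to require care. Paths must respect the column capacities $k/n$ simultaneously in both constructed flows. The standard argument for transversal and gammoid-type weighted rank functions (the weighted flow-rank function of a network is submodular in the set of sources) handles this, and I would invoke or adapt that result. Finally, monotone, non-negative, submodular, with constraint $|\P|\leq k$, gives the claim.
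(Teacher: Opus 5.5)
Your non-negativity and monotonicity arguments match the paper's: extend an optimal plan for $\P_1$ by spreading unit mass over the residual capacities $k\bone/n-\mgamma_{\P_1}^\top\bone$. For submodularity the paper does no combinatorics. It notes $f=k\psi$ for the function $\psi$ of Kawano et al.\ with $l=k$ and cites their Lemma~2. Your route is different, and your \emph{first} attempt already finishes it, because the obstacle you raise there is not real. The copy-sets $G_i$ are pairwise disjoint, so $U_\P:=\bigcup_{i\in\P}G_i$ satisfies $U_{A\cup B}=U_A\cup U_B$ and $U_{A\cap B}=U_A\cap U_B$. Submodularity of the assignment valuation $M$ on copies then gives $M(U_{A\cup B})+M(U_{A\cap B})\le M(U_A)+M(U_B)$ directly. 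Grouping can destroy gross substitutability, but not submodularity. Add your integrality/replication step and your remark that saturation is automatic for $|\P|\le k$ (since $\bS\ge 0$ and total target capacity is $k$). The result is a complete proof, resting only on Shapley's theorem that the optimal assignment value is submodular in the agents on one side.

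The flow argument you switch to is the part that is actually unfinished. You defer exactly the feasibility step that carries the content. The decomposition you describe is also incomplete: if $\mgamma$ and $\mgamma'$ are optimal plans for $A$ and $A\cup\{u,w\}$, then $\mgamma'-\mgamma$ also contains paths from a target whose load drops to one whose load rises, rerouted through rows of $A$. The argument does go through if you take a \emph{conformal} decomposition, where every component agrees in sign with $\mgamma'-\mgamma$ on every edge and at every target. Then $\mgamma$ plus any subfamily of components has the following properties:
\begin{itemize}
\item it lies entrywise between $\mgamma$ and $\mgamma'$, hence is nonnegative;
\item each target load lies between its loads under $\mgamma$ and $\mgamma'$, hence is at most $k/n$;
\item the row sums on $A$ are preserved;
\item the components leaving $u$ carry exactly one unit.
\end{itemize}
Give the $u$-paths to one side, the $w$-paths to the other, and the rest arbitrarily. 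This yields feasible plans for $A\cup\{u\}$ and $A\cup\{w\}$ of total weight $f(A)+f(A\cup\{u,w\})$, the right-hand side of your exchange inequality.

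Compared with the citation, your approach is self-contained, and dropping saturation buys something extra. It gives $\tilde f(\P)=\max\{\inner{\bS,\mgamma}:\mgamma\ge 0,\ \mgamma\bone\le\bone_\P,\ \mgamma^\top\bone\le k\bone/n\}$, which is monotone and submodular on all of $2^{[m]}$ and equals $f$ whenever $|\P|\le k$. The paper's later greedy analysis implicitly needs such an extension. It evaluates $f(\P_i\cup(\P^*\setminus\P_i))$ on sets with more than $k$ elements, where $\Gamma_{\leq}(\bone_\P,k\bone/n)$ is empty.
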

Lemma~\ref{lemma:submod_3} implies that the classical greedy solution provides the $(1-1/e)$ approximation guarantee for (\ref{eqn:proposed_3}).

The following lemma proves that Problem~(\ref{eqn:proposed_3}) is a tight reformulation of Problem~(\ref{eqn:proposed_2}) and hence we may equivalently solve the relaxed  (\ref{eqn:proposed_3}) instead of (\ref{eqn:proposed_2}), for selecting uniformly weighted prototypes. 

\begin{restatable}{lemma}{equivalencelemma} \label{lemma:equivalence} 
Let $\P^{*}$ of cardinality $k$ be an optimal solution of (\ref{eqn:proposed_2}). Then $\P^{*}$ is also an optimal solution of (\ref{eqn:proposed_3}), and vice-versa.
\end{restatable}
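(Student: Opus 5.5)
The plan is to compare both problems on sets of cardinality exactly $k$. Three facts are available. First, $f(\P)\ge h(\P)$ for every $\P$ with $|\P|\le k$. Second, $f(\P)=h(\P)$ whenever $|\P|=k$. Third, both $f$ and $h$ are monotone, by Lemma~\ref{lemma:properties} for $h$ and Lemma~\ref{lemma:submod_3} for $f$.

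The key step is to show that an optimum of (\ref{eqn:proposed_3}) can be taken to have cardinality $k$, and that every optimum of either problem is dominated by a $k$-set. Given any $\P$ with $|\P|<k$ (assuming $|\S|\ge k$), extend it arbitrarily to $\P'\supseteq\P$ with $|\P'|=k$. Monotonicity then gives
\[
f(\P)\le f(\P')=h(\P').
\]
Hence $\max_{|\P|\le k} f(\P)=\max_{|\P|=k} f(\P)=\max_{|\P|=k} h(\P)$. In the same way, monotonicity of $h$ gives $\max_{|\P|\le k}h(\P)=\max_{|\P|=k}h(\P)$. The two optimal values therefore coincide; call the common value $\mathrm{OPT}$.

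With this in hand, both directions follow. For the forward direction, let $\P^*$ with $|\P^*|=k$ be optimal for (\ref{eqn:proposed_2}). Then $f(\P^*)=h(\P^*)=\mathrm{OPT}$, so $\P^*$ attains the optimal value of (\ref{eqn:proposed_3}) and is optimal there. For the converse, let $\P^*$ with $|\P^*|=k$ be optimal for (\ref{eqn:proposed_3}). Then $h(\P^*)=f(\P^*)=\mathrm{OPT}$, which is the optimal value of (\ref{eqn:proposed_2}), so $\P^*$ is optimal for (\ref{eqn:proposed_2}). The statement concerns cardinality-$k$ optima, so reading the ``vice-versa'' with that restriction is natural. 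Moreover, any optimum $\P$ of (\ref{eqn:proposed_3}) with $|\P|<k$ can be padded to a $k$-set $\P'$ with $f(\P')\ge f(\P)$, and $\P'$ is then optimal for both problems.

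The main thing to verify carefully is the equality $f(\P)=h(\P)$ for $|\P|=k$. In this case the source mass is $\bone_\P^\top\bone=k$, which equals the total target mass $k\bone^\top\bone/n=k$. The semi-relaxed constraint $\mgamma^\top\bone\le k\bone/n$, combined with $\bone^\top\mgamma^\top\bone=k$, then forces equality in every coordinate. So $\Gamma_{\le}=\Gamma$ and POT reduces to OT, as the paper already observes in the preliminaries. Everything else is bookkeeping with monotonicity.
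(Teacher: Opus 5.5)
Your proposal is correct and follows the paper's proof. Both use $f(\P)=h(\P)$ for $|\P|=k$, together with the monotonicity of $h$ and $f$ (Lemmas~\ref{lemma:properties} and~\ref{lemma:submod_3}), to restrict both problems to cardinality-$k$ sets, where they coincide. Your mass-balance check that $\Gamma_{\leq}(\bone_\P,k\bone/n)=\Gamma(\bone_\P,k\bone/n)$ when $|\P|=k$ spells out a step the paper only asserts.
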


 The above analysis ensures that the same approximation guarantee holds for  for the super-additive maximization problem (\ref{eqn:proposed_2}) as stated below. 
\begin{restatable}{lemma}{submodulargain}\label{lemma:submodulargain}
    Let $\hat{\P}$ be the classical greedy solution of (\ref{eqn:proposed_3}) with $|\hat{\P}|=k$. Let $\mathrm{OPT}=h(\P^{*})$, where $\P^{*}$ is an optimal solution of (\ref{eqn:proposed_2}). Then, $h(\hat{\P}
    )\geq (1-1/e)\mathrm{OPT}$.
\end{restatable}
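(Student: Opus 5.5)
The plan is to chain three facts: the greedy guarantee for $f$ from Lemma~\ref{lemma:submod_3}, the identity $f=h$ on sets of cardinality exactly $k$, and the equivalence of optima from Lemma~\ref{lemma:equivalence}.

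First, by Lemma~\ref{lemma:submod_3}, the function $f$ in (\ref{eqn:proposed_3}) is non-negative, monotone and submodular. The classical result of \citet{nemhauser1978analysis} therefore gives
\[
f(\hat{\P}) \geq (1-1/e)\max_{\P\subseteq\S,\,|\P|\leq k} f(\P).
\]

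Second, I will compare the right-hand side with $\mathrm{OPT}$. Since $f(\P)\geq h(\P)$ for every $|\P|\leq k$ (the target constraint is relaxed), we get $\max_{|\P|\leq k} f(\P) \geq \max_{|\P|\leq k} h(\P) = h(\P^*) = \mathrm{OPT}$. This direction alone suffices, so Lemma~\ref{lemma:equivalence} is needed only as a consistency check.

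Third, I will transfer the guarantee from $f$ back to $h$ at the greedy output. Because $|\hat{\P}|=k$, the semi-relaxed constraint $\mgamma^\top\bone\leq k\bone/n$, together with total mass $\bone_{\hat\P}^\top\bone = k = (k\bone/n)^\top\bone$, forces equality in every coordinate. Hence $f(\hat{\P}) = {\rm OT}(\bone_{\hat\P}, k\bone/n) = h(\hat{\P})$, as noted after (\ref{eqn:proposed_3}). Combining the three steps gives
\[
h(\hat{\P}) = f(\hat{\P}) \geq (1-1/e)\max_{|\P|\leq k} f(\P) \geq (1-1/e)\,\mathrm{OPT}.
\]

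The main point to check is the hypothesis $|\hat{\P}|=k$. The greedy algorithm may stop early or add elements with zero gain, but monotonicity of $f$ lets us pad to exactly $k$ elements without losing the guarantee, and the lemma assumes $|\hat\P|=k$ anyway. I will also confirm that $m\geq k$, so the OT problem defining $h(\hat\P)$ is feasible with equal masses; then the identity $f(\hat\P)=h(\hat\P)$ is exactly the equal-mass case where ${\rm POT}={\rm OT}$, noted in the preliminaries.
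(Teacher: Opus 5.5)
Your proposal is correct and matches the paper's proof step for step: you apply the classical greedy bound to $f$ via Lemma~\ref{lemma:submod_3}, use $f(\hat{\P})=h(\hat{\P})$ because $|\hat{\P}|=k$, and use $f\geq h$ on sets of size at most $k$ to pass from the optimum of $f$ to $\mathrm{OPT}$. Your remark that Lemma~\ref{lemma:equivalence} is not really needed agrees with the paper's chain $f(\hat{\P})\geq(1-1/e)f(\P^{*})\geq(1-1/e)\mathrm{OPT}$, which uses only $f(\P^{*})\geq h(\P^{*})$.
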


The proof of the above results for Lemmas \ref{lemma:submod_3},\ref{lemma:equivalence},\ref{lemma:submodulargain} are provided in Appendix~\ref{ref:TheoreticalResults}. 

\subsection{Computationally efficient approximate greedy algorithm for (\ref{eqn:proposed_3})}
The classical greedy algorithm \citep{nemhauser1978analysis} for (\ref{eqn:proposed_3}) begins with the empty set $\P_0=\emptyset$. At iteration $i+1$, it selects an element $\bx^{*}$ with highest marginal gain, i.e., $\bx^{*} = \argmax\nolimits_{\bx\in\S\setminus\P_{i}} f(\bx|\P_{i})$, and updates $\P_{i+1} = \P_{i}\cup\{\bx^{*}\}$. 
For computing the marginal gains of all elements in $\S\setminus\P_{i}$, $(m-i)$ POT problems (\ref{eqn:pot}) need to be solved in the $(i+1)$-th classical greedy iteration. While this number may be reduced using lazy (stochastic) greedy \citep{Minoux78a,mirzasoleiman2015lazier}, the per-iteration cost remains high {for large $k$}. 
Hence, we propose a computationally efficient approximate marginal gain estimator for (\ref{eqn:proposed_3}). 

In this regard, for a given $\P\subset \S$ such that $|\P|<k$, let $\bx_j\in\S\setminus\P$. For notational convenience, let $\P' = \P\cup\{\bx_j\}$ and let $\mgamma_{\P} = \argmax\nolimits_{\mgamma\in\Gamma_{\leq}(\bone_\P,k\bone/n)}\inner{\bS,\mgamma}$.  
We denote by $\hat{\mgamma}_{\P'}$ a feasible POT coupling between the sets $\P'$ and $\T$ such that 
$\hat{\mgamma}_{\P'}\left(\I_\P,:\right) = \mgamma_\P\left(\I_\P,:\right)$ and $\hat{\mgamma}_{\P'}(j,:)=\bv^\top$, where $\bv\in\RR^n_{+}$ is a variable. 
We next construct an estimator of $f(\P')$ as $\hat{f}(\P') = \max_{\hat{\mgamma}_{\P'}\in\Gamma_{\leq}(\bone_{\P'},k\bone/n)}\inner{\bS,\hat{\mgamma}_{\P'}}$, which is essentially a constrained optimization over $\bv$. 
Our approximate marginal gain function for (\ref{eqn:proposed_3}) is as follows: 
\begin{equation}\label{eqn:approx-incremental-gain}
\begin{aligned}
    \hat{f}(\bx_j\mid\P)
    &= \hat{f}(\P') - f(\P) \\
    &= \max_{\substack{\bv\in\RR_{+}^{n}
                        \bv^\top\bone = 1
                        \bv\leq \tfrac{k}{n}\bone - \mgamma_{\P}^\top\bone}}
    \inner{\bS(j,:),\bv^\top}
\end{aligned}
\end{equation}
For a given $\mgamma_{\P}$, Problem~(\ref{eqn:approx-incremental-gain}) has a closed form expression which involves sorting the vector $\bS(j,:)$, i.e., $O(n\log n)$ computation. 
The following result quantifies the approximation guarantee corresponding to the greedy solution obtained using the proposed approximate marginal gain (\ref{eqn:approx-incremental-gain}). 

\begin{restatable}{lemma}{approximateMarginalGain}\label{lemma:approximateMarginal}
Let $\alpha_{j,\min}$ denote $\frac{1}{\lfloor n/k \rfloor}$ times the sum of the $\lfloor n/k \rfloor$ smallest entries of the vector $\bS(j,:)$, and let $\alpha_{j,\max}$ denote $\frac{1}{\lfloor n/k \rfloor}$ times the sum of the $\lfloor n/k \rfloor$ largest entries of $\bS(j,:)$. Define $\alpha = \min_{j\in [m]} \frac{\alpha_{j,\min}}{\alpha_{j,\max}}$. Let $\hat{\P}$ be the solution returned by the greedy algorithm for (\ref{eqn:proposed_3}), where the proposed approximate marginal gain function (\ref{eqn:approx-incremental-gain}) is used in each iteration and $|\hat{\P}|=k$. Then, $f(\hat{\mathcal{P}}) = h(\hat{\mathcal{P}}) \geq (1 - e^{-\alpha})\, \mathrm{OPT}$, where $\mathrm{OPT} = f(\mathcal{P}^*) = h(\mathcal{P}^*)$ and $\mathcal{P}^*$ is an optimal solution to (\ref{eqn:proposed_3}) with $|\mathcal{P}^*|=k$.

\end{restatable}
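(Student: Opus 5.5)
Following the standard analysis of greedy with approximate marginal gains, the plan is to show that at every iteration the approximate gain chosen by the algorithm is within a factor $\alpha$ of the best true marginal gain. The classical Nemhauser-type recursion then gives $1-e^{-\alpha}$. Concretely, let $\P_i$ be the greedy set after $i$ steps, let $\mgamma_{\P_i}$ be the maintained coupling, and let $\hat{f}_i := \langle \bS,\mgamma_{\P_i}\rangle$ be the value the algorithm actually tracks. Since each step only appends a row $\bv$ feasible for the residual capacity $\tfrac{k}{n}\bone-\mgamma_{\P_i}^\top\bone$, the matrix $\mgamma_{\P_i}$ is always feasible for $\Gamma_{\leq}(\bone_{\P_i},k\bone/n)$. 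Hence $f(\P_i)\ge \hat f_i$. At the end $|\hat\P|=k$, so $f(\hat\P)=h(\hat\P)$, as noted after (\ref{eqn:proposed_3}). It therefore suffices to show $\hat f_k\ge(1-e^{-\alpha})\mathrm{OPT}$. By Lemma~\ref{lemma:equivalence}, $\mathrm{OPT}=f(\P^*)=h(\P^*)$.

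The key step is a two-sided bound on the approximate gain (\ref{eqn:approx-incremental-gain}). While $|\P_i|<k$, the total residual capacity is $k-i\ge 1$ and each coordinate is at most $k/n$. I would argue that a unit of mass can always be placed, and that even in the worst case the mass is spread over about $n/k$ entries of capacity $k/n$. So $\hat f(\bx_j\mid\P_i)\ge \alpha_{j,\min}$, because any feasible $\bv$ puts mass at most $k/n$ per coordinate and $\bv$ is a convex combination over at least $\lfloor n/k\rfloor$ entries. Conversely, any single row of any feasible coupling (in particular in $f$) satisfies the same per-coordinate cap $k/n$. Hence the contribution of $\bx_j$ is at most $\alpha_{j,\max}$, up to the floor rounding, which I would handle by using $\lfloor n/k\rfloor$ consistently in both directions. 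Then for the greedy choice $\bx^*$,
\begin{equation*}
\hat f(\bx^*\mid\P_i)\ge \max_j \alpha_{j,\min}\ge \alpha\max_j\alpha_{j,\max}.
\end{equation*}

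Next, relate $\mathrm{OPT}$ to the current state. The optimal coupling for $\P^*$ assigns each of its $k$ rows unit mass. Row $\bx_j\in\P^*$ contributes at most $\alpha_{j,\max}\le\max_j\alpha_{j,\max}$, so $\mathrm{OPT}\le k\max_j\alpha_{j,\max}$. This gives only a crude bound, so instead I would aim for
\begin{equation*}
\mathrm{OPT}-\hat f_i\le k\cdot\max_{\bx\in\P^*}\big(\text{best feasible gain of }\bx\text{ w.r.t. residual capacity}\big),
\end{equation*}
via a submodularity-style exchange argument. Take the optimal coupling for $\P^*$ together with $\mgamma_{\P_i}$. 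The mass of $\P^*$ not yet covered can be routed into the residual capacity, because both sum to at most $k$ per-column-capped mass, and it decomposes into $k$ row contributions. Combining this with the $\alpha$ bound gives
\begin{equation*}
\hat f_{i+1}-\hat f_i\ge \tfrac{\alpha}{k}(\mathrm{OPT}-\hat f_i).
\end{equation*}
Unrolling this yields $\hat f_k\ge(1-(1-\alpha/k)^k)\mathrm{OPT}\ge(1-e^{-\alpha})\mathrm{OPT}$.

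The main obstacle is this last exchange inequality. It must bound $\mathrm{OPT}-\hat f_i$ by $k$ times a per-element gain that the greedy step dominates up to $\alpha$, despite the residual capacities being depleted by a frozen (non-reoptimized) coupling. My first attempt would bypass the exchange entirely. Using only $\hat f(\bx^*\mid\P_i)\ge\alpha_{\min}^*$ and $\mathrm{OPT}\le k\,\alpha_{\max}^*$, one gets linear growth $\hat f_i\ge i\,\alpha\,\mathrm{OPT}/k$, and hence $\hat f_k\ge\alpha\,\mathrm{OPT}\ge(1-e^{-\alpha})\mathrm{OPT}$. If the constants line up, this is even stronger and avoids the exchange step. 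So I would check this cruder route first and fall back to the recursion only if the $\max_j$ versus per-$j$ ratio mismatch breaks it.
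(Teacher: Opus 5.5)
Your two-sided bounds ($\hat f(\bx_j\mid\P_i)\ge\alpha_{j,\min}$, and a row of unit mass with entries at most $k/n$ contributes at most $\alpha_{j,\max}$) are correct and match the paper's. The gap is in what you combine them with.

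\textbf{The main route.} Your exchange inequality is false in the only form the recursion can use, namely with the maximum over $\P^*\setminus\P_i$ (greedy cannot dominate gains of elements it has already chosen). It fails even along the greedy path and with an optimal coupling for $\P_i$. Take $n=k=2$, so column capacities are $1$ and $\lfloor n/k\rfloor=1$, with rows $\bS(a,:)=(11,10)$, $\bS(b,:)=(10.5,\eta)$, $\bS(c,:)=(\eta,2\eta)$ for small $\eta>0$. Greedy picks $a$, and its unique optimal coupling puts all mass on column $1$, leaving residual capacity $(0,1)$. Then $\mathrm{OPT}=f(\{a,b\})=20.5$, so $\mathrm{OPT}-\hat f_1=9.5$, while the only element of $\P^*\setminus\P_1$ has residual gain $\eta$. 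Greedy then takes $c$ and ends at $11+2\eta\approx 0.54\,\mathrm{OPT}$. Here $\alpha=\eta/10.5$. This shows no bound of the gap by residual gains alone can hold, since it would yield the $\alpha$-free factor $1-1/e$.

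The paper instead goes through the \emph{true} marginal gains of $f$:
\begin{itemize}
\item Submodularity and monotonicity (Lemma~\ref{lemma:submod_3}) give $\sum_{\bx_l\in\P^*\setminus\P_i}f(\bx_l\mid\P_i)\ge f(\P^*)-f(\P_i)$.
\item Deleting row $l$ from an optimal coupling of $\P_i\cup\{\bx_l\}$ leaves a feasible coupling for $\P_i$. Hence $f(\bx_l\mid\P_i)\le\langle\bS(l,:),\mgamma_{\P_i\cup\{\bx_l\}}(l,:)\rangle\le\alpha_{l,\max}$.
\item Therefore $\hat f(\bx_l\mid\P_i)\ge\alpha_{l,\min}\ge\alpha\, f(\bx_l\mid\P_i)$ for each element. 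This is exactly where $\alpha$ is paid.
\end{itemize}
Combined with $f(\bx^*\mid\P_i)\ge\hat f(\bx^*\mid\P_i)\ge\frac1k\sum_{\bx_l\in\P^*\setminus\P_i}\hat f(\bx_l\mid\P_i)$, this gives the recursion. It needs $\mgamma_{\P_i}$ to be an optimal coupling of $\P_i$, which Step 3 of the algorithm recomputes each iteration. Under your frozen-coupling reading, $\hat f_i$ can fall strictly below $f(\P_i)$ and the link to submodularity of $f$ breaks.

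\textbf{The fallback route.} This also fails as written, because $\bx^*$ maximizes only over $\S\setminus\P_i$.
\begin{itemize}
\item The bound $\hat f(\bx^*\mid\P_i)\ge\max_j\alpha_{j,\min}$ is false once the maximizing $j$ has been chosen. In the example above the second increment is $2\eta$, while $\max_j\alpha_{j,\min}=10$.
\item A uniform per-step increment of $\alpha\,\mathrm{OPT}/k$ fails too. Constant rows $(100,100)$ and $(1,1)$ with $n=k=2$ give $\alpha=1$, but the second increment is $1<\alpha\,\mathrm{OPT}/k=50.5$.
\end{itemize}
It is repairable by pigeonhole rather than a uniform bound. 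Order $\P^*=\{p_1,\dots,p_k\}$ so that $\alpha_{p_1,\min}\ge\dots\ge\alpha_{p_k,\min}$. At iteration $i+1$ some $p_t$ with $t\le i+1$ is still unselected, so $f(\P_{i+1})-f(\P_i)\ge\hat f(\bx^*\mid\P_i)\ge\hat f(p_t\mid\P_i)\ge\alpha_{p_{i+1},\min}$. Summing, and using $\mathrm{OPT}\le\sum_{p\in\P^*}\alpha_{p,\max}$, gives $f(\hat{\P})\ge\sum_{p\in\P^*}\alpha_{p,\min}\ge\alpha\sum_{p\in\P^*}\alpha_{p,\max}\ge\alpha\,\mathrm{OPT}\ge(1-e^{-\alpha})\mathrm{OPT}$.

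That would be a genuinely different proof from the paper's. It uses neither submodularity nor the recursion, works for frozen and re-optimized couplings alike, and gives the stronger factor $\alpha$. As submitted, however, neither of your two routes is complete.
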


\begin{figure*}[t]
\centering
\begin{tcolorbox}[
  colback=gray!20, colframe=gray!20,
  left=2mm, right=2mm, top=1mm, bottom=1mm,
  boxsep=0pt, width=1\linewidth
]
  \begin{minipage}[t]{0.51\linewidth}
    \begingroup
    \setlength{\parindent}{0pt}\setlength{\parskip}{0pt}
    \noindent\hspace*{-0.6em}\rule{\dimexpr\linewidth+1em\relax}{1.2pt}\par
    \noindent\hspace*{-1em}{%
      \captionsetup{type=algocf,
        width=\dimexpr\linewidth+1em\relax,
        justification=raggedright,singlelinecheck=false,
        aboveskip=0pt,belowskip=0pt,skip=0pt}
      \captionof{algocf}{\protect\methodprop}%
      \label{alg:greedy}%
    }\par
    \noindent\hspace*{-0.6em}\rule{\dimexpr\linewidth+1em\relax}{1.2pt}\par
    \endgroup

    \textbf{Input:} Similarity matrix $\bS$ between $\S$ and $\T$, number of prototypes required $k$, entropic regularization parameter $\lambda$\\
    \textbf{Output:} Uniformly weighted prototypical set $\mathcal{P}_k\subseteq\S$ of $\T$\\[0.25em]

    \noindent\textbf{1.} $\mathcal{P}_0 \gets \emptyset$\\
    \noindent\textbf{2.} \textbf{for} $i = 1$ \textbf{to} $k$ \textbf{do}\\
    \textbf{3.}\hspace*{1em}$\mgamma^{\ast}_{\mathcal{P}_i} \gets \argmax\nolimits_{\mgamma\in\Gamma_{\leq}(\bone_{\mathcal{P}_i}, k\bone_n/n)}\; \inner{\bS,\mgamma} - \lambda \inner{\mgamma,\ln{\mgamma}}$\\
    \textbf{4.}\hspace*{1em}$\bx^{*} \gets \argmax_{\bx\in\S\setminus\mathcal{P}_i} \hat{f}(\bx\mid\mathcal{P}_i)$ \hfill (Eq.~(\ref{eqn:approx-incremental-gain}))\\
    \textbf{5.}\hspace*{1em}$\mathcal{P}_{i+1}\gets \mathcal{P}_i\cup \{\bx^*\}$\\
    \noindent\textbf{6.} \textbf{end for}\\
    \noindent\textbf{7.} \textbf{return} $\mathcal{P}_k$
  \end{minipage}%
  \hfill
  \begin{minipage}[t]{0.45\linewidth}
    \vspace{1.3em} 
    \begin{itemize}
        \item \textbf{Steps 2–5} are iteratively executed until the cardinality constraint $k$ is satisfied.
        \item  \textbf{Step 3} At each iteration, it updates the transport plan $\mgamma^{\ast}_{\P_i}$ 
    and selects the element $\bx^{*}$ that maximizes the approximate marginal gain 
    defined in Eq.~(\ref{eqn:approx-incremental-gain}).
    \item \textbf{Step 4} selects the element $\bx^{*}$ that maximizes the approximate marginal gain $\hat{f}(\cdot)$ across the candidate search space $\S\setminus\P_i$. (\textit{Note}: For larger search space i.e. where $|\S|$ is large enough, we consider a \emph{Stochastic-Greedy} version instead of a Naive Greedy approach where the candidate search space is considered as $\R \subseteq \S \setminus \P_i$ with $nk^{-1}\log(1/\epsilon)$ elements which are selected uniformly at random.
    \end{itemize}
\end{minipage}
\end{tcolorbox}
\end{figure*}

The proof of the above result is provided in Appendix~\ref{ref:TheoreticalResults}. The key idea in the proof methodology is to lower bound the proposed approximate marginal gain (\ref{eqn:approx-incremental-gain}) as a fraction of the true marginal gain $f(\bx_j\mid\P)$. 

We observe that the proposed approximate marginal gain-based greedy algorithm yields theoretical guarantees for (\ref{eqn:proposed_2}) that are equivalent to those established for the maximization of an \(\alpha\)-weakly submodular function \citep{Das19a,elenberg2018restricted}.

\section{Algorithm Details}\label{ref:AlgorithmDetails}

Here, we present our algorithm for $\methodprop$ (Alg.~\ref{alg:greedy}). At each iteration, we first compute a partial optimal transport plan for the current set and then use it to evaluate the approximate marginal gain $\hat{f}(\cdot)$ over the remaining candidates, selecting the best next prototype. To solve the partial optimal transport subproblem in Step~3, we use Bregman iterations \citep{benamou2015iterative}.

\textbf{Computation Cost.} 
Overall, finding the (approximate) next best element $\bx^{*}$ requires solving a \textit{single} POT problem of dimension $(i+1)\times n$ along with $O((m-i)n\log n)$ additional computations. The POT problem can be efficiently solved in $O(i\cdot n)$ using the Bregman-Dykstra iterations \citep{benamou2015iterative} or the Sinkhorn algorithm \citep{cuturi2013lightspeed,chapel20a} by adding a small entropic regularization in (\ref{eqn:pot}). 
Hence, the computational cost of the proposed $\methodprop$ algorithm for selecting $k$ uniformly weighted prototypes is $O(kmn\log n)$. This cost can be reduced to $O(kmn)$ by utilizing an additional $m \times n$ memory to store the sorted rows of $\bS$, which is a one-time preprocessing step of $O(mn\log n)$ computations. 
Consequently, our algorithm selects equally important prototypes with an overall computational cost that closely matches that of the classical greedy algorithm for solving~(\ref{eqn:spot}). We term our approach $\methodprop$.

\begin{figure}[h!] 
  \centering
    \includegraphics[scale=0.12]{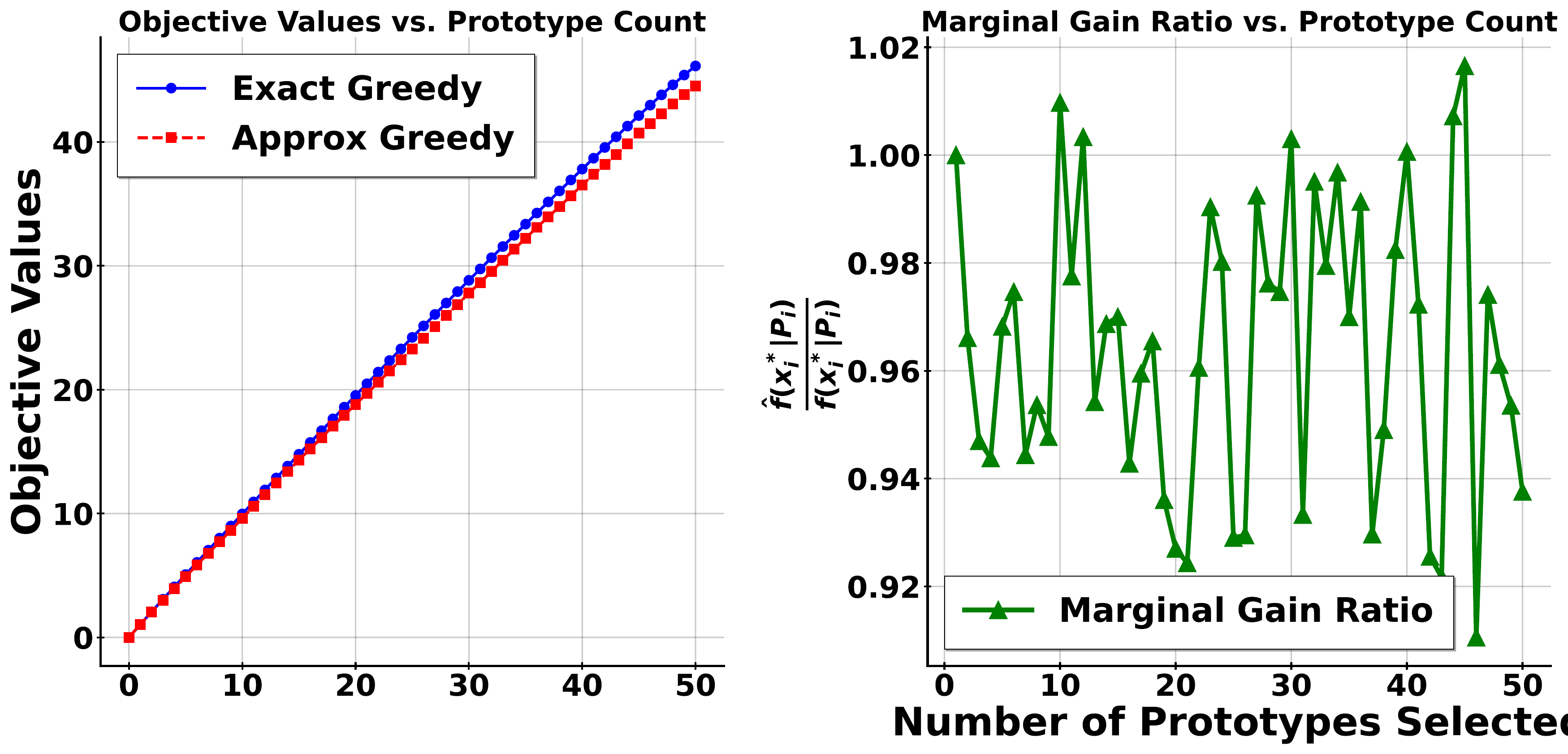}
  \caption{Exact marginal gain versus the proposed approximate marginal gain (\ref{eqn:approx-incremental-gain}) on MNIST.}
  \label{fig:approxgainbounds}
\end{figure}

\textbf{Comparing Approximate vs Exact Marginal gain:} We evaluate the effectiveness of the proposed computationally efficient approximate marginal gain (\ref{eqn:approx-incremental-gain}). For this, we compare the objective value $f(\P)$ under the two next element selection settings: (a) exact marginal gain, $\bx^{*} = \argmax\nolimits_{\bx\in\S\setminus\P} f(\bx|\P)$, and (b)  approximate marginal gain (\ref{eqn:approx-incremental-gain}), $\bx^{*} = \argmax\nolimits_{\bx\in\S\setminus\P} \hat{f}(\bx|\P)$. 

In Figure~\ref{fig:approxgainbounds}, we plot the objective values and the ratio of approximate marginal gain and exact marginal gain across greedy iterations. We observe that, across iterations, the two objectives are very close and the ratio  $\hat{f}(\bx_i^{*}|\P_i) / f(\bx^{*}_i|\P_i)$ is close to its maximum value $1$. This implies that the proposed approximate marginal gain (\ref{eqn:approx-incremental-gain}) of (\ref{eqn:proposed_3}) is a good computationally efficient alternative to exact marginal gain. We provide additional results on how different choices of entropic regularization as part of our implementation affects this approximation gap in Appendix \ref{supp:ablation_entropy}.

\section{Experimental Evaluation}

We now empirically validate the utility of $\methodprop$ based uniform weighted prototype selection in various domains. 

\subsection{Long Tailed Image Classification}
We assess the effectiveness of the representative samples selected by the weighted prototype selection method (\ref{eqn:spot}) and our proposed uniformly weighted variant, $\methodprop$, by evaluating the performance of the corresponding nearest prototype classifiers \citep{bien2011prototype,kim16a,gurumoorthy2021spot} in imbalanced multiclass classification setting. 

\textbf{Experimental Setup.} Let \( \S \) and \( \T \) denote source and target datasets, respectively, such that $\S\cap\T=\emptyset$. Source set $\S$ has same number of samples from all classes while the target set $\T$ exhibit a skewed class distribution. A skewed target set distribution simulate real-world scenarios involving non-trivial marginal shifts. 
The label information is not available during the prototype selection phase, {\em i.e.}, prototype selection is completely unsupervised. Let \( \P \subseteq \S \) be a candidate prototypical set intended to model the target dataset \( T \). 
After the set $\P$ is obtained, class labels of prototypical examples are now made available. 
We next parameterize a 1-nearest neighbor (1-NN) classifier with the prototypes in $\P$ (see Appendix~\ref{app:image-exp_details}) and using it to classify the data points in $\T$. Overall, the performance of the 1-NN classifier parameterized with $\P$ is an indicator of how representative $\P$ is of the target $\T$ \citep{bien2011prototype,gurumoorthy2021spot}. 

\textbf{Datasets.} We consider the $\texttt{MNIST}$ dataset and long-tailed versions of $\texttt{CIFAR-LT}$ \citep{krizhevsky2009learning}  datasets. The latter was obtained using the long-tail experimental setup of \citep{menon2021statistical}. For $\texttt{MNIST}$, we obtain a skewed target set distribution by ensuring that two (randomly) chosen class constitute $k\%$ (each) of $|\T|$ and the remaining $(100-2k)\%$ is spread uniformly over the other classes. Additional datasets are provided in Appendix \ref{supp:additional}.



\textbf{Results.} In Figure~\ref{fig:longtailclassifiation}, we observe that the proposed $\methodprop$ improves the minority class performance over $k$-medoids (\ref{eqn:spot}) which selects weighted prototypes \citep{gurumoorthy2021spot}. It should be noted that the learned weights of the latter were not employed during the inference stage as it deteriorates the performance. 

\begin{figure*}[ht]
    \centering
    \begin{subfigure}{0.48\textwidth}
        \centering
        \includegraphics[width=0.48\linewidth]{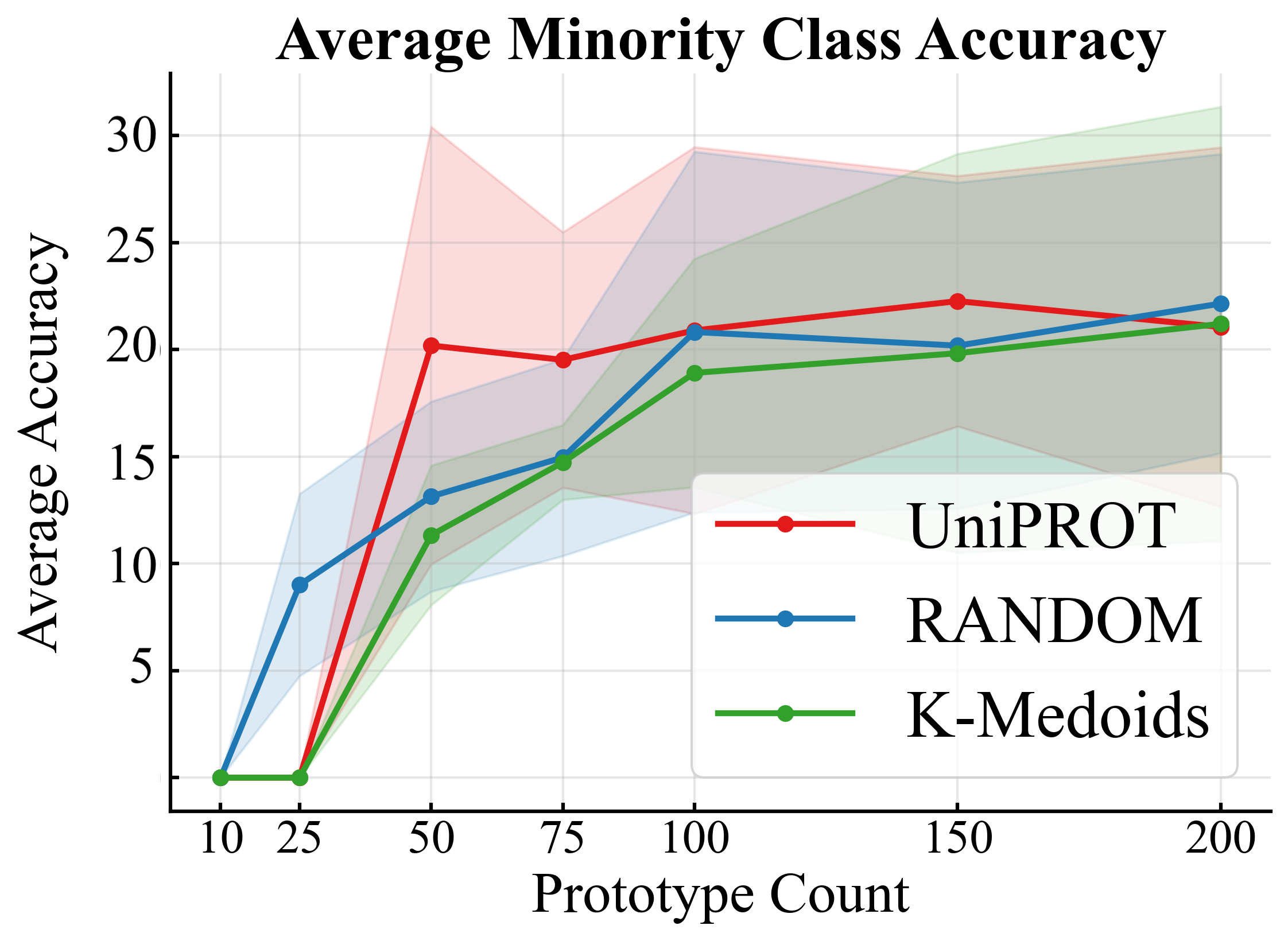}%
        \hfill
        \includegraphics[width=0.48\linewidth]{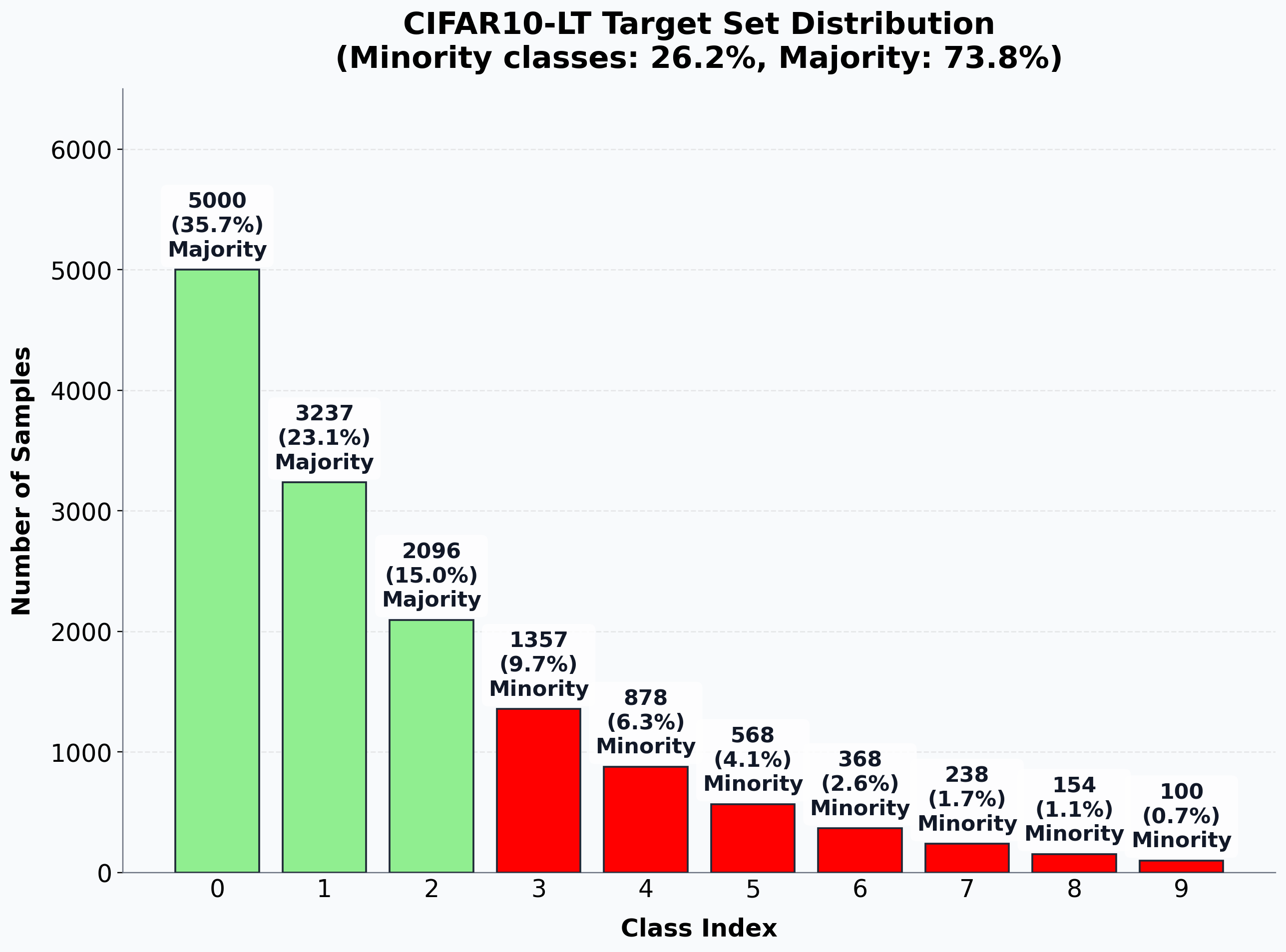} \\[2mm]
        
        \includegraphics[width=0.9\linewidth]{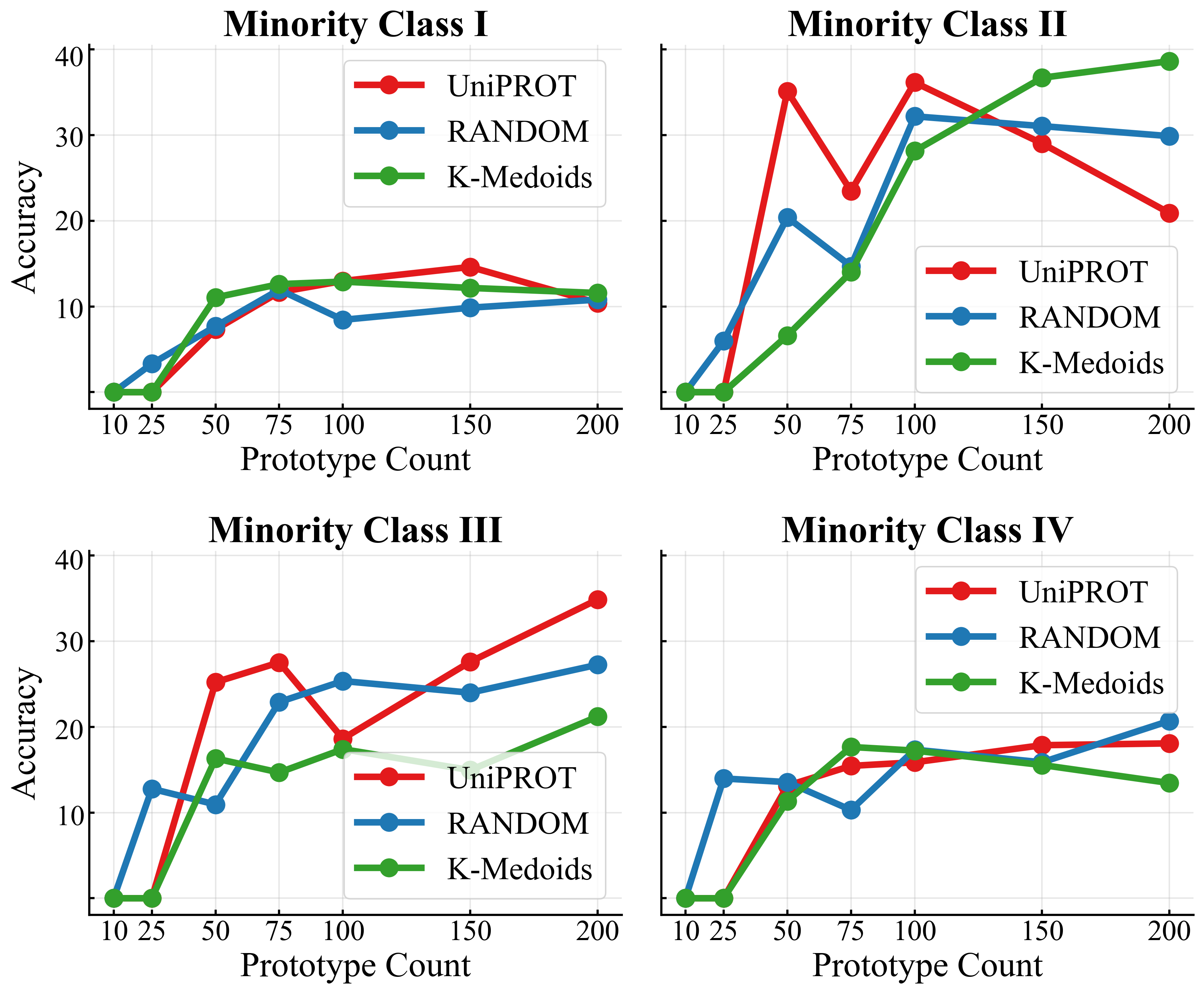}

        \caption{CIFAR10-LT}
    \end{subfigure}%
    \hfill
    \begin{subfigure}{0.48\textwidth}
        \centering
\includegraphics[height=0.4\linewidth,width=1\linewidth,keepaspectratio=false]{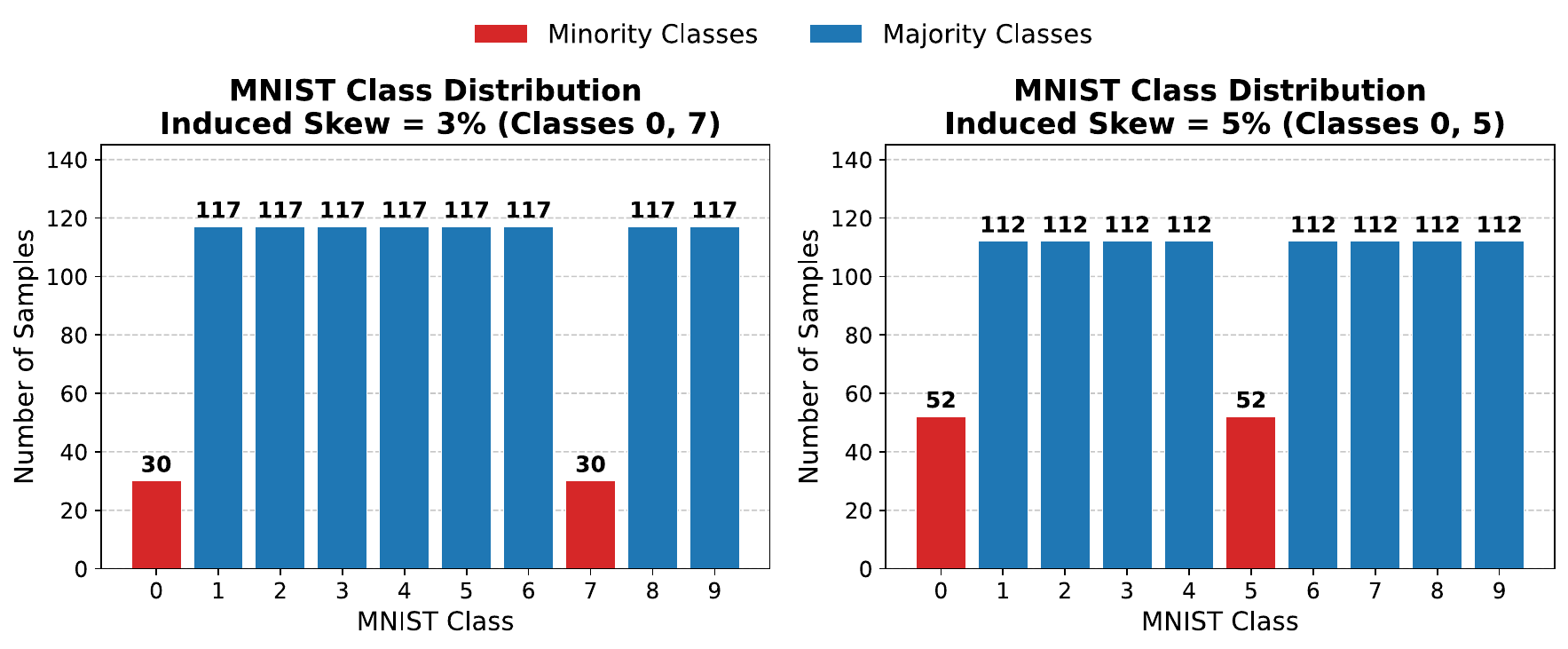} \\[2mm]
\includegraphics[height=0.7\linewidth,width=1\linewidth,keepaspectratio=false]{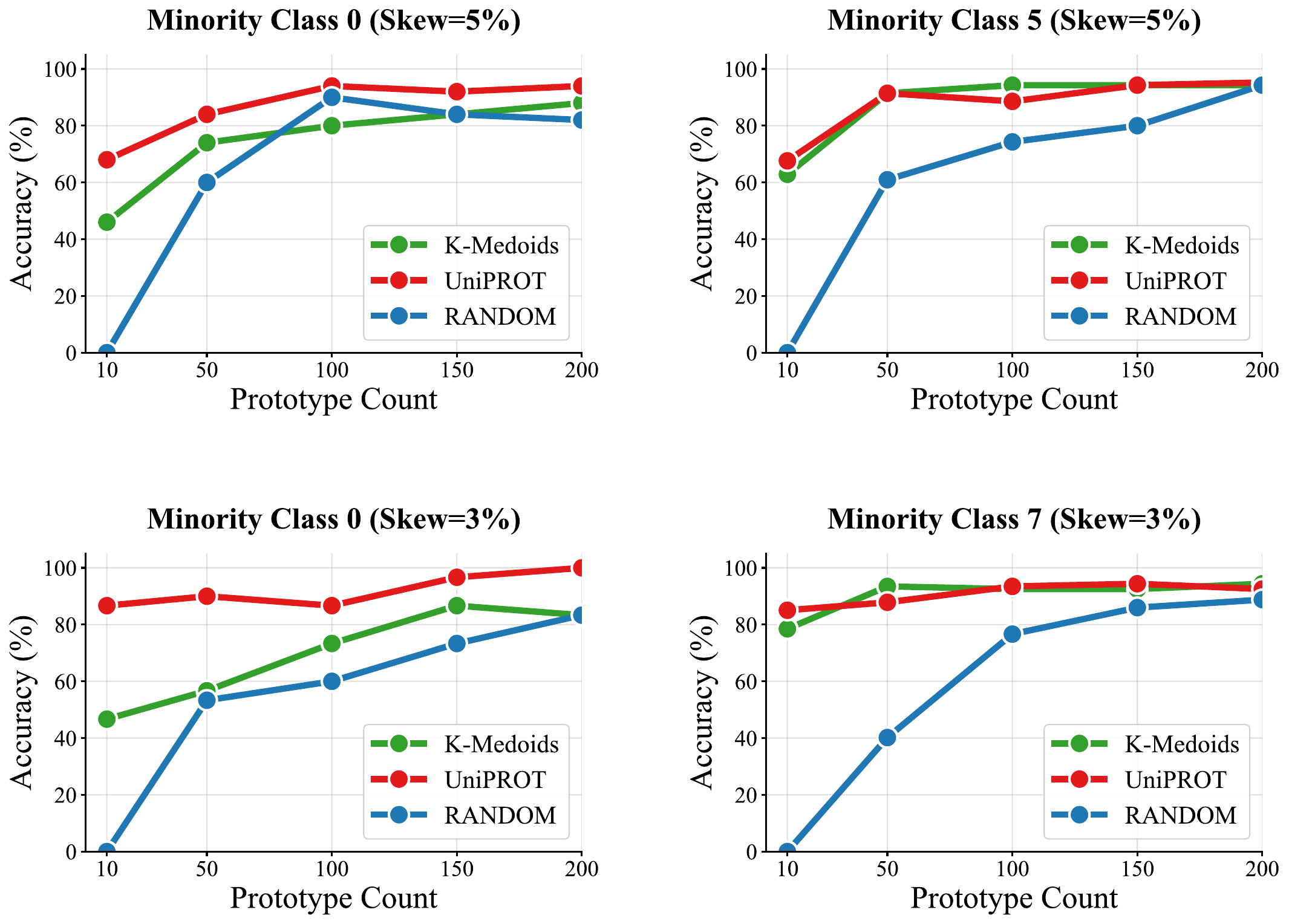}
        
        \caption{MNIST (Synthetic-LT)}
    \end{subfigure}

    \caption{\textit{Minority Class Accuracy Analysis}:On CIFAR10-LT and synthetic MNIST, $\methodprop$ outperforms $k$-medoids on minority classes.  
(\textbf{Top Left}): Average minority-class accuracy vs. prototype count.  
(\textbf{Bottom Left}): CIFAR10 minority class-wise accuracy (avg. over 3 runs).  
(\textbf{Top Right}): MNIST with Induced Skew: (i) classes 0 and 5 both at 5\% each (other classes at 90\%), and (ii) classes 0 and 7 both at 3\% each (other classes at 94\%). (\textbf{Bottom Right}): $\methodprop$ on average out-performs $k$-medoids across different skew variations for minority classes.
 }
    \label{fig:longtailclassifiation}
\end{figure*}

\subsection{High quality Mini-Batch Selection for LLM training}

Training LLMs with large mini-batches is known to accelerate convergence and improve model performance. However, this approach is often impractical due to the substantial memory  overheads. A common workaround is to select representative samples from a mini-batch that approximate the gradient of larger batches \citep{MirzasoleimanCraig20a,YangCrest23a}. Existing work \citep{killamsetty2021grad,killamsetty2021glister,wang2024greats,nguyen2024mini} rely on facility location or $k$-medoids based subset selection (\ref{eqn:spot}) to identify small high-quality mini-batches for LLM training. As discussed previously, these subsets approaches implicitly learn weighted representation. 
As the LLM training data is usually a highly imbalanced mixture of sources, weighted subset selection (\ref{eqn:spot}) may choose more representative prototypes with higher weights for larger sources and low-quality prototypes with low weights for smaller sources. 
However, this leads to misrepresentation of smaller sources and an eventual suboptimal performance of the training LLMs. To overcome this difficultly, we employ our uniformly weighted subset selection approach, $\methodprop$, in this problem setting and illustrate its suitability. 

\textbf{Problem Setting.} Consider a dataset $\V = \{\V_{1}\cup \cdots \cup \V_{p}\cup \V_{p+1} \cup \cdots \cup \V_{Q}\},$ with $Q$ sources, where the first $p$ sources correspond to minority sources and the remaining are majority sources. At iteration $t$, we have a (random) batch $\B_t$ which would typically have instances from all the sources. The aim is to select a highly representative subset of $\B_t$. One can perform this subset selection source-wise \citep{nguyen2024mini}, i.e., independently select $k_q$ instances from $\B^{t}_{q} = \B_t \cap \V_q\ \forall q$ where the budget $k_q$ is such that $\sum_q k_q = k$. Alternatively, one can directly select $k$ prototypes from $\B_t$. 

\textbf{Gradient based Representation for Subset Selection.} In order to select a subset of batch $\B_t$, we require a feature representation of the data points which is relevant to the subset selection problem. Recent works \citep{mirzasoleiman2020coresets,killamsetty2021grad,wang2024greats} have demonstrated the utility of the gradients as feature representation of the data points. 
Hence, during model training at iteration $t$, the similarity (or cost) matrix may be computed using the gradients of the data points in $\B_t$ (and also validation data points in case of \citep{wang2024greats}). However, as the dimensionality of gradients in LLMs is very large, employing exact gradients for subset selection problem may become impractical especially with high mini-batch size or low-resource hardware. Hence, \citep{nguyen2024mini} employed computationally efficient zeroth-order gradient approximation methods for constructing the similarity matrix for the subset selection problem. Overall, let the (gradient-based) representation of data points $\bx_i,\bx_j\in\B_t$ be $\bg^t(\bx_i)$  and $\bg^t(\bx_j)$, respectively, in the $t$-th iteration. Then, we employ the similarity matrix $\bS(i,j) = \inner{\bg^t(\bx_i),\bg^t(\bx_j)}$ in (\ref{eqn:proposed_3}) for our method $\methodprop$. On the other hand, \citep{nguyen2024mini} observed better results with  $\ell_1$ distance based similarity matrix, i.e., $\bS(i,j) = c - \|\bg^t(\bx_i)-\bg^t(\bx_j)\|_1$, where $c$ is a large constant which ensures the similarity matrix has positive entries. We also note that as \citep{wang2024greats} requires a validation dataset $U_t$, it computes both $\inner{\bg^t(\bx_i),\bg^t(\bx_j)}\ \forall \bx_i,\bx_j\in\B_t$ and $\inner{\bg^t(\bx_i),\bg^t(\bx_{\texttt{val}})}\ \forall \bx_i\in\B_t, \bx_{\texttt{val}}\in U_t$.

\begin{figure*}[htp!]
    \centering

    \small
    \setlength{\tabcolsep}{6pt}
    \renewcommand{\arraystretch}{1.2}
    \captionof{table}{Performance across in-domain and out-of-domain datasets for \textbf{\textsc{Phi-3}} on \textbf{\textsc{MathInstruct}}, batch size $|\B|=128$ and total budget $k=64$. For completeness and a \textbf{fair evaluation}, results are reported under two configurations for \textbf{all} baselines: \emph{source-wise}(left of ``/'') and \emph{batch-wise} (right of ``/''). $\methodprop$ consistently beats baselines in both configurations.}
    \label{table:main}
    \resizebox{0.98\textwidth}{!}{%
    \begin{tabular}{l c c c | c | c c c | c | c}
        \toprule
        \textbf{Method} & \multicolumn{4}{c}{\textbf{In-domain}} & \multicolumn{4}{c}{\textbf{Out-of-domain}} & \textbf{Avg-All} \\
        \cmidrule(lr){2-5} \cmidrule(lr){6-9}
        & \textbf{GSM8K} & \textbf{MATH} & \textbf{NumGLUE} & \textbf{Avg} 
        & \textbf{SVAMP} & \textbf{Mathematics} & \textbf{SimulEq} & \textbf{Avg} 
        &  \\
        \midrule
        FT & 76.72 & 36.54 & 62.57 & 58.61
            & 85.10 & 33.30 & 62.78 & 60.39
            & 59.50 \\
        MaxLoss & 70.64 \,/\, 69.44 & 32.05 \,/\, 30.02 & 57.80 \,/\, 58.90 & 53.50 \,/\, 52.79
            & 80.60 \,/\, 79.20 & 31.45 \,/\, 30.06 & 57.19 \,/\, 55.82 & 56.41 \,/\, 55.03
            & 54.96 \,/\, 53.91 \\
        GradNorm & 76.04 \,/\, 75.40 & 36.10 \,/\, 35.03 & 64.01 \,/\, 64.10 & 58.72 \,/\, 58.18
            & 85.30 \,/\, 84.17 & 38.00 \,/\, 36.50 & 61.84 \,/\, 65.70 & 61.71 \,/\, 62.12
            & 60.22 \,/\, 60.15 \\
        SBERT & 73.20 \,/\, 72.80 & 35.54 \,/\, 35.06 & 60.26 \,/\, 57.60 & 56.33 \,/\, 55.15
            & 79.31 \,/\, 77.90 & 34.05 \,/\, 34.00 & 61.70 \,/\, 58.90 & 58.35 \,/\, 56.93
            & 57.34 \,/\, 56.04 \\
        COLM & 76.80 \,/\, 76.36 & 37.28 \,/\, 36.42 & 64.11 \,/\, 64.10 & 59.40 \,/\, 58.96
            & 85.10 \,/\, 85.30 & \textbf{38.00} \,/\, 37.40 & 62.25 \,/\, 63.60 & 61.78 \,/\, 62.10
            & 60.59 \,/\, 60.53 \\
        GREATS & 76.72 \,/\, 77.80 & 37.84 \,/\, 37.28 & 67.46 \,/\, 64.40 & 60.67 \,/\, 59.83
            & 86.10 \,/\, 85.00 & 35.60 \,/\, \textbf{38.19} & 62.06 \,/\, 61.92 & 61.25 \,/\, 61.64
            & 60.96 \,/\, 60.73 \\
        \textbf{$\methodprop$} (Ours) & \textbf{79.07} \,/\, \textbf{78.16} & \textbf{38.40} \,/\, \textbf{37.76} & \textbf{68.80} \,/\, \textbf{66.02} & \textbf{62.09} \,/\, \textbf{60.65}
            & \textbf{86.20} \,/\, \textbf{85.70} & 36.90 \,/\, 37.20 & \textbf{66.73} \,/\, \textbf{68.28} & \textbf{63.28} \,/\, \textbf{63.73}
            & \textbf{62.68} \,/\, \textbf{62.19} \\
        \bottomrule
    \end{tabular}}

    \vspace{1em}

    \scalebox{0.30}{\input{iclr2026/plots/plot_main}}%
    \scalebox{0.30}{\input{iclr2026/plots/plot_zeph}}%
    \scalebox{0.30}{\input{iclr2026/plots/plot_phi2}}%

    \captionof{figure}{Validation perplexity dynamics on \textsc{Phi-3}, \textsc{Zephyr-3b} and \textsc{Phi-2} during training vs  top 3 best-performing baselines on \textbf{\textsc{MathInstruct}}. $\methodprop$-PS consistently outperforms other baselines.}
    \label{fig:main_combo}
\end{figure*}

\begin{figure*}[h!]
  \centering
  \begin{minipage}{0.55\textwidth}
    \centering
    \resizebox{\textwidth}{!}{%
      \input{iclr2026/plots/plot_bs64}%
      \input{iclr2026/plots/plot_bs32}%
      \input{iclr2026/plots/plot_bs16}%
    }
  \end{minipage}\hfill
  \begin{minipage}{0.42\textwidth}
    \centering
    \scriptsize
    \captionof{table}{Variation with $\lambda$ (entropic regularization).}
    \label{table:reg_ablation}
    \rowcolors{2}{gray!20}{white}
    \begin{tabular}{lccc}
      \toprule
      \rowcolor{gray!20}\textbf{$\lambda$ (Entropic Reg.)} & \textbf{GSM-8k} & \textbf{NumGlue} & \textbf{Svamp} \\
      \midrule
      0.1  & 47.76 & 37.5 & 52.9 \\
      0.05 & 48.36 & 36.1 & 54   \\
      0.01 & 49.40 & 36.7 & 54.5 \\
      \bottomrule
    \end{tabular}
  \end{minipage}
  \caption{Validation log-pplx with changing prototype percentage (left) and ablation table (right) on \textsc{Zephyr-3b}.}
  \label{fig:subsetratio_results}
\end{figure*}

\textbf{$\methodprop$-PS (Per Source)}. Given each batch consists of samples drawn from multiple sources, the objective here is to perform prototype selection at a per-source level. In particular, \citep{nguyen2024mini} ensures that samples belonging to minority sources are also preserved in the final selection, while prototype selection is done only for majority sources, thus preventing less representation of minority sources in the final selected subset. We consider $\methodprop$ at each source level per batch with each source having some cardinality constraint.
    The POT problem can then be formulated as $ f(\P_q) \coloneqq  \maxop_{\mgamma \in \Gamma_{\leq}(\bone_{\P_q},k_q\bone/n)}\inner{\bS_q,\mgamma}$  where $\P_q$ indicates the prototypes per $q$-th data source to be selected and $k_q$ indicates the cardinality per source and $n_q$ indicates the total size of the samples per source in the batch $\B_t$ (i.e., $n_q = |\B^t_q|$). Hence, the selected subset would be $\bigcup_{q=1}\P_q$. Here, we define $\bS \in \RR^{|\B^t_q| \times |\B^t_q|}$ where both the source and target are same $\S = \T = \B^t_q$
 ($q$th data source in the batch).

\textbf{$\methodprop$-PB (Per Batch)}. Beyond per-source prototype selection, we also consider the joint selection of prototypes across the entire batch, i.e., over all samples aggregated from multiple sources. This broader perspective is particularly beneficial, as $\methodprop$ mitigates the risk of systematically down-weighting minority sources, a bias that methods such as $k$-medoids or Facility Location clustering may inadvertently introduce as we observe in Figure \ref{fig:longtailclassifiation}. Here, the similarity matrix $\bS \in \RR^{|\B_t|\times |\B_t|}$ is formed in all samples within the entire batch.


\textbf{Models and Training details.} We evaluate on \textsc{Phi-2} (2.7B) \citep{javaheripi2023phi}, \textsc{Phi-3}(3.8B) \citep{li2023textbooks}, and \textsc{Zephyr}  (3B)
\citep{tunstall2023zephyr}. 
 For finetuning, we employ LoRA with rank $128$, $\alpha=512$, and dropout $0.05$. Following \citep{nguyen2024mini}, the LoRA are applied to all attention matrices (\textsc{QKV proj}) and two fully connected layers for the \textsc{Phi} models; for \textsc{Zephyr}, adapters are applied to all attention matrices (\textsc{QKVO proj}). All experiments are conducted on 3×A6000 GPUs. 

\textbf{Baselines.} We compare $\methodprop$ against (i) standard fine-tuning (FT), (ii) recent mini-batch selection approaches such as GREATS \citep{wang2024greats} and COLM \citep{nguyen2024mini}, and (iii) one-shot selection strategies such as Grad Norm (GN) \citep{katharopoulos2018not} and MaxLoss \citep{shalev2016minimizing}, adapted to mini-batch selection setting. For completeness and a \textbf{fair evaluation}, we include baseline results under both source-wise (\textbf{PS}) and batch-wise (\textbf{PB}) settings, whenever relevant. These are defined analogously to $\methodprop$-PS and $\methodprop$-PB. For the Full-Finetuning (\textbf{FT}) baseline, the model is trained as usual without any prototype selection. We note that \citep{nguyen2024mini} employs $k$-medoids for subset selection and demonstrate source-wise selection being better than batch-wise selection. On the other hand, \citep{wang2024greats} assumes access to a validation set and their subset selection criterion could be viewed as optimizing maximum mean discrepancy between the prototypical set (a subset of $\B_t$) and the validation set (with gradient based features and linear kernels). For a fair comparison, we simply assume random subset of training set as "validation anchors" for \citep{wang2024greats}.

\textbf{Finetuning Datasets.} We train on the \textsc{MathInstruct} dataset \citep{yue2023mammoth}, which consists of 260K instruction tuning samples curated from 14 highly imbalanced data sources. In addition, on the \textsc{SuperGLUE} benchmark \citep{wang2019superglue} for the following classification tasks \textit{SST-2}, \textit{CB}, and \textit{MultiRC}. We note that \textsc{SuperGLUE} does not have source information.

\textbf{Evaluation datasets.} Following \citep{yue2023mammoth}, we evaluate the finetuned models on both in-domain and out-of-domain benchmarks. The in-domain set comprises GSM8K \citep{cobbe2021training}, MATH \citep{hendrycks2021measuring}, and NumGLUE \citep{mishra2022numglue}, whereas the out-of-domain set includes SVAMP \citep{patel2021nlp}, Mathematics \citep{davies2021advancing}, and SimulEq \citep{koncel2016mawps}.



\subsection{Evaluation Results and Discussion}

\textbf{Finetuning Experiments:} We train all models for 2048 steps with batch size of 128 and prototype-ratio as 50\% (thus effective batch size $k$=64), except for Full-Finetuning (\textbf{FT}). The Full-Finetuning baselines train on the whole input-batch as is standard. Table \ref{tab:superglue} presents the results on batch selection where we train all baselines using batch selection for a fair comparison and $\methodprop$ performs well in full-batch selection. 
 Table \ref{table:main} shows the results of source-wise finetuning on \textsc{MathInstruct}. We do source selection on \textbf{all} baselines for fair comparision. The table indicates that $\methodprop$ is significantly better than other baselines in both the in-domain and out-of-domain settings. \\ \\
 \textbf{Source unavailable datasets:} In some benchmarks, such as \textsc{SuperGLUE}, source annotations are not provided, making source-wise selection infeasible. This setting is common in practice, where fine-tuning data may arrive without clear domain tags. To test robustness, we evaluate on SST2, MultiRC, and CB under this source-agnostic regime. Since ground-truth sources are unavailable, $\methodprop$-PS is omitted and all baselines are compared via batch selection for fairness. We train for 512 steps, batch-size 32 and selection ratio of 25\%, except for FT where we train on whole batch. As shown in Table~\ref{tab:superglue}, $\methodprop$ still outperforms alternatives, indicating strong performance without source information. 


\textbf{Pretraining Experiments}
    To test the effectiveness of $\methodprop$, we conduct pretraining experiments with OpenWebText on LLaMA-3 500M and 60M models for 20k steps. All baselines are trained in \textit{batch-wise}(PB) mode. We defer the details to Appendix \ref{supp:pretraining}. Figure \ref{fig:pretraining} indicates that $\methodprop$ outperforms all baselines including Full-batch pretraining both at large and small scale models.

\begin{figure}[htbp]
 \centering
 \scalebox{0.26}{\input{iclr2026/plots/plot_pret}}
 \scalebox{0.26}{\input{iclr2026/plots/plot_pret_60m}}
 \caption{Perplexity dynamics on \textbf{Pretraining} \textsc{LLaMA-3} 500M and 60M for 20k steps.}
 \label{fig:pretraining}
\end{figure}
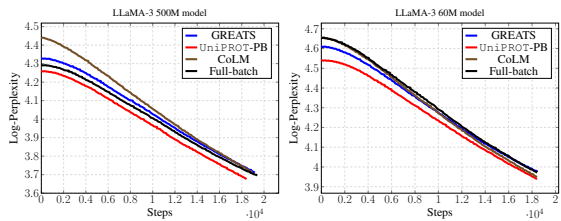
\begin{table}[t] 
    \centering
    \scriptsize
    \caption{Comparison of performance of batch-wise selection across baselines for \textbf{\textsc{Phi-3}} in \textbf{\textsc{SuperGlue}} \citep{wang2019superglue}.} 
    \label{tab:superglue}
    \begin{tabular}{l c c c | c}
        \toprule
        \textbf{Method} & \textbf{SST2} & \textbf{MultiRC} & \textbf{CB} & \textbf{Avg} \\
        \midrule
        FT  & 93.91 & 86.05 & 92.72 & 90.89 \\
        GradNorm   & 87.94 & 57.54 & 69.10 & 71.53 \\   
        SBERT   & 90.10 & 82.11 & 87.27 & 86.49 \\
        CoLM   & 94.72 & 82.99 & 93.05 & 90.25 \\
        GREATS   & \textbf{94.81} & \textbf{88.42} & 93.12 & 92.12 \\
        \textbf{$\methodprop$-PB (Ours)}  & 94.65 & 88.03 & {\textbf{94.54}} & \textbf{92.41} \\
        \bottomrule
    \end{tabular}
\end{table}

\textbf{Effect of number of selected prototypes}. We finetune \textsc{Zepyr-3b} on \textsc{MathInstruct} for 2048 steps while varying the selection ratio $r \in {50\%, 25\%, 12.5\%}$. 
Figure~\ref{fig:subsetratio_results} reports the validation perplexity. We observe that $\methodprop$ remains consistently stable across all ratios. In contrast, \textsc{CoLM} degrades noticeably as $r$ decreases, while \textsc{GREATS} shows a smaller but still measurable rise. Overall, $\methodprop$ exhibits the lowest sensitivity to prototype budget, indicating stronger robustness.

\textbf{Effect of regularization on downstream performance}.
We ablate the entropic regularization coefficient in the partial optimal transport objective by finetuning \textsc{Phi-3} on \textsc{MathInstruct} and evaluating downstream on GSM8K, NumGLUE and Svamp. We finetune for 128 steps and 20 OT iterations.(Table~\ref{table:reg_ablation}). With $\lambda=0.1$, performance is consistently lower, while reducing to $\lambda=0.01$ yields clear gains on both tasks. This trend aligns with prior observations \citep{cuturi2013lightspeed}, where smaller regularization improves transport fidelity and leads to better downstream accuracy.

\section{Conclusion} 
We proposed {\methodprop}, a scalable and theoretically grounded framework for selecting $k$ uniformly weighted prototypes that summarize a target distribution via optimal transport. This leads to a super-additive maximization problem under cardinality constraints, for which we introduced a novel submodular reformulation with a provably tight equivalence at $k$, enabling a greedy algorithm with a $(1 - 1/e)$ approximation guarantee. $\methodprop $ consistently improves minority-class representation in imbalanced classification tasks and enhances mini-batch quality for training large language models, outperforming existing methods in both accuracy and efficiency. By imposing uniform weights, $\methodprop $ mitigates bias toward majority classes, and supporting equitable learning.

\section{Acknowledgements}
PC acknowledges the Microsoft Research India PhD Award and Prime Minister Research Fellowship to support this research work. GR thanks Bank of Baroda Chair Professorship. PJ acknowledges the support of Anusandhan National Research Foundation under ARG-MATRICS program and IIT Bombay seed grant. 

\bibliography{AISTATS2026PaperPack/reference}
\bibliographystyle{apalike}

\newpage

\section*{Checklist}

The checklist follows the references. For each question, choose your answer from the three possible options: Yes, No, Not Applicable.  You are encouraged to include a justification to your answer, either by referencing the appropriate section of your paper or providing a brief inline description (1-2 sentences). 
Please do not modify the questions.  Note that the Checklist section does not count towards the page limit. Not including the checklist in the first submission won't result in desk rejection, although in such case we will ask you to upload it during the author response period and include it in camera ready (if accepted).

\textbf{In your paper, please delete this instructions block and only keep the Checklist section heading above along with the questions/answers below.}

\begin{enumerate}

  \item For all models and algorithms presented, check if you include:
  \begin{enumerate}
    \item A clear description of the mathematical setting, assumptions, algorithm, and/or model.  [\textcolor{blue}{Yes}]
    \item An analysis of the properties and complexity (time, space, sample size) of any algorithm. [\textcolor{blue}{Yes}]
    \item (Optional) Anonymized source code, with specification of all dependencies, including external libraries. [\textcolor{blue}{Yes}]
  \end{enumerate}

  \item For any theoretical claim, check if you include:
  \begin{enumerate}
    \item Statements of the full set of assumptions of all theoretical results. [\textcolor{blue}{Yes}]
    \item Complete proofs of all theoretical results. [\textcolor{blue}{Yes}]
    \item Clear explanations of any assumptions. [\textcolor{blue}{Yes}]     
  \end{enumerate}

  \item For all figures and tables that present empirical results, check if you include:
  \begin{enumerate}
    \item The code, data, and instructions needed to reproduce the main experimental results (either in the supplemental material or as a URL). [\textcolor{blue}{Yes}]
    \item All the training details (e.g., data splits, hyperparameters, how they were chosen). [\textcolor{blue}{Yes}]
    \item A clear definition of the specific measure or statistics and error bars (e.g., with respect to the random seed after running experiments multiple times). [\textcolor{blue}{Yes}]
    \item A description of the computing infrastructure used. (e.g., type of GPUs, internal cluster, or cloud provider). [\textcolor{blue}{Yes}]
  \end{enumerate}

  \item If you are using existing assets (e.g., code, data, models) or curating/releasing new assets, check if you include:
  \begin{enumerate}
    \item Citations of the creator If your work uses existing assets. [\textcolor{blue}{Yes}]
    \item The license information of the assets, if applicable. [\textcolor{blue}{Not Applicable}]
    \item New assets either in the supplemental material or as a URL, if applicable. [\textcolor{blue}{Not Applicable}]
    \item Information about consent from data providers/curators. [\textcolor{blue}{Not Applicable}]
    \item Discussion of sensible content if applicable, e.g., personally identifiable information or offensive content. [\textcolor{blue}{Not Applicable}]
  \end{enumerate}

  \item If you used crowdsourcing or conducted research with human subjects, check if you include:
  \begin{enumerate}
    \item The full text of instructions given to participants and screenshots. [\textcolor{blue}{Not Applicable}]
    \item Descriptions of potential participant risks, with links to Institutional Review Board (IRB) approvals if applicable. [\textcolor{blue}{Not Applicable}]
    \item The estimated hourly wage paid to participants and the total amount spent on participant compensation. [\textcolor{blue}{Not Applicable}]
  \end{enumerate}

\end{enumerate}

    \addtocontents{toc}{\protect
    \setcounter{tocdepth}{2}} 

    \onecolumn \par
    \noindent

    \par
    \noindent
    \rule{\textwidth}{3pt}
    \begin{center}
        \large\textbf{Supplementary Material: \papertitle}
    \end{center}
    \par
    \noindent
    \rule{\textwidth}{0.4pt}

    \appendix
    \addappheadtotoc
    \tableofcontents

    \renewcommand{\thesection}{\Alph{section}} 
    \renewcommand{\thesubsection}{\Alph{section}.\arabic{subsection}} 

\input{supplemental}
    
\end{document}


%
\runningtitle{\papertitle}

%

\onecolumn
\aistatstitle{Instructions for Paper Submissions to AISTATS 2026: \\
Supplementary Materials}

\section{FORMATTING INSTRUCTIONS}

To prepare a supplementary pdf file, we ask the authors to use \texttt{aistats2026.sty} as a style file and to follow the same formatting instructions as in the main paper.
The only difference is that the supplementary material must be in a \emph{single-column} format.
You can use \texttt{supplement.tex} in our starter pack as a starting point, or append the supplementary content to the main paper and split the final PDF into two separate files.

Note that reviewers are under no obligation to examine your supplementary material.

\section{MISSING PROOFS}

The supplementary materials may contain detailed proofs of the results that are missing in the main paper.

\subsection{Proof of Lemma 3}

\textit{In this section, we present the detailed proof of Lemma 3 and then [ ... ]}

\section{ADDITIONAL EXPERIMENTS}

If you have additional experimental results, you may include them in the supplementary materials.

\subsection{Effect of the Regularization Parameter}

\textit{Our algorithm depends on the regularization parameter $\lambda$. Figure 1 below illustrates the effect of this parameter on the performance of our algorithm. As we can see, [ ... ]}

\vfill

%% file: AISTATS2026PaperPack/math_commands.tex
\usepackage{amsmath,amsfonts,bm}
\usepackage{amsthm}

\def\eqref#1{equation~\ref{#1}}

\def\1{\bm{1}}

\DeclareMathAlphabet{\mathsfit}{\encodingdefault}{\sfdefault}{m}{sl}
\SetMathAlphabet{\mathsfit}{bold}{\encodingdefault}{\sfdefault}{bx}{n}

\def\sT{{\mathbb{T}}}

\newcommand{\R}{\mathbb{R}}

\DeclareMathOperator*{\argmax}{arg\,max}

\newtheorem*{restatelemma}{Restated Lemma}

%% file: macros.tex
\def \papertitle{Uniform Prototype Selection via Partial Optimal Transport with Submodular Guarantees}

\newcommand{\bD}[1]{\boldsymbol{\mathcal{D}}}

\newcommand{\methodprop}{\texttt{UniPROT}}

\newcommand{\plotwidth}{3pt}

\DeclareMathAlphabet{\pazocal}{OMS}{zplm}{m}{n}

\def\BibTeX{{\rm B\kern-.05em{\sc i\kern-.025em b}\kern-.08em
    T\kern-.1667em\lower.7ex\hbox{E}\kern-.125emX}}

\newcommand{\inner}[1]{\left\langle#1\right\rangle}
\def\H{\mathcal{H}}

\def\G{\mathcal{G}}

\def\A{\mathcal{A}}
\def\X{\mathcal{X}}
\def\Y{\mathcal{Y}}
\def\RR{\mathbb{R}}

\def\B{\mathcal{B}}
\def\U{\mathcal{U}}
\def\P{\mathcal{P}}

\def\bzero{{\mathbf 0}}
\def\bone{{\mathbf 1}}

\def\bg{{\mathbf g}}

\def\bv{\mathbf v}

\def\bx{{\boldsymbol x}}
\def\by{{\mathbf y}}
\def\bz{{\boldsymbol z}}

\def\bC{\mathbf C}
\def\bD{\mathbf D}

\def\bS{\mathbf S}
\def\bT{\mathbf T}

\def\bW{{\mathbf W}}

\def\bZ{{\mathbf Z}}

\def\bmu{{\boldsymbol \mu}}
\def\bnu{{\boldsymbol \nu}}

\def\btheta{\boldsymbol{\theta}}

\def\argmax{\mathop{\rm arg\,max}\limits}

\def\maxop{\mathop{\rm max}\limits} 
\def\minop{\mathop{\rm min}\limits}

\newcommand{\eat}[1]{}

\def\H{\mathcal{H}}
\def\I{\mathcal{I}}

\def\R{\mathcal{R}}
\def\S{\mathcal{S}}
\def\T{\mathcal{T}}
\def\V{\mathcal{V}}

\usepackage{amsmath,amsthm,amssymb,amsxtra,mathtools,bm}
\usepackage{array}
\usepackage{graphicx}
\usepackage{physics}
\usepackage{MnSymbol,wasysym}
\usepackage{tikzsymbols}
\usepackage{array,tabularx,multirow}
\usepackage{makecell}
\usepackage{array}

\usepackage{array}
\newcolumntype{P}[1]{>{\centering\arraybackslash}p{#1}}

\usepackage{verbatim}
\usepackage{enumerate}
\usepackage{parskip}

\usepackage{color,soul}
\definecolor{clemson-orange}{RGB}{234,106,32}
\definecolor{highlight-orange}{RGB}{255,150,150}
\definecolor{chicago-maroon}{RGB}{128,0,0}
\definecolor{cincinnati-red}{RGB}{190,0,0}
\definecolor{soft-cyan}{RGB}{68,85,90}
\definecolor{firebrick}{RGB}{178,34,34}
\definecolor{crimson}{RGB}{220,20,60}
\definecolor{cerrulean}{rgb}{0.165,0.322,0.745}
\definecolor{jaam}{rgb}{0.45,0.0,0.45}

\usepackage{hyperref}
\hypersetup{
  colorlinks   = true,    
  urlcolor     = jaam,    
  linkcolor    = firebrick,    
  citecolor    = blue      
}

\usepackage{parskip}
\usepackage{thmtools}
\declaretheoremstyle[
   headfont=\bfseries, 
   notebraces={[}{]},
   bodyfont=\normalfont\itshape, spaceabove=10pt,
   spacebelow=10pt]{mystyle}
 \theoremstyle{mystyle}
\theoremstyle{definition}

\newif\ifsolutions \solutionstrue

\def\final{0}
\newcommand{\reviewer}[3]{
  \expandafter\newcommand\csname #1\endcsname[1]{
    \ifthenelse{\equal{\final}{1}} {
      \textcolor{#3}{}
    } {
      \textcolor{#3}{\begin{center} \textbf{#2} ##1 \end{center}}
    }
  }
}

\def\1{\bm{1}}

\def\valpha{{\bm{\alpha}}}

\def\mgamma{{\bm{\gamma}}}

\DeclareMathAlphabet{\mathsfit}{\encodingdefault}{\sfdefault}{m}{sl}
\SetMathAlphabet{\mathsfit}{bold}{\encodingdefault}{\sfdefault}{bx}{n}

\def\sT{{\mathbb{T}}}

\renewcommand{\H}{{\ve H}}

\newtheorem{thm}{Theorem}
\newtheorem{thmnew}{Theorem}

\newtheorem{lem}[thmnew]{Lemma}

\newtheorem{rem}{Remark}

\DeclareMathAlphabet{\pazocal}{OMS}{zplm}{m}{n}

\def\BibTeX{{\rm B\kern-.05em{\sc i\kern-.025em b}\kern-.08em
    T\kern-.1667em\lower.7ex\hbox{E}\kern-.125emX}}

\def\H{\mathcal{H}}

\def\G{\mathcal{G}}

\def\A{\mathcal{A}}
\def\X{\mathcal{X}}
\def\Y{\mathcal{Y}}
\def\RR{\mathbb{R}}

\def\B{\mathcal{B}}
\def\U{\mathcal{U}}
\def\Q{\mathcal{Q}}
\def\P{\mathcal{P}}

\def\bzero{{\mathbf 0}}
\def\bone{{\mathbf 1}}

\def\bg{{\mathbf g}}

\def\bv{\mathbf v}

\def\bx{{\mathbf x}}
\def\by{{\mathbf y}}
\def\bz{{\mathbf z}}

\def\bC{\mathbf C}
\def\bD{\mathbf D}

\def\bS{\mathbf S}
\def\bT{\mathbf T}

\def\bW{{\mathbf W}}

\def\bZ{{\mathbf Z}}

\def\bmu{{\boldsymbol \mu}}
\def\bnu{{\boldsymbol \nu}}

\def\argmax{\mathop{\rm arg\,max}\limits}

\def\maxop{\mathop{\rm max}\limits} 
\def\minop{\mathop{\rm min}\limits}

\def\H{\mathcal{H}}
\def\I{\mathcal{I}}

\def\R{\mathcal{R}}
\def\S{\mathcal{S}}
\def\T{\mathcal{T}}
\def\V{\mathcal{V}}

%% file: iclr2026/plots/plot_main.tex
\begin{tikzpicture}
    \begin{axis}[
        width=0.8\textwidth,
        height=0.6\textwidth,
        xlabel={Steps},
        ylabel={Log-Perplexity},
         title={\LARGE \textbf{Phi-3}},
        grid=both,
        grid style={dashed, gray!30},
        legend pos=north east,
        xmin = 0,
        clip=false,
        font=\LARGE, 
        label style={font=\LARGE}, 
        tick label style={font=\LARGE, rotate=45}, 
        title style={font=\LARGE}, 
        legend style={font=\LARGE} 
    ]
        \addplot+[mark=none, line width=\plotwidth] table [x index=0, y index=1] {iclr2026/data/greats_main.dat};
        \addlegendentry{GREATS}
        \addplot+[mark=none, line width=\plotwidth] table [x index=0, y index=1] {iclr2026/data/uniprot_main.dat};
        \addlegendentry{$\methodprop$-PS}
        \addplot+[mark=none, line width=\plotwidth] table [x index=0, y index=1] {iclr2026/data/colm_main.dat};
        \addlegendentry{CoLM}

     \end{axis}
\end{tikzpicture}

%% file: iclr2026/plots/plot_zeph.tex
\begin{tikzpicture}
    \begin{axis}[
        width=0.8\textwidth,
        height=0.6\textwidth,
        xlabel={Steps},
        ylabel={Log-Perplexity},
         title={\LARGE \textbf{Zephyr-3b}},
        grid=both,
        grid style={dashed, gray!30},
        legend pos=north east,
        xmin = 0,
        font=\LARGE, 
        label style={font=\LARGE}, 
        tick label style={font=\LARGE, rotate=45}, 
        title style={font=\LARGE}, 
        legend style={font=\LARGE} 
    ]
        \addplot+[mark=none, line width=\plotwidth] table [x index=0, y index=1] {iclr2026/data/greats_zeph.dat};
        \addlegendentry{GREATS}
        \addplot+[mark=none, line width=\plotwidth] table [x index=0, y index=1] {iclr2026/data/uniprot_zeph.dat};
        \addlegendentry{$\methodprop$-PS}
        \addplot+[mark=none, line width=\plotwidth] table [x index=0, y index=1] {iclr2026/data/colm_zeph.dat};
        \addlegendentry{CoLM}
    \end{axis}
\end{tikzpicture}

%% file: iclr2026/plots/plot_phi2.tex
\begin{tikzpicture}
    \begin{axis}[
        width=0.8\textwidth,
        height=0.6\textwidth,
        xlabel={Steps},
        ylabel={Log-Perplexity},
         title={\LARGE \textbf{Phi-2}},
        grid=both,
        xmin = 0,
        grid style={dashed, gray!30},
        legend pos=north east,
        font=\LARGE, 
        label style={font=\LARGE}, 
        tick label style={font=\LARGE, rotate=45}, 
        title style={font=\LARGE}, 
        legend style={font=\LARGE} 
    ]
        \addplot+[mark=none, line width=\plotwidth] table [x index=0, y index=1] {iclr2026/plots/plot_bs16_greats_phi2.dat};
        \addlegendentry{GREATS}
        \addplot+[mark=none, line width=\plotwidth] table [x index=0, y index=1] {iclr2026/plots/plot_bs16_phi2_uniprot.tex};
        \addlegendentry{$\methodprop$-PS}
        \addplot+[mark=none, line width=\plotwidth] table [x index=0, y index=1] {iclr2026/plots/plot_bs16_phi2_colm.tex};
        \addlegendentry{CoLM}
        
    \end{axis}
\end{tikzpicture}

%% file: iclr2026/plots/plot_bs64.tex
\begin{tikzpicture}
    \begin{axis}[
        width=0.8\textwidth,
        height=0.6\textwidth,
        xlabel={Steps},
        ylabel={Log-Perplexity},
         title={\LARGE \textbf{Subset ratio 50\% }},
        grid=both,
        grid style={dashed, gray!30},
        legend pos=north east,
        font=\LARGE, 
        label style={font=\LARGE}, 
        tick label style={font=\large}, 
        title style={font=\LARGE}, 
        legend style={font=\large} 
    ]
        \addplot+[mark=none, line width=\plotwidth] table [x index=0, y index=1] {iclr2026/data/greats_zeph.dat};
        \addlegendentry{GREATS}
        \addplot+[mark=none, line width=\plotwidth] table [x index=0, y index=1] {iclr2026/data/uniprot_zeph.dat};
        \addlegendentry{$\methodprop$-PS}
        \addplot+[mark=none, line width=\plotwidth] table [x index=0, y index=1] {iclr2026/data/colm_zeph.dat};
        \addlegendentry{CoLM}

        \addlegendentry{Ours}
    \end{axis}
\end{tikzpicture}

%% file: iclr2026/plots/plot_bs32.tex
\begin{tikzpicture}
    \begin{axis}[
        width=0.8\textwidth,
        height=0.6\textwidth,
        xlabel={Steps},
        ylabel={Log-Perplexity},
         title={\LARGE \textbf{Subset ratio 25\% }},
        grid=both,
        grid style={dashed, gray!30},
        legend pos=north east,
        font=\LARGE, 
        label style={font=\LARGE}, 
        tick label style={font=\large}, 
        title style={font=\LARGE}, 
        legend style={font=\large} 
    ]
        \addplot+[mark=none, line width=\plotwidth] table [x index=0, y index=1] {iclr2026/data/greats_bs32.dat};
        \addlegendentry{GREATS}
        \addplot+[mark=none, line width=\plotwidth] table [x index=0, y index=1] {iclr2026/data/uniprot_bs32.dat};
        \addlegendentry{$\methodprop$-PS}
        \addplot+[mark=none, line width=\plotwidth] table [x index=0, y index=1] {iclr2026/data/colm_bs32.dat};
        \addlegendentry{CoLM}
        
    \end{axis}
\end{tikzpicture}

%% file: iclr2026/plots/plot_bs16.tex
\begin{tikzpicture}
    \begin{axis}[
        width=0.8\textwidth,
        height=0.6\textwidth,
        xlabel={Steps},
        ylabel={Log-Perplexity},
         title={\LARGE \textbf{Subset ratio 12.5\% }},
        grid=both,
        grid style={dashed, gray!30},
        legend pos=north east,
        font=\LARGE, 
        label style={font=\LARGE}, 
        tick label style={font=\large}, 
        title style={font=\LARGE}, 
        legend style={font=\large} 
    ]
        \addplot+[mark=none, line width=\plotwidth] table [x index=0, y index=1] {iclr2026/data/greats_bs16.dat};
        \addlegendentry{GREATS}
        \addplot+[mark=none, line width=\plotwidth] table [x index=0, y index=1] {iclr2026/data/uniprot_bs16.dat};
        \addlegendentry{$\methodprop$-PS}
        \addplot+[mark=none, line width=\plotwidth] table [x index=0, y index=1] {iclr2026/data/colm_bs16.dat};
        \addlegendentry{CoLM}
        
    \end{axis}
\end{tikzpicture}

%% file: iclr2026/plots/plot_pret.tex
\begin{tikzpicture}
    \begin{axis}[
        width=0.8\textwidth,
        height=0.6\textwidth,
        xlabel={Steps},
        ylabel={Log-Perplexity},
         title={\Large LLaMA-3 500M model},
        grid=both,
        grid style={dashed, gray!30},
        legend pos=north east,
        xmin = 0,
        clip = true,
        font=\LARGE, 
        label style={font=\LARGE}, 
        tick label style={font=\LARGE}, 
        title style={font=\LARGE}, 
        legend style={font=\LARGE}, 
    ]
        \addplot+[mark=none, line width=\plotwidth] table [x index=0, y index=1] {iclr2026/data/greats_pret.dat};
        \addlegendentry{GREATS}
        \addplot+[mark=none, line width=\plotwidth] table [x index=0, y index=1] {iclr2026/data/uniprot_pret.dat};
        \addlegendentry{$\methodprop$-PB}
        \addplot+[mark=none, line width=\plotwidth] table [x index=0, y index=1] {iclr2026/data/colm_pret.dat};
        \addlegendentry{CoLM}
        \addplot+[mark=none, line width=\plotwidth] table [x index=0, y index=1] {iclr2026/data/full_pret.dat};
        \addlegendentry{Full-batch}
        
    \end{axis}
\end{tikzpicture}

%% file: iclr2026/plots/plot_pret_60m.tex
\begin{tikzpicture}
    \begin{axis}[
        width=0.8\textwidth,
        height=0.6\textwidth,
        xlabel={Steps},
        ylabel={Log-Perplexity},
        title={\Large LLaMA-3 60M model},
        grid=both,
        xmin=0,
        clip=true,
        grid style={dashed, gray!30},
        legend pos=north east,
        font=\LARGE, 
        label style={font=\LARGE}, 
        tick label style={font=\LARGE}, 
        title style={font=\LARGE}, 
        legend style={font=\LARGE} 
    ]
        \addplot+[mark=none, line width=\plotwidth] table [x index=0, y index=1] {iclr2026/data/60m_greats.dat};
        \addlegendentry{GREATS}
        \addplot+[mark=none, line width=\plotwidth] table [x index=0, y index=1] {iclr2026/data/60m_fairot.dat};
        \addlegendentry{$\methodprop$-PB}
        \addplot+[mark=none, line width=\plotwidth] table [x index=0, y index=1] {iclr2026/data/60m_colm.dat};
        \addlegendentry{CoLM}
        \addplot+[mark=none, line width=\plotwidth] table [x index=0, y index=1] {iclr2026/data/60m_full.dat};
        \addlegendentry{Full-batch}
    \end{axis}
\end{tikzpicture}

%% file: supplemental.tex
\newpage
\onecolumn
\allowdisplaybreaks
\par\noindent\rule{\textwidth}{1pt}
\begin{center}
\large\textbf{Supplementary Material: 
\papertitle}
\end{center}
\par\noindent\rule{\textwidth}{0.4pt}

\section{Organization of Appendix}
The Appendix is structured as follows. Section \ref{ref:TheoreticalResults} summarizes the main theoretical results. Section \ref{ref:AlgorithmDetails} presents the detailed description of the proposed algorithm. Sections \ref{ref:implementationDetails} and \ref{ref:ExperimentalSetupDetails} outline the implementation aspects and the specifics of the experimental setup, respectively. Finally, Section \ref{ref:BroaderImpact} discusses the broader impact of our work, and Section \ref{ref:Code} provides a link to the publicly available codebase.

\section{Theoretical Results}\label{ref:TheoreticalResults}

\textbf{Submodularity Ratio}\label{defn:submodularity ratio} 
The notion of submodularity ratio is given by approximate submodularity in \citep{das2018approximate}. For a monotone function $f$ the submodularity ratio w.r.t a set $S$ and a parameter $k\geq 0$
as \[
\alpha_{L,K}(f) = \min_{\substack{\S \subseteq L,\, \A \subseteq L \\ |\A| \leq K,\, \A \cap \S = \varnothing}} 
\frac{\sum_{u \in A} f(\S \cup \{u\}) - f(\S)}{f(\S \cup A) - f(\S)},
\quad \text{with } \frac{0}{0} := 1.
\]

$f$ is submodular if and only if \( \alpha_{L,K}(f) \geq 1 \). If the ratio
\[
\alpha := \frac{\sum_{u \in A} f(\S \cup \{u\}) - f(\S)}{f(\S \cup \A) - f(\S)}
\]
is strictly positive but not necessarily greater than 1, then \( f \) is said to be \( \alpha \)-weakly submodular.

\LemmaAdditivity*
\begin{proof}
We prove each property in turn (refer to the definition of $h(\P)$ in (\ref{eqn:proposed_2})). 

\noindent\textit{1. Non-negativity.}
The non-negativity follows from the definition of $h(\cdot)$ in (\ref{eqn:proposed_2}) namely, $h(\P) \coloneq \maxop_{\scriptscriptstyle \smash{\substack{\mgamma \in \Gamma(\bone_{\P},|\P|\bone/n)}}}
            \inner{\bS, \mgamma}$, where the similarity matrix $\bS$ is a non-negative matrix and the transport plan is enforced to  non-negative.

\noindent\textit{2. Monotonicity.}
Consider a subset $\P_1$ and define a set $\P_2 = \P_1 \cup \{\bx_i\}$ for any $\bx_i \notin \P_1$. To prove monotonicity of $h(.)$, it is sufficient to show that $h(\P_2) \geq h(\P_1)$. To this end, let $\mgamma_{\P_1}$ be the argmax for $h(\P_1)$ and consider the sub-matrix $\mgamma_{\P_1}\left(\I_{\P_1}, : \right)$ which is the restriction of the optimal solution to the points in $\P_1$. We note that $\mgamma_{\P_1}(i,:) = \bzero; \bx_i \notin \P_1$. We construct a feasible transport plan $\hat{\mgamma}$ for the set $\P_2$ as:
\begin{equation*}
    \hat{\mgamma}\left(\I_{\P_2}, : \right) = \left[\mgamma_{\P_1}\left(\I_{\P_1}, : \right)^\top, \frac{\bone}{n}\right]^\top,
\end{equation*}
and $\hat{\mgamma}(j,:) = \bzero$ for $\bx_j \notin \P_2$.
Let $\hat{h}\left(\P_2; \hat{\mgamma}\right)$ indicate the function value evaluated at the feasible transport plan $\hat{\mgamma}$ for the set $\P_2$. We then have
\begin{equation}
    \begin{aligned}
     h\left(\P_2\right) \geq \hat{h}\left(\P_2; \hat{\mgamma}\right) &= \inner{\bS\left(\I_{\P_2},:\right),  \hat{\mgamma}\left(\I_{\P_2}, : \right)} \\
     &= \inner{\bS\left(\I_{\P_1},:\right), \mgamma_{\P_1}\left(\I_{\P_1}, : \right)} + \inner{\bS(i,:), \frac{\bone}{n}}\\
     &= h(\P_1) + \inner{\bS(i,:), \frac{\bone}{n}}\\
     &\geq h(\P_1) \text{(Since } \bS(i,:)\geq \bzero\text{.)}
    \end{aligned}
\end{equation}

\noindent\textit{3. Super-additivity over disjoint sets.}
Consider two disjoint sets $\P_1$ and $\P_2$. Let $\mgamma_{\P_1}\left(\I_{\P_1}, : \right)$ and $\mgamma_{\P_2}\left(\I_{\P_2}, : \right)$ represent the sub-matrices of the respective optimal solutions to the points in $\P_1$ and $\P_2$. For the disjoint union set $\P = \P_1 \cup \P_2$, we construct a feasible transport plan $\hat{\mgamma}$ as:
\begin{equation*}
 \hat{\mgamma}\left(\I_{\P}, : \right) = \left[\mgamma_{\P_1}\left(\I_{\P_1}, : \right)^\top, \mgamma_{\P_2}\left(\I_{\P_2}, : \right)^\top\right]^\top,
\end{equation*}
and $\hat{\mgamma}(j,:) = \bzero$ for $\bx_j \notin \P$.
Evaluating the function $\hat{h}\left(\P;  \hat{\mgamma}\right)$ at the feasible solution, we get
\begin{equation}
\begin{aligned}
    h\left(\P\right) \geq \hat{h}\left(\P; \hat{\mgamma}\right) &= \inner{\bS\left(\I_{\P},:\right),  \hat{\mgamma}\left(\I_{\P}, : \right)}\\
    &= \inner{\bS\left(\I_{\P_1},:\right), \mgamma_{\P_1}\left(\I_{\P_1}, : \right)} + \inner{\bS\left(\I_{\P_2},:\right), \mgamma_{\P_2}\left(\I_{\P_2}, : \right)}\\
    &= h(\P_1) + h(\P_2).
\end{aligned}
\end{equation}

\end{proof}

\lemmaSubmod*
\begin{proof} We derive all the three properties below.

\noindent\textit{1. Non-negativity:} $f(\P)\geq 0\ \forall \P\subseteq \S$. The proof follows along similar lines of the non-negativity proof in Lemma~\ref{lemma:properties}.

\noindent\textit{2. Monotonicity:} $f(\P_{2}) \geq f(\P_{1})\ \forall \P_{1}\subseteq \P_{2}\subseteq\S$. Akin to the monotonicity proof in Lemma~\ref{lemma:properties}, for a super-set $\P_2 = \P_1 \cup \{\bx_i\}; \bx_i \notin \P_1$, we can construct a feasible transport $\hat{\mgamma}$ using the optimal solution $\mgamma_{\P_1}$ as:
\begin{equation*}
    \hat{\mgamma}\left(\I_{\P_2}, : \right) = \left[\mgamma_{\P_1}\left(\I_{\P_1}, : \right)^\top, \frac{\bv}{\|\bv\|_1} \right]^\top,
\end{equation*}
where $\bv = k\bone/n - \mgamma_{\P_1}^\top \bone \geq \bzero$. Following similar lines to the argument in Lemma~\ref{lemma:properties}, we obtain the monotonicity result.

\noindent\textit{3. Submodularity:}
To prove submodularity of function $f(\P)$ in (\ref{eqn:proposed_3}), we first note the following result \citep[Lemma 2]{Kawano22a}. 
\begin{restatable}{lemma}{Kawano}[\citep[Lemma 2]{Kawano22a}]
Let $l,m,n$ be positive integers. Given a positive valued $m\times n$ matrix $\bS>0$, the following set function $\psi:2^m\rightarrow \RR_{+}$ is a submodular function:
\begin{equation}\label{eqn:Kawano}
    \psi(\P) = \max_{\mgamma\in \Gamma_{\leq}(\bone_{\P}/l, \bone_n/n)} \inner{\bS,\mgamma},
\end{equation}
where as defined earlier, $\Gamma_{\leq}(\bmu, \bnu)=\{\mgamma \in \RR^{m \times n} \mid \mgamma \geq 0, \mgamma\bone = \bmu, \mgamma^{\top}\bone \leq \bnu\}$ and $\bone_n$ is a $n\times 1$ vector of $1$. 
\end{restatable}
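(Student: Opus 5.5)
The plan is to reduce $\psi$ to a maximum-weight bipartite matching value and use the classical fact that such a value is submodular in the set of left vertices. Throughout I assume $|\P|\le l$. Otherwise the total row mass $|\P|/l$ exceeds the total column capacity $1$, the feasible set is empty, and the statement is vacuous. This is the regime in which the lemma is applied in (\ref{eqn:proposed_3}).

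\textbf{Step 1 (relax the row equality).} First I replace the row equality $\mgamma\bone=\bone_\P/l$ by the inequality $\mgamma\bone\le\bone_\P/l$, and show this does not change the value. Take an optimum of the relaxed problem and suppose some row $i\in\P$ is unsaturated. Since $\sum_i(\bone_\P)_i/l\le 1=\bone_n^\top\bone_n/n$, some column $j$ still has slack. Moving an $\epsilon$ of mass onto entry $(i,j)$ strictly increases the objective because $\bS_{ij}>0$. This contradicts optimality, so every optimum saturates all rows.

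\textbf{Step 2 (scale and replicate).} Multiply $\mgamma$ by $ln$. Each row $i\in\P$ then has capacity $n$, each column has capacity $l$, and all data are integers. The transportation polytope is therefore integral. Now replicate each row $i$ into $n$ left copies $R_i$ and each column $j$ into $l$ right copies. Call the resulting bipartite graph $G$, where every copy of $i$ is joined to every copy of $j$ with weight $\bS_{ij}$. Integrality and Step 1 give
\[
ln\,\psi(\P)=M\Big(\textstyle\bigcup_{i\in\P}R_i\Big),
\]
where $M(L)$ is the maximum weight of a matching in $G$ that uses only left vertices in $L$. Since $\P\mapsto\bigcup_{i\in\P}R_i$ is a union of fixed disjoint blocks, submodularity of $M$ transfers to $\psi$. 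To see this, write $A=\bigcup_{i\in\P_1}R_i$ and $B=\bigcup_{i\in\P_2}R_i$, add the copies of the new element one at a time, and telescope the diminishing-returns inequality of $M$ along the chain.

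\textbf{Step 3 (main obstacle: submodularity of $M$).} It remains to show $M(A+u)+M(B)\ge M(A)+M(B+u)$ for $A\subseteq B$ and $u\notin B$. This is the known submodularity of assignment (OXS) valuations, and I would reprove it by an exchange argument. Let $X$ be an optimal matching for $A$ and $Y$ an optimal matching for $B+u$. Since $S>0$, all edge weights are positive, so $X\cup Y$ decomposes into alternating paths and cycles. Consider the component $C$ containing $u$; it is an alternating path starting at $u$ with a $Y$-edge. I then form two matchings: $X':=X\triangle C$ on $A+u$, and $Y':=Y\triangle C$ on $B$, which drops $u$. The weights satisfy $w(X')+w(Y')=w(X)+w(Y)$, which yields the inequality.

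The delicate point is checking that $X'$ uses only left vertices of $A+u$. The path $C$ may reach a left vertex in $B\setminus A$ via a $Y$-edge. If so, I truncate $C$ just before that vertex. A short case analysis on where the path ends then shows that both $X'$ and $Y'$ are valid matchings whose weights still sum to at least $w(X)+w(Y)$. Working out this truncation carefully is the step I expect to take the most effort. Everything else is routine bookkeeping with the construction in Step 2.
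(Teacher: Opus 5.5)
The paper does not prove this statement. It imports it from Lemma~2 of \citep{Kawano22a} and uses it only through the rescaling $f=k\psi$ (with $l=k$) in the proof of Lemma~\ref{lemma:submod_3}. Your proposal therefore gives a self-contained argument where the paper has only a citation, and the route is sound. You saturate the rows (Step~1), then scale by $ln$ and use total unimodularity to identify $ln\,\psi(\P)$ with a maximum-weight matching value in the replicated complete bipartite graph (Step~2), and finally invoke submodularity of assignment (OXS) valuations in the set of left vertices.

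Your argument also makes explicit two things the citation hides. First, the domain: for $|\P|>l$ the feasible set is empty, so the stated signature $\psi:2^m\to\RR_+$ only makes sense on $\{|\P|\le l\}$. That is exactly the regime of (\ref{eqn:proposed_3}), and your transfer step only evaluates $\psi$ on sets of size at most $l$. Second, the role of $\bS>0$: it enters only in Step~1, where nonnegativity already suffices, since adding mass on a slack column never decreases the objective. The sign condition can even be dropped, because replacing $\bS$ by $\bS+c\,\bone\bone^\top$ shifts $\psi(\P)$ by the modular term $c|\P|/l$.

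Step~3 is easier than you expect, and you should not attempt the truncation. Orient $C$ from $u$. Its edges alternate $Y,X,Y,X,\dots$, so every left vertex of $C$ other than $u$ is entered through an $X$-edge, is covered by $X$, and hence lies in $A$. A left vertex of $B\setminus A$ has no $X$-edge, so it cannot lie on $C$ at all. It follows that:
\begin{itemize}
\item $X\triangle C$ uses only left vertices of $A+u$;
\item $Y\triangle C$ uses only left vertices of $B$, since it loses $u$ and gains only vertices of $A\subseteq B$;
\item both are matchings, because $C$ is an entire component of $X\triangle Y$;
\item $w(X\triangle C)+w(Y\triangle C)=w(X)+w(Y)$ holds with equality.
\end{itemize}
This matters because truncating a component would break the argument. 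At the cut vertex, one of $X\triangle C'$ or $Y\triangle C'$ would contain two edges, so your case analysis would have to discard an edge and lose weight.

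Two small loose ends remain. You must treat separately the case where $u$ is not covered by $Y$. There $M(B+u)=w(Y)\le M(B)$, and monotonicity $M(A+u)\ge M(A)$ finishes the inequality. Also, the decomposition of $X\triangle Y$ into alternating paths and cycles holds for any two matchings; it has nothing to do with positivity of $\bS$.
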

We observe that for $l=k$, (\ref{eqn:Kawano}) is equivalent to the proposed function $f(\cdot)$ defined in (\ref{eqn:proposed_3}) as follows:
\begin{itemize}
    \item For a given $\P$, let $\mgamma_1$ be an optimal solution for (\ref{eqn:Kawano}). Then, $\mgamma_2 = k\mgamma_1$ is an optimal coupling for computing $f(\P)$ in (\ref{eqn:proposed_3}). Similarly, if $\mgamma_2$ be an optimal solution for computing $f(\P)$ in (\ref{eqn:proposed_3}), then $\mgamma_1 = \mgamma_2/k$ is an optimal solution for computing $f(\P)$ in (\ref{eqn:Kawano}).
    \item Hence, for a given $\P$, $f(\P) = k\psi(\P)$
\end{itemize}
Due to the above, $\forall A, B\subseteq \S$ 
$$\psi(A\cup B) + \psi(A \cap B) \leq \psi(A) + \psi(B) \Rightarrow f(A\cup B) + f(A \cap B) \leq f(A) + f(B),$$ 
which proves that $f$ is a submodular function.

\end{proof}
\equivalencelemma*
\begin{proof}
Recall that for any set $\P$ of cardinality $k$, $f(\P) = h(\P)$. Due the monotonicity properties in Lemmas~\ref{lemma:properties} and \ref{lemma:submod_3}, we can restrict the feasible region in problems (\ref{eqn:proposed_2}) and (\ref{eqn:proposed_3}) only across sets of cardinality $k$ where they are equivalent, and have the same optimal solution.
\end{proof}
\submodulargain*
\begin{proof}
Recall that for any set $\P$ with $|\P|\leq k$, $f(\P) \geq h(\P)$. As $|\hat{\P}|=k$, we have the equality $f\left(\hat{\P}\right) = h\left(\hat{\P}\right)$. Applying the classical greedy approximation theorem in \citep{nemhauser1978analysis}, we get $h\left(\hat{\P}\right) = f\left(\hat{\P}\right) \geq \left(1-1/e\right)f\left(\P^{\ast}\right) \geq \left(1-1/e\right) OPT$.
\end{proof}

\approximateMarginalGain*
\begin{proof}
At the iteration $i+1$, let $\bx^{\ast} = \argmax\limits_{\bx\in\S\setminus\P_{i}} \hat{f}(\bx|\P_{i})$ be the point that maximizes the approximate marginal gain function (\ref{eqn:approx-incremental-gain}), which is used to update the solution to $\P_{i+1} = \P_i \cup \{\bx^{\ast}\}$. Then,
\begin{equation}
\begin{aligned}
\label{eq:fincrementbound}
f\left(\bx^{\ast}|\P_i\right) \geq \hat{f}\left(\bx^{\ast}|\P_i\right) \geq\frac{1}{k} \sum\limits_{\bx_l \in \P^{\ast} \setminus \P_i} \left[\hat{f}\left(\bx_l|\P_i \right) \right]
\end{aligned}
\end{equation}
Further, for any $\bx_l \notin \P_{i}$ let $\P = \P_i \cup \{\bx_l\}$. We derive the inequality
\begin{equation*}
\begin{aligned}
f\left(\bx_l|\P_i\right) = f\left(\P\right) - f\left(\P_{i}\right) &= \inner{\bS\left(\I_{\P},:\right), \mgamma_{\P}\left(\I_{\P},:\right)} - \inner{\bS\left(\I_{\P_{i}},:\right), \mgamma_{\P_{i}}\left(\I_{\P_{i}},:\right)} \\
&\leq \inner{\bS\left(\I_{\P},:\right), \mgamma_{\P}\left(\I_{\P},:\right)} - \inner{\bS\left(\I_{\P_{i}},:\right), \mgamma_{\P}\left(\I_{\P_{i}},:\right)} \\
&=\inner{\bS(l,:), \mgamma_{\P}(l,:)}\\
&\leq \alpha_{l,\max}.
\end{aligned}
\end{equation*}
The inequality in the second line follows from the fact that $\mgamma_{\P_i}$ is the $\argmax$ for the set $\P_i$ in (\ref{eqn:proposed_3}) and $\mgamma_{\P}\left(\I_{\P_{i}},:\right)$ ---appended with $\bzero$ for other rows--- is one of the feasible solution. Likewise, $\hat{f}\left(\bx_l|\P_i\right) \geq \alpha_{l,\min}$. Hence,
\begin{equation}
\label{eq:fhatincrement}
    \hat{f}\left(\bx_l|\P_i\right) \geq \frac{\alpha_{l,\min}} {\alpha_{l,\max}} f\left(\bx_l|\P_i\right).
\end{equation}
Pugging the inequality (\ref{eq:fhatincrement}) in (\ref{eq:fincrementbound}) we have
\begin{equation}
\begin{aligned}
\label{eq:incgaininequality}
    f\left(\bx^{\ast}|\P_i\right) \geq \frac{1}{k} \sum\limits_{\bx_l \in \P^{\ast} \setminus \P_i} \left[\frac{\alpha_{l,\min}} {\alpha_{l,\max}} f\left(\bx_l|\P_i\right) \right] \geq \frac{\alpha}{k} \sum\limits_{\bx_l \in \P^{\ast} \setminus \P_i} \left[f\left(\bx_l|\P_i\right)\right].
\end{aligned}
\end{equation}
Leveraging the submodular and monotonic properties of the function $f(\cdot)$ from Lemma~\ref{lemma:submod_3}, we obtain the inequalities
\begin{equation}
\label{eq:submodpropusage}
    \sum\limits_{\bx_l \in \P^{\ast} \setminus \P_i} f\left(\bx_l|\P_i\right) \geq f\left(\P_i \cup \left(\P^*\setminus\P_i\right)\right) - f\left(\P_i\right) \geq f\left(\P^*\right)- f\left(\P_i\right).
\end{equation}
Noting that $f\left(\bx^{\ast}|\P_i\right) = \left[f\left(\P^\ast\right) - f\left(\P_i\right)\right] - \left[f\left(\P^\ast\right) - f\left(\P_{i+1}\right)\right]$, and using (\ref{eq:submodpropusage}) in (\ref{eq:incgaininequality}) gives the recurrence relation
\begin{equation*}
\begin{aligned}
f\left(\P^\ast\right) - f\left(\P_{i+1}\right) \leq \left(1- \frac{\alpha}{k}\right) \left[f\left(\P^\ast\right) - f\left(\P_{i}\right)\right],
\end{aligned}
\end{equation*}
frow which it follows that
\begin{equation*}
    f\left(\P^\ast\right) - f\left(\hat{\P}\right) \leq \left(1- \frac{\alpha}{k}\right)^k \left[f\left(\P^\ast\right) - f\left(\emptyset\right)\right].
\end{equation*}
As $f\left(\emptyset\right) = 0$, we get the desired result namely,
\begin{equation*}
   f\left(\hat{\P}\right) \geq \left(1-\left(1- \frac{\alpha}{k}\right)^k\right)f\left(\P^\ast\right) \geq \left(1-e^{-\alpha}\right)\mathrm{OPT}.
\end{equation*}
\end{proof}

\section{Implementation Details}\label{ref:implementationDetails}

\subsection{Hardware and License}
All models are implemented in \texttt{Python 3.10} using \texttt{PyTorch 2.3.0}. Image and language training are performed on servers with Intel(R) Xeon(R) Gold 6226R CPUs (2.90GHz) and three NVIDIA RTX A6000 GPUs. For language model pretraining, we use \texttt{JAX}~\citep{jax2018github} (v0.7.2) on the same GPU infrastructure.

\subsection{Algorithm Implementation}

\textbf{Implementation of POT Objective}:  To solve the entropic-regularized partial optimal transport (OT) problem, we rely on the Python Optimal Transport (\textsc{POT}) library\footnote{\url{https://pythonot.github.io/}.} . Specifically, we use the function \texttt{ot.partial.entropicwasserstein}\footnote{{\url{https://pythonot.github.io/gen_modules/ot.partial.html\#ot.partial.entropic_partial_wasserstein}. }},

which implements the entropic-regularized variant of partial OT. This formulation allows for transporting only a fraction of the total mass between the source and target distributions along with the enforcement of inequality on the marginals.

In our implementation, the cost matrix $C$ is constructed using pairwise distances between features of the source and target prototypes, which could be Euclidean or cosine distances depending on the application. The fraction of transported mass $\tau$ and the entropic regularization $\lambda$ are treated as hyperparameters. The function \texttt{ot.partial.entropic\_wasserstein} efficiently returns both the optimal transport plan and the associated partial OT cost, which we use as the objective function $f(\cdot)$ in downstream optimization or prototype selection procedures.

A typical usage in Python is as follows:
\begin{verbatim}
import ot

# mu: source weights
# nu: target weights
# C: cost matrix
# tau: fraction of transported mass
# lambda: entropy regularization
T, cost = ot.partial.entropic_wasserstein(
    mu, nu, C,
    tau=tau,
    reg=lambda
)
\end{verbatim}

The default maximum iterations parameter for this function is set adaptively along with a stopping threshold of $1e-6$.

\begin{table}[h!]
\centering
\caption{Source Size vs Maximum Iterations.}
\begin{tabular}{|c|c|}
\hline
\textbf{Source Set Size} & \textbf{Max Iterations} \\
\hline
64-200 & 100 \\
500-1000 & 1000 \\
1000-4000 & 2000 \\
5000 & 4000 \\
\hline
\end{tabular}
\end{table}

\subsection{Finetuning experiments}

We adapt the codebase of \citep{nguyen2024mini} for all our finetuning experiments. We use Adam \citep{adam2014method} with learning rate of 1e-5, gradient accumulation steps of 64 with batch size 1. We directly use raw LoRA gradients for constructing similarity matrices for CoLM, GREATS, $\methodprop$. For GREATS \citep{wang2024greats} we randomly sample few random points from train-set as anchors at every train step. For SBERT \citep{reimers2019sentence}, we use \textsc{Bert-base-uncased} as the embedding model, and construct similarity matrix from the embeddings instead of gradients.

\subsection{Details of baselines}
\paragraph{GREATS \citep{wang2024greats}.}
GREATS formulates online batch selection as optimizing a set utility that measures the single-step reduction in validation loss under a gradient-descent update. Let $w_t$ be the current parameters, $B_t$ a candidate batch, and $S \subseteq B_t$ a subset of size $k$. The ideal utility at iteration $t$ is
\[
\U^{(t)}(S; \bz^{(\mathrm{val})}) \;:=\; \ell(\btheta_t, \bz^{(\mathrm{val})}) \;-\; \ell\!\left(\btheta_t - \eta_t \sum_{\bz\in S} \nabla\ell(\btheta_t,\bz),\, \bz^{(\mathrm{val})}\right),
\]
and selection solves $\arg\max_{S \subseteq B_t,\,|S|=k} \U^{(t)}(S; \bz^{(\mathrm{val})})$.
Since exact evaluation is intractable, GREATS applies a lower-order Taylor approximation of the validation loss around $\btheta_t$ to obtain a closed-form surrogate for the marginal gain of adding a training point $\bz$:
\[
U^{(t)}(\bz \mid S)\;\approx\; \eta_t\, \bg(\bz)^\top \bg\!\big(\bz^{(\mathrm{val})}\big) \;-\; \eta_t^2\, \bg(\bz)^\top H\!\big(\bz^{(\mathrm{val})}\big)\, \bg(\bz^\ast),
\]
where $\bg(\cdot)=\nabla\ell(\btheta_t,\cdot)$, $\boldsymbol{\H}(\cdot)$ is the Hessian of the validation loss, and $z^\ast$ denotes the current aggregate. In practice, $\boldsymbol{\H}$ is approximated (e.g., $\boldsymbol{\H} \approx I$), yielding a gradient inner-product scoring with a correction term. A greedy procedure iteratively adds the point with largest approximate marginal gain until $k$ points are selected. To avoid materializing per-example model-sized gradients, GREATS computes all required gradient inner-products in a single backpropagation via a “ghost inner-product” reparameterization that expresses layerwise gradient inner-products using already-available activations and output gradients, and merges selection with the update without extra passes.

\paragraph{CoLM \citep{nguyen2024mini}.}
CoLM casts mini-batch construction as coreset selection in gradient space for memory-efficient fine-tuning. CoLM first addresses imbalance by including \emph{all} examples from “small” sources (those with insufficient sample count in the large batch), while selecting representatives (medoids) from each “big” source. To align selection with Adam, per-example gradients are normalized by the optimizer’s exponential-moving-average statistics, yielding normalized directions proportional to $\boldsymbol{m}_t/(\epsilon+\sqrt{\boldsymbol{v}_t})$. To reduce dimensionality and denoise, CoLM estimates the gradient of the \emph{last $V$-projection} parameters (e.g., LoRA $V$) using a zeroth-order SPSA estimator with two perturbed forward passes and precached penultimate activations, then sparsifies by keeping the coordinates with largest normalized magnitudes. Within each big source, a greedy medoid selection is performed in the projected, sparsified, Adam-normalized gradient space so that the aggregated coreset gradient approximates that of the full large batch; the final mini-batch is the union of all small-source examples and the selected big-source medoids.

\paragraph{SBERT \citep{reimers2019sentence}.}
SBERT modifies BERT into siamese/triplet architectures with shared weights that encode each sentence independently. A fixed-size embedding $u\in\mathbb{R}^d$ is obtained via a pooling operation over token representations (commonly mean pooling). Training uses sentence-pair supervision: (i) a classification objective on NLI pairs, where a classifier consumes a concatenation of functions of the two embeddings (e.g., $[u;\,v;\,|u-v|]$) to predict the label; (ii) a regression objective for semantic textual similarity, where the cosine of $(u,v)$ is regressed to a gold score via MSE; and (iii) optionally, triplet loss $\max\{0,\, \cos(u_a,u_n)-\cos(u_a,u_p)+\gamma\}$ for anchor–positive–negative tuples. At inference, sentence embeddings are compared with cosine or dot-product for retrieval and clustering.

\paragraph{GradNorm \citep{katharopoulos2018not}.}
Given a large batch $\mathcal{B}$, compute per-example gradient features and rank by norm. For parameters $\theta$ and loss $\ell_i=\ell(f_{\btheta}(\bx_i),y_i)$, define raw gradient $\boldsymbol{g}_i=\nabla_{\btheta} \ell_i$. To align with Adam, each coordinate is normalized as
\[\Tilde{\boldsymbol{g}_i} = \frac{\boldsymbol{m}_t}{\sqrt{\boldsymbol{v}_t}+\epsilon}\odot \boldsymbol{g}_i\]
where $(\boldsymbol{m}_t,\boldsymbol{v}_t)$ are the exponential moving averages of first and second moments. Instead of $\ell_2$ distance, GradNorm computes similarity in this normalized gradient space using cosine:
\[s_{ij} \;=\; \frac{\langle \Tilde{\boldsymbol{g}_i}, \Tilde{\boldsymbol{g}_j}\rangle}{\|\Tilde{ \boldsymbol{g}_i}\|_2 \,\|\Tilde{\boldsymbol{g}_j}\|_2}.\]
Each example is scored by its (smoothed) gradient norm $\|\Tilde{\boldsymbol{g}_i}\|_2$, and the top-$k$ are selected. This yields a subset whose update direction emphasizes examples with largest effective gradient magnitude under the optimizer’s scaling.

\paragraph{MaxLoss \citep{shalev2016minimizing}.}
Each example $i\in\mathcal{B}$ is scored by its instantaneous loss \(s_i = \ell(f_{\btheta}(x_i),y_i)\)
The $k$ highest-loss items are selected to form the training subset. This “hard-example” criterion requires only forward passes and captures points where the current model performs worst. Optionally, per-example losses can be combined with Adam-smoothed gradient norms to provide importance weights during optimization.

\subsection{Calculation of gradient features}
\label{sec:grad-feats}

Let $\bz_i$ denote an example, $\ell(\btheta; \bz_i)$ the training loss, and let $\bW^{(L)}_{V,\text{LoRA}}$ be the parameter tensor of the \emph{last} transformer block's value projection adapted by LoRA.\footnote{``Last'' refers to the topmost transformer block in the forward stack.} We flatten $\bW^{(L)}_{V,\text{LoRA}}$ to a vector $v \in \mathbb{R}^{d_{\mathrm{vp}}}$. At iteration $t$ and current parameters $\btheta_t$, we compute per-example gradients as
\[
\bg^{\mathrm{vp}}_{i,t} \;\;:=\;\; \nabla_{v}\,\ell(\btheta_t; \bz_i) \;\in\; \mathbb{R}^{d_{\mathrm{vp}}},
\]
where the gradients are restricted to the LoRA-adapted last $V$-projection. Unlike the zeroth-order MeZO estimator used in \citep{nguyen2024mini}, these gradients are obtained directly by backpropagation.

\paragraph{Adam-aligned normalization.}
To align with the update rule of Adam, we normalize each per-example gradient using the optimizer's moment statistics. Let $m_t, v_t \in \mathbb{R}^{d_{\mathrm{vp}}}$ denote the first and second moment accumulators,
\begin{equation}
\boldsymbol{m}_t \;=\; \beta_1 \boldsymbol{m}_{t-1} + (1-\beta_1)\,\bar \bg_t,\qquad
v_t \;=\; \beta_2 v_{t-1} + (1-\beta_2)\,\bar \bg_t^2,
\label{eq:adam-moments}
\end{equation}
with $\beta_1,\beta_2 \in (0,1)$, $\epsilon>0$, and $\bar \bg_t$ the average gradient over the current pool. The normalized gradient feature for an example $\bz_i$ is then
\begin{equation}
\boldsymbol{\phi}_{i,t} \;:=\; \frac{\boldsymbol{g}^{\mathrm{vp}}_{i,t}}{\epsilon+\sqrt{\boldsymbol{v}_t}} \;\in\; \mathbb{R}^{d_{\mathrm{vp}}},
\label{eq:adam-norm}
\end{equation}
where the division is elementwise. These features are stored and subsequently used in similarity computations.

\subsection{Pretraining Experiments}
\label{supp:pretraining}
We implement the pretraining setup using \texttt{JAX}~\citep{jax2018github}, primarily due to its just-in-time (JIT) compilation framework and empirical $2{-}3\times$ training speedups over \texttt{PyTorch}. For the base architecture, we pretrain a \textsc{LLaMA-3} model consisting of approximately $500$M parameters on the OpenWebText corpus\footnote{\url{https://huggingface.co/datasets/Skylion007/openwebtext}} \citep{radford2019language}. The training is carried out for $20$k training steps with an effective batch size of $64$ sequences, each of length $512$. Optimization is performed using the Adam algorithm with a fixed learning rate of $1 \times 10^{-4}$, without auxiliary learning rate schedules.  

For all methods involving prototype-based subset selection, we begin with a candidate batch of $128$ sequences and select $50\%$ prototypes, resulting in an effective batch size of $64$ sequences used for parameter updates. Subset selection is performed on a per-batch basis, without leveraging history across iterations.  

In the case of $\methodprop$, the underlying optimal transport (OT) problem is solved using $20$ Sinkhorn iterations with entropic regularization strength set to $1 \times 10^{-2}$. The similarity (cost) matrices are constructed directly from the last-layer gradients of the model, and no additional low-pass filtering, smoothing, or adaptive reweighting is applied. Throughout, cosine similarity is used as the base kernel to define pairwise affinities.  

The dataset is partitioned into a $95\%$ training split and a $5\%$ validation split.

\section{Experimental Setup Details}\label{ref:ExperimentalSetupDetails}

\subsection{Model Details}

\paragraph{\textsc{Phi-2} (2.7B).} 
\textsc{Phi-2} is a 2.7B parameter model trained with an emphasis on mathematical and logical reasoning, derived from curated synthetic corpora and filtered web data. It supports a context length of 2{,}048 tokens. In our fine-tuning setup, we apply LoRA adapters (rank 128, $\alpha=512$, dropout 0.05) to all attention projection matrices (QKV) and the two feed-forward layers.

\paragraph{\textsc{Phi-3} family.} 
We experiment primarily with the 3.8B variant (\textsc{Phi-3 Mini}), though the broader family also includes 7B and 14B models. The \textsc{Phi-3} series continues the focus on compact models optimized for reasoning tasks, with available context lengths of 4K and 128K tokens depending on variant. Similar to \textsc{Phi-2}, we apply LoRA adapters to QKV projections and feed-forward layers during fine-tuning.

\paragraph{\textsc{StableLM Zephyr 3B}.} 
\textsc{StableLM Zephyr 3B} is a 3B parameter instruction-tuned model designed as a general-purpose assistant, without a specific emphasis on mathematical reasoning. It supports input sequences up to 4K tokens. For LoRA fine-tuning, we insert adapters into all attention projection matrices (QKVO).

\subsection{Datasets}
For image settings we do on MNIST, CIFAR10, CIFAR100
as well as synthetic distributions. 

For the mathematical reasoning experiments, we fine-tune on the \textbf{\textsc{MathInstruct}} dataset \citep{yue2023mammoth}, which contains roughly 260K instruction–response pairs. The data is aggregated from 14 open-source mathematics corpora, covering diverse subfields and spanning a broad range of difficulty levels. The composition of MathInstruct is highly imbalanced—the largest constituent source is nearly 300 times larger than the smallest—and the detailed distribution across sources is provided in Figure~4a of the Appendix. Prior work has shown that fine-tuning on MathInstruct leads to state-of-the-art results on multiple standardized mathematical reasoning benchmarks.

For classification experiments, we additionally use three datasets from the \textsc{SuperGLUE} benchmark \citep{wang2019superglue}: SST-2, CB, and MultiRC. For CB, we retain the complete training set of  250 labeled examples. For SST-2 and MultiRC, we randomly subsample 3K  examples each for training. 

\subsection{Training Details}

Following the configuration in \citet{yue2023mammoth}, we employ a learning rate of $2 \times 10^{-5}$ with a cosine decay scheduler. The learning rate is linearly warmed up from 0 to $2 \times 10^{-5}$ during the first 3\% of training steps and subsequently decays to 0 following a cosine schedule. We fix the maximum sequence length to 512 tokens. Unless otherwise stated, all experiments on \textsc{MathInstruct} are trained for the equivalent of 1K gradient update steps. To enable larger effective batch sizes, we use gradient accumulation with an accumulation factor of 8.

For parameter-efficient fine-tuning, we adopt LoRA \citep{hu2022lora} with rank 128, scaling parameter $\alpha=512$, and a dropout rate of 0.05. On \textsc{Phi} models, LoRA adapters are applied to all attention projection matrices (QKV) as well as the two feed-forward layers. On \textsc{Zephyr}, we apply LoRA to all attention projections (QKVO). All experiments are conducted on 3 NVIDIA A6000 GPUs, and each configuration is repeated three times to account for variance in training.

\subsection{Evaluation Datasets and Metrics}

Following \citet{yue2023mammoth}, we evaluate our models on a diverse suite of mathematical reasoning benchmarks spanning both in-domain and out-of-domain distributions. 

\paragraph{In-domain benchmarks.} The in-domain evaluation covers three widely used datasets: \textsc{GSM8K} \citep{}, \textsc{MATH} \citep{hendrycks2021math}, and \textsc{NumGLUE} \citep{mishra2022numglue}. \textsc{GSM8K} focuses on grade-school arithmetic word problems, \textsc{MATH} contains high-school competition-style problems across 29 mathematical domains, and \textsc{NumGLUE} extends natural language understanding tasks with quantitative reasoning components.

\paragraph{Out-of-domain benchmarks.} To test generalization beyond the training distribution, we additionally include \textsc{SVAMP}, the \textsc{Mathematics} dataset, and \textsc{SimulEq}. These datasets emphasize robustness across algebraic manipulations, probability and statistics, number theory, and systems of equations, while also incorporating instances requiring multi-step logical reasoning and commonsense knowledge.

\subsection{Evaluation Setup}

\textbf{1NN Classifier Setup}\label{app:image-exp_details}
Let $\X$ and $\Y$ denote the source and target datasets, respectively, with potentially differing class distributions, and let $\P \subseteq \X$ be a candidate representative set for the target dataset $\Y$. The quality of $\P$ is assessed using a 1-nearest neighbour (1-NN) classifier parameterized by the elements of $\P$. Each instance $y \in Y$ is assigned the label of its nearest prototype in $\P$, where the ground-truth class labels of the elements in $\P$ are assumed to be available during this evaluation. The resulting classification accuracy serves as the evaluation metric for comparing prototype selection algorithms.

\textbf{LLM-Finetuning:} All questions are posed in an open-ended format. We adopt the standard \textit{exact match} metric, where a prediction is considered correct only if it exactly matches the gold reference solution. Evaluation is conducted under the $0$-shot setting with a maximum decoding context length of 2048 tokens. We use the Program-of-Thought (PoT) prompting strategy as the default, and fall back to Chain-of-Thought (CoT) prompting when PoT is not applicable, following \citet{yue2023mammoth}.

\section{Additional Experimental Results}
\label{supp:additional}

\subsection{Additional Ablations on Entropic Regularization}\label{supp:ablation_entropy}

To examine the influence of the entropic regularization parameter on the approximation error associated with the computation of the approximate marginal gain in Eq.~(\ref{eqn:approx-incremental-gain}), we conduct an ablation study under a controlled synthetic setting derived from the CIFAR dataset. Specifically, we uniformly sample 2000 training instances from CIFAR and treat them identically as both the source and target sets, denoted by $\S$ and $\T$, respectively. As illustrated in Fig.~\ref{fig:entropic_reg_ablation}, increasing the value of $\lambda$ progressively reduces the approximation error toward zero, with the ratio between the approximate and exact marginal gains converging to unity and exhibiting low variance.

\begin{figure}[h!]
    \centering
    \begin{subfigure}[t]{0.23\textwidth}
        \centering
        \includegraphics[width=\linewidth]{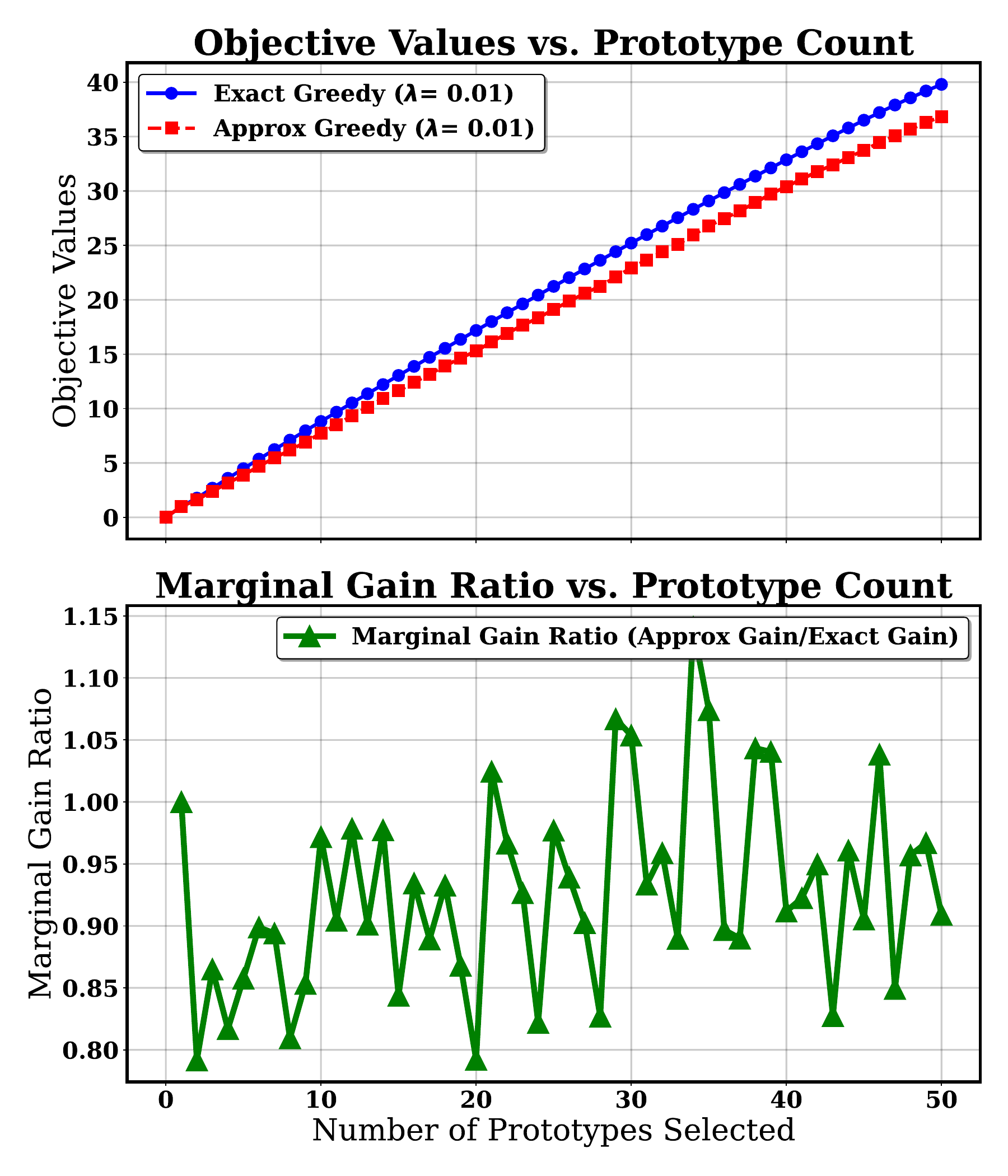}
        \caption{Entropic Reg. ($\lambda = 0.01$)}
    \end{subfigure}
    \hfill
    \begin{subfigure}[t]{0.23\textwidth}
        \centering
        \includegraphics[width=\linewidth]{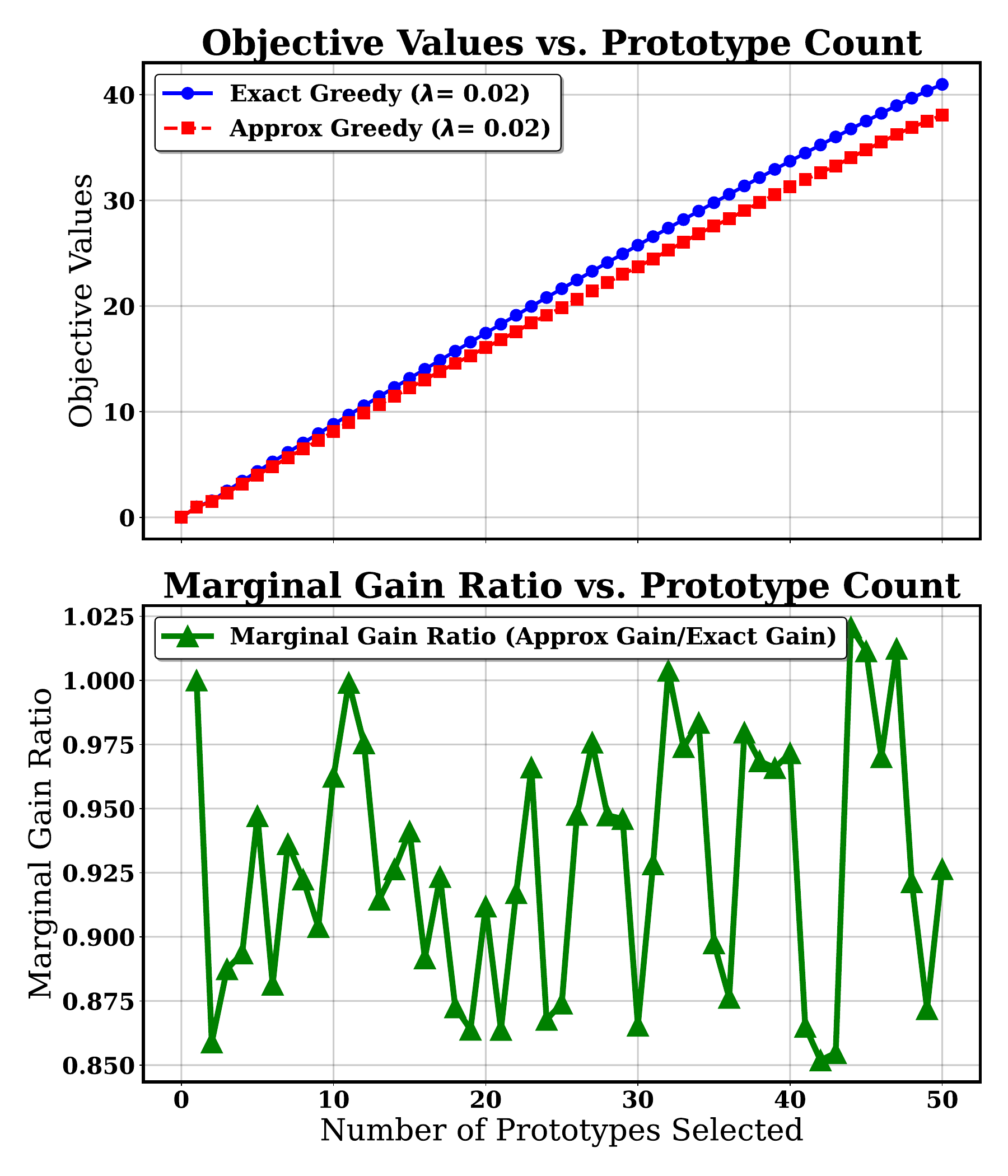}
        \caption{Entropic Reg. ($\lambda = 0.02$)}
    \end{subfigure}
    \hfill
    \begin{subfigure}[t]{0.23\textwidth}
        \centering
        \includegraphics[width=\linewidth]{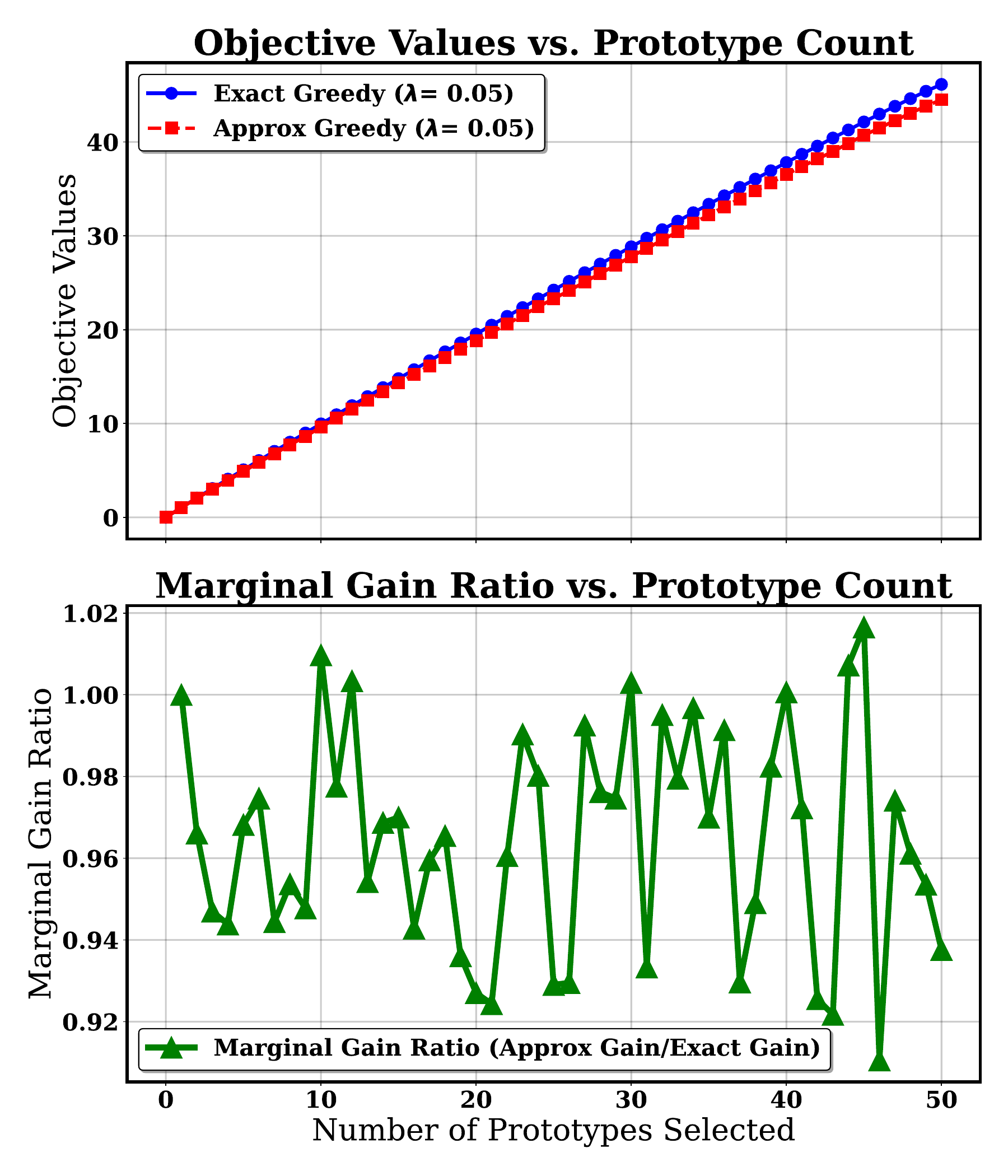}
        \caption{Entropic Reg. ($\lambda = 0.05$)}
    \end{subfigure}
    \hfill
    \begin{subfigure}[t]{0.23\textwidth}
        \centering
        \includegraphics[width=\linewidth]{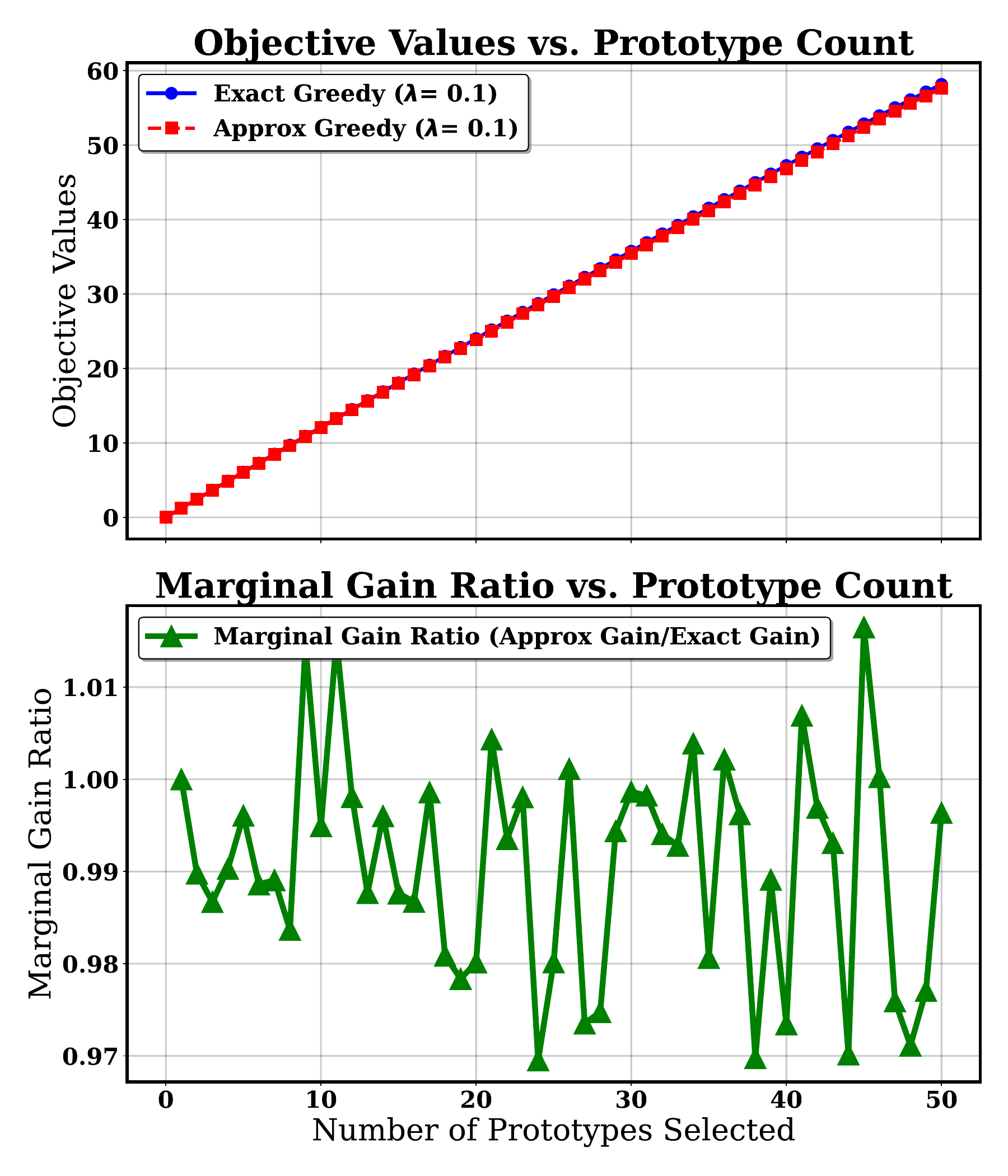}
        \caption{Entropic Reg. ($\lambda = 0.1$)}
    \end{subfigure}

    \caption{Approximate vs. actual marginal gain as a function of the entropic regularization parameter $\lambda$. Increasing $\lambda$ reduces the approximation error, with the marginal gain ratio converging to 1 and exhibiting low variance for larger $\lambda$ values.}
    \label{fig:entropic_reg_ablation}
\end{figure}

\subsection{Additional Results on $\methodprop$-PB}
\subsubsection{Experiments on batch size=256 for  $\methodprop$-PB}
\begin{figure}[!h]
  \centering
  \begin{minipage}[t]{0.58\linewidth}
    \vspace{0pt}
    \small
    We experiment with $\methodprop$-PB batch size of 256 for selection instead of source-wise prototype selection. We fine-tune \textsc{Phi-3} for 2048 steps (selection batch size 256) with prototype percentage 0.25, resulting in an effective batch size of 32. Table \ref{table:supp_full} shows that $\methodprop$ continues to be effective even in the full-batch setting.

    We compare against GREATS, GradNorm, and CoLM under the same training budget and report validation log-perplexity trajectories over optimization steps. Figure \ref{fig:Supp_bs256} shows that CoLM's validation perplexity degrades as batch size increases, while $\methodprop$-PB remains stable and continues to improve perplexity throughout training.

    We report additional results on $\methodprop$-PB on \textbf{\textsc{MathInstruct}} dataset.
  \end{minipage}\hfill
  \begin{minipage}[t]{0.40\linewidth}
    \vspace{0pt}
    \centering    \input{iclr2026/plots/plot_bs256}
\refstepcounter{figure}\label{fig:Supp_bs256}
    \vspace{0.25em}
    \parbox{\linewidth}{\raggedleft\footnotesize Figure~\thefigure: Validation perplexity when batch size=256 and prototype ratio is 25\%.\par}
  \end{minipage}
  \vspace{-0.5em}
\end{figure}

\subsubsection{Experiments on full-batch prototype selection}
For this variant, we experiment full-batch selection instead of source wise prototype selection. We finetune \textsc{Phi-3} for 2048 steps, selection batch size of 128, with prototype percentage as 0.5, resulting in a effective batch of 64. Table \ref{table:supp_full} shows that \methodprop continues to be effective even in full batch setting.

\begin{table}[h!]
    \centering
    \small
    \setlength{\tabcolsep}{6pt}
    \renewcommand{\arraystretch}{1.2}
    \caption{Comparison of $\methodprop$-PB performance for $|\B|=256$, following \citet{yue2023mammoth} on \textbf{\textsc{MathInstruct}} dataset.}
    \resizebox{0.92\textwidth}{!}{
    \begin{tabular}{l S S S S S S S}
        \toprule
        \textbf{Method} & \textbf{Avg} & \multicolumn{3}{c}{\textbf{In-domain}} & \multicolumn{3}{c}{\textbf{Out-of-domain}} \\
        \cmidrule(lr){3-5} \cmidrule(lr){6-8}
        & & \textbf{GSM8K} & \textbf{MATH} & \textbf{NumGLUE} & \textbf{SVAMP} & \textbf{Mathematics} & \textbf{SimulEq} \\
        \midrule
        COLM \citep{nguyen2024mini}     
            & 59.86 
            & 74.13 & 36.70 & 63.14 
            & 86.50 & 36.40 & 62.30 \\
        GREATS \citep{wang2024greats}   
            & 60.79 
            & 78.62 & 37.90 & 63.90 
            & 85.50 & 36.90 & 61.90 \\
        \textbf{$\methodprop$-PB (Ours)} 
            & \textbf{61.34} 
            & 78.20 & 37.60 & 66.03 
            & 84.90 & 37.70 & 63.60 \\
        \bottomrule
    \end{tabular}}
    \label{table:supp_full}
\end{table}

\begin{figure*}[htp!]
    \centering

    \small
    \setlength{\tabcolsep}{6pt}
    \renewcommand{\arraystretch}{1.2}
    \captionof{table}{Per-batch performance across in-domain and out-of-domain datasets for \textsc{Phi-3} on \textbf{\textsc{MathInstruct}}, batch size $|\B|=128$ and total budget $k=64$. We compare FT, GradNorm, COLM, GREATS, and $\methodprop$ (Ours).}
    \label{table:main_batchonly}
    \resizebox{0.90\textwidth}{!}{%
    \begin{tabular}{l c c c | c | c c c | c | c}
        \toprule
        \textbf{Method} & \multicolumn{4}{c}{\textbf{In-domain}} & \multicolumn{4}{c}{\textbf{Out-of-domain}} & \textbf{Avg-All} \\
        \cmidrule(lr){2-5} \cmidrule(lr){6-9}
        & \textbf{GSM8K} & \textbf{MATH} & \textbf{NumGLUE} & \textbf{Avg} 
        & \textbf{SVAMP} & \textbf{Mathematics} & \textbf{SimulEq} & \textbf{Avg} 
        &  \\
        \midrule
        FT (bs=64) & 76.72 & 36.54 & 62.57 & 58.61
                   & 85.10 & 33.30 & 62.78 & 60.39
                   & 59.50 \\
        GradNorm & 75.40 & 35.03 & 64.10 & 58.18
                  & 84.17 & 36.50 & 65.70 & 62.12
                  & 60.15 \\
        COLM & 76.36 & 36.42 & 64.10 & 58.96
             & 85.30 & 37.40 & 63.60 & 62.10
             & 60.53 \\
        GREATS & 77.80 & 37.28 & 64.40 & 59.83
               & 85.00 & 38.00 & 62.06 & 61.69
               & 60.76 \\
        $\methodprop$ (Ours) & 78.16 & 37.76 & 66.02 & \textbf{60.65}
             & 85.70 & 37.20 & 68.28 & \textbf{63.73}
             & \textbf{62.19} \\
        \bottomrule
    \end{tabular}}

    \vspace{1.2em}

    \begin{minipage}[t]{\textwidth}
        \centering
        \scalebox{0.6}{\input{iclr2026/plots/plot_bs128_full}}
    \end{minipage}

    \caption{Top: Per-batch performance of FT, GradNorm, COLM, GREATS, and $\methodprop$ on \textsc{MathInstruct} with $|\B|=128$, $k=64$.  
    Bottom: Validation perplexity when $|\B|=128$, subset ratio 50\%, and prototype selection is batch-wise.}
    \label{fig:main_combo_batchonly}
\end{figure*}

\subsection{Additional Results on Zephyr-3B}
Here, we report additional results on \textsc{Zephyr-3B} on \textsc{MathInstruct} Dataset.
    
\begin{table*}[h!]
    \centering
    \small
    \setlength{\tabcolsep}{6pt}
    \renewcommand{\arraystretch}{1.2}
    \caption{Performance across in-domain and out-of-domain datasets for \textsc{Zephyr-3B} on \textbf{\textsc{MathInstruct}}, batch size $|\B|=128$ and total budget $k=32$. Results are reported under two configurations for \textbf{all} baselines: \emph{source-wise} (left of ``/'') and \emph{batch-wise} (right of ``/'').}
    \label{table:main}
    \resizebox{0.98\textwidth}{!}{%
    \begin{tabular}{l c c c | c | c c c | c | c}
        \toprule
        \textbf{Method} & \multicolumn{4}{c}{\textbf{In-domain}} & \multicolumn{4}{c}{\textbf{Out-of-domain}} & \textbf{Avg-All} \\
        \cmidrule(lr){2-5} \cmidrule(lr){6-9}
        & \textbf{GSM8K} & \textbf{MATH} & \textbf{NumGLUE} & \textbf{Avg}
        & \textbf{SVAMP} & \textbf{Mathematics} & \textbf{SimulEq} & \textbf{Avg}
        &  \\
        \midrule
        FT & 52.08 & 16.14 & 40.10 & 36.11
           & 54.70 & 15.60 & 22.20 & 30.83
           & 33.47 \\
        GradNorm & 52.3 \,/\, 49.8 & 16.5 \,/\, 15.7 & 40.0 \,/\, 39.9 & 35.70
           & 53.5 \,/\, 53.1 & 15.5 \,/\, 15.4 & 22.5 \,/\, 22.7 & 30.45
           & 33.07 \\
        SBERT & 51.3 \,/\, 21.7 & 14.6 \,/\, 14.01 & 41.6 \,/\, 27.44 & 28.44
           & 56.3 \,/\, 34.5 & 16.2 \,/\, 13.6 & 21.98 \,/\, 13.8 & 26.06
           & 27.25 \\
        COLM & 50.6 \,/\, 49.88 & \textbf{21.42} \,/\, 17.4 & 40.15 \,/\, 39.9 & 36.56
           & \textbf{55.8} \,/\, \textbf{54.1} & 16.7 \,/\, 15.4 & 22.17 \,/\, 21.01 & 30.86
           & 33.71 \\
        GREATS & 52.8 \,/\, 50.6 & 19.01 \,/\, 15.9 & 40.8 \,/\, 38.8 & 36.32
           & 54.5 \,/\, \textbf{54.6} & \textbf{17.1} \,/\, 13.9 & 21.98 \,/\, 21.78 & 30.64
           & 33.48 \\
        $\methodprop$ (Ours) & \textbf{54.4} \,/\, \textbf{54.89} & 20.4 \,/\, \textbf{19.6} & \textbf{41.3} \,/\, \textbf{40.4} & \textbf{38.50}
           & 54.3 \,/\, 54.1 & 16.5 \,/\, \textbf{15.5} & \textbf{24.7} \,/\, \textbf{22.95} & \textbf{31.34}
           & \textbf{34.92} \\
        \bottomrule
    \end{tabular}}
\end{table*}

\begin{figure}[htbp]
 \centering
 \begin{minipage}[t]{\textwidth}
 \centering \scalebox{0.6}{\input{iclr2026/plots/plot_bs128_32_zeph}}
 \end{minipage}
 \caption{Zephyr-3b Validation perplexity when bs=128 and subset ratio 25\% and prototype selection is batch wise.}
\end{figure}
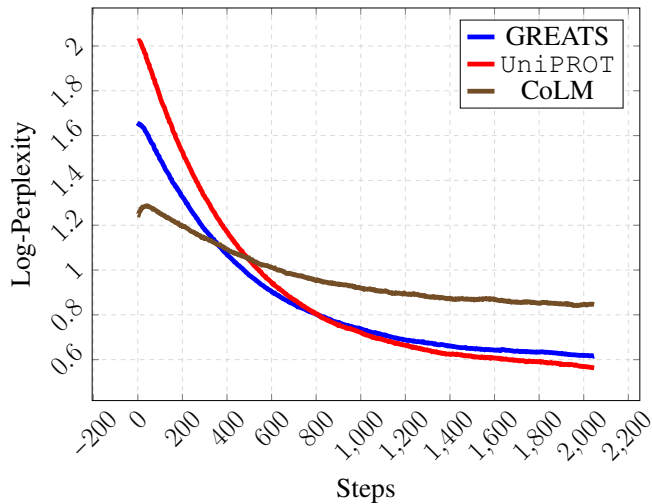


\subsection{Additional Results on \textsc{Phi-2}}

Here, we report additional results on \textsc{Phi-2} on \textsc{MathInstruct} Dataset.

\begin{table*}[h!]
    \centering
    \small
    \setlength{\tabcolsep}{6pt}
    \renewcommand{\arraystretch}{1.2}
    \caption{Performance across in-domain and out-of-domain datasets for \textsc{Phi-2} on \textbf{\textsc{MathInstruct}}, batch size $|\B|=128$ and total budget $k=32$. Results are reported under two configurations for \textbf{all} baselines: \emph{source-wise} (left of ``/'') and \emph{batch-wise} (right of ``/'').}
    \label{table:main}
    \resizebox{0.98\textwidth}{!}{%
    \begin{tabular}{l c c c | c | c c c | c | c}
        \toprule
        \textbf{Method} & \multicolumn{4}{c}{\textbf{In-domain}} & \multicolumn{4}{c}{\textbf{Out-of-domain}} & \textbf{Avg-All} \\
        \cmidrule(lr){2-5} \cmidrule(lr){6-9}
        & \textbf{GSM8K} & \textbf{MATH} & \textbf{NumGLUE} & \textbf{Avg}
        & \textbf{SVAMP} & \textbf{Mathematics} & \textbf{SimulEq} & \textbf{Avg}
        &  \\
        \midrule
        FT & 59.28 & 14.80 & 47.88 & 40.65
           & \textbf{62.80} & 19.40 & 36.70 & 39.63
           & 40.14 \\
        GradNorm & 58.6 \,/\, 60.3 & 15.1 \,/\, 14.6 & 51.3 \,/\, 50.6 & 41.75
           & 61 \,/\, 59 & 20.2 \,/\, \textbf{20.4} & 40.6 \,/\, 38.13 & 39.89
           & 40.82 \\
        SBERT & 58.4 \,/\, 45.5 & 12.43 \,/\, 9.83 & 52.01 \,/\, 48.4 & 37.76
           & 61 \,/\, \textbf{64.2} & 18.1 \,/\, 16.6 & 36.5 \,/\, 33.8 & 38.37
           & 38.06 \\
        COLM & 57.01 \,/\, 58.8 & 14.86 \,/\, 14.6 & 51.05 \,/\, 50.8 & 41.19
           & 60 \,/\, 63.9 & 18.9 \,/\, 19.8 & 32.4 \,/\, 31.9 & 37.82
           & 39.50 \\
        GREATS & 59.5 \,/\, 58.4 & 15.4 \,/\, 15.6 & 51.2 \,/\, \textbf{54.1} & 42.37
           & \textbf{61.3} \,/\, 61.5 & \textbf{21.1} \,/\, 20.7 & 37.35 \,/\, 35.01 & 39.49
           & 40.93 \\
        $\methodprop$ (Ours) & \textbf{60.6} \,/\, 58.8 & \textbf{16.8} \,/\, \textbf{16.7} & \textbf{51.61} \,/\, 51.7 & \textbf{42.70}
           & 61 \,/\, 62 & 20.9 \,/\, 19.8 & \textbf{41.4} \,/\, \textbf{35.6} & \textbf{40.12}
           & \textbf{41.41} \\
        \bottomrule
    \end{tabular}}
\end{table*}

\begin{table}[h!]
    \centering
    \small
    \setlength{\tabcolsep}{6pt}
    \renewcommand{\arraystretch}{1.2}
    \caption{Comparison of \textsc{Phi-2} \textbf{per-source} selection performance across in-domain and out-of-domain datasets, following \citet{yue2023mammoth} on \textbf{\textsc{MathInstruct}} dataset, batch size $|\B|=256$ and total budget $k=32$.}
    \resizebox{0.92\textwidth}{!}{
    \begin{tabular}{l S S S S S S S}
        \toprule
        \textbf{Method} & \textbf{Avg} & \multicolumn{3}{c}{\textbf{In-domain}} & \multicolumn{3}{c}{\textbf{Out-of-domain}} \\
        \cmidrule(lr){3-5} \cmidrule(lr){6-8}
        & & \textbf{GSM8K} & \textbf{MATH} & \textbf{NumGLUE} & \textbf{SVAMP} & \textbf{Mathematics} & \textbf{SimulEq} \\
        \midrule
        COLM-PS \citep{nguyen2024mini}     
            & 43.14 
            & 61.03 & 26.67 & 52.65 
            & 60.45 & 21.04 & 37.00 \\
        GREATS-PS \citep{wang2024greats}   
            & 43.73 
            & 61.92 & 27.21 & 52.40 
            & 62.00 & 20.80 & 38.03 \\
        \textbf{$\methodprop$-PS (Ours)} 
            & \textbf{44.66} 
            & 62.40 & 27.63 & 53.97 
            & 64.72 & 20.30 & 38.91 \\
        \bottomrule
    \end{tabular}}
\end{table}

\section{Additional Related Works}

\paragraph{OT for representation matching without selection.}
\citet{wang2025pot} employ optimal transport (OT) to align class activation map (CAM) clusters with corresponding class prototypes. However, their use of OT is limited to \emph{distribution matching} between pixel-level feature distributions and pre-defined cluster prototypes, rather than for subset or prototype selection. In particular, the prototypes are constructed as averages of pixel features assigned to each cluster (cf.~Eq.~(2) in \citealp{wang2025pot}), and are not selected from a candidate pool under any combinatorial or cardinality constraint. Thus, no subset selection or prototype selection problem is formulated or solved in their framework.

\citet{zhang2024hyperspherical} also leverage OT to compute token-to-prototype assignments by matching representations to their corresponding ground-truth prototypes. Unlike subset selection approaches, their formulation retains the full set of prototypes during optimization and uses entropy-regularized OT to obtain \emph{soft assignments}. As a result, their method does not impose any sparsity, selection, or cardinality constraints on the set of active prototypes. Consequently, OT serves purely as an assignment mechanism rather than a tool for selecting a representative subset.

\paragraph{OT in coreset selection and dataset condensation.}
OT has also been explored in coreset selection and dataset condensation settings, where the goal is to construct a compact dataset that approximates the full data distribution. For example, Wasserstein-based coreset construction and dataset distillation methods~\citep{zhao2021dataset,nguyen2021dataset,liu2025dataset} leverage OT distances to measure distributional fidelity between synthetic and real data. However, these approaches typically \emph{learn synthetic samples} or optimize continuous representations, rather than selecting a subset from a given discrete pool. As a result, they differ fundamentally from subset selection problems with combinatorial constraints.

\paragraph{OT barycenters and prototype learning.}
Another related direction involves OT barycenters, where prototypes are obtained as Wasserstein means of distributions~\citep{cuturi2014fast,agueh2011barycenters}. These methods compute representative prototypes in a continuous space by averaging distributions under OT geometry. While effective for summarization, they do not select prototypes from an existing dataset and thus bypass the combinatorial nature of subset selection. Similarly, OT-based clustering methods rely on iterative refinement of centroids rather than discrete selection from a candidate set \citep{ho2017multilevel}.

In contrast to the above works, our formulation uses \emph{partial optimal transport} POT \emph{as a selection principle}, where the objective explicitly optimizes over a constrained subset of prototypes under uniform weighting and cardinality restrictions. This leads to a fundamentally different regime in which OT directly governs subset selection, rather than serving solely as a matching or assignment operator.

\paragraph{Distinction to ~\citep{hong2024diversified}}
~\citep{hong2024diversified} study mini-batch selection for efficient training of deep networks by selecting a subset of points from a given batch that maximizes group-wise orthogonalized representativeness. This objective is fundamentally different from our optimal transport (OT) formulation, which seeks to minimize the discrepancy between a prototypical distribution and a target distribution. In particular, their method operates \emph{within} a single batch and does not incorporate any notion of a separate target set. Consequently, it is not directly applicable to settings where one aims to select a subset $\P \subseteq \S$ (from a source set $\S$) such that it matches a target set $\T$.

Algorithmically, their DivBS procedure performs greedy selection by iteratively choosing samples that maximize projection onto the current residual. This results in a weighted expansion of the full gradient, where each selected sample is assigned an importance coefficient based on its marginal contribution. Crucially, these coefficients are \emph{non-uniform}: early selections capture dominant components (e.g., the mean), while subsequent selections explain diminishing residuals. Although the final mini-batch is treated uniformly during backpropagation (via averaging), the selection mechanism itself is inherently non-uniform and does not enforce equal importance among selected samples.

\paragraph{Our approach (UniPROT).}
In contrast, our method explicitly incorporates a \emph{uniform weight constraint} into the optimization objective. We select prototypes by solving a constrained OT problem that minimizes the transport cost to the target distribution under the requirement that each prototype carries equal mass. This leads to a fundamentally different selection principle that ensures equal-importance representation while aligning the selected subset with the target distribution.

\section{Broader Impact}\label{ref:BroaderImpact}
This work develops a principled framework for prototype selection that aims to improve fairness and robustness in settings with distributional imbalance. By explicitly enforcing uniform weighting, \methodprop can reduce systematic under-representation of minority classes, which has positive implications for equitable model performance across demographic or domain groups. At the same time, more efficient subset selection methods could also be leveraged to accelerate training of harmful or biased systems if applied without safeguards. We believe that open discussion of both the benefits and limitations of prototype selection methods is important to ensure they are deployed responsibly, and that continued transparency in this line of work will help maximize positive societal impact.

\section{Code}\label{ref:Code}

We release our code on \href{https://github.com/efficiency-learning/UniPROT
}{GitHub}.

%% file: iclr2026/plots/plot_bs256.tex
\begin{tikzpicture}
    \begin{axis}[
        width=0.92\linewidth,
        height=0.70\linewidth,
        xlabel={Steps},
        ylabel={Log-Perplexity},
        grid=both,
        grid style={dashed, gray!30},
        font=\normalsize,
        label style={font=\normalsize},
        tick label style={font=\normalsize},
        title style={font=\normalsize},
        legend columns=2,
        legend style={
            font=\scriptsize,
            at={(0.5,1.03)},
            anchor=south,
            cells={align=left},
            /tikz/every even column/.append style={column sep=1em},
            draw=none,
            fill=white,
            fill opacity=0.85,
            text opacity=1
        }
    ]
        \addplot+[mark=none, line width=\plotwidth] table [x index=0, y index=1] {iclr2026/data/greats_bs256.dat};
        \addlegendentry{GREATS}
        \addplot+[mark=none, line width=\plotwidth] table [x index=0, y index=1] {iclr2026/data/uniprot_bs256.dat};
        \addlegendentry{$\methodprop$-PB}
        \addplot+[mark=none, line width=\plotwidth] table [x index=0, y index=1] {iclr2026/data/colm_bs256.dat};
        \addlegendentry{CoLM}
        \addplot+[mark=none, line width=\plotwidth] table [x index=0, y index=1] {iclr2026/data/gradnorm_bs256.dat};
        \addlegendentry{GradNorm}
    \end{axis}
\end{tikzpicture}

%% file: iclr2026/plots/plot_bs128_full.tex
\begin{tikzpicture}
    \begin{axis}[
        width=0.8\textwidth,
        height=0.6\textwidth,
        xlabel={Steps},
        ylabel={Log-Perplexity},
        grid=both,
        grid style={dashed, gray!30},
        legend pos=north east,
        font=\LARGE, 
        label style={font=\LARGE}, 
        tick label style={font=\LARGE,rotate=45}, 
        title style={font=\LARGE}, 
        legend style={font=\LARGE} 
    ]
        \addplot+[mark=none, line width=\plotwidth] table [x index=0, y index=1] {iclr2026/data/full_greats.dat};
        \addlegendentry{GREATS}
        \addplot+[mark=none, line width=\plotwidth] table [x index=0, y index=1] {iclr2026/data/full_uniprot.dat};
        \addlegendentry{$\methodprop$-PB}
        \addplot+[mark=none, line width=\plotwidth] table [x index=0, y index=1] {iclr2026/data/full_colm.dat};
        \addlegendentry{CoLM}
        \addplot+[mark=none, line width=\plotwidth] table [x index=0, y index=1] {iclr2026/data/full_gradnorm.dat};
        \addlegendentry{GradNorm}
        \addplot+[mark=none, line width=\plotwidth] table [x index=0, y index=1] {iclr2026/data/full_random.dat};
        \addlegendentry{FT}

    \end{axis}
\end{tikzpicture}

%% file: iclr2026/plots/plot_bs128_32_zeph.tex
\begin{tikzpicture}
    \begin{axis}[
        width=0.8\textwidth,
        height=0.6\textwidth,
        xlabel={Steps},
        ylabel={Log-Perplexity},
        grid=both,
        grid style={dashed, gray!30},
        legend pos=north east,
        font=\LARGE, 
        label style={font=\LARGE}, 
        tick label style={font=\LARGE, rotate=45}, 
        title style={font=\LARGE}, 
        legend style={font=\LARGE} 
    ]
        \addplot+[mark=none, line width=\plotwidth] table [x index=0, y index=1] {iclr2026/data/bs128_32_greats_zeph.dat};
        \addlegendentry{GREATS}
        \addplot+[mark=none, line width=\plotwidth] table [x index=0, y index=1] {iclr2026/data/bs128_32_uniprot_zeph.dat};
        \addlegendentry{$\methodprop$}
        \addplot+[mark=none, line width=\plotwidth] table [x index=0, y index=1] {iclr2026/data/bs128_32_colm_zeph.dat};
        \addlegendentry{CoLM}
        
    \end{axis}
\end{tikzpicture}